\def\eqref#1{equation~\ref{#1}}
\def\1{\bm{1}}
\def\vv{{\bm{v}}}
\DeclareMathAlphabet{\mathsfit}{\encodingdefault}{\sfdefault}{m}{sl}
\SetMathAlphabet{\mathsfit}{bold}{\encodingdefault}{\sfdefault}{bx}{n}
\newcommand{\R}{\mathbb{R}}
\DeclareMathOperator*{\argmin}{arg\,min}
\title{Landing with the Score: \\ Riemannian Optimization through Denoising}
\author{Andrey Kharitenko\\
ETH Zurich,\\
Institute for Machine Learning\\
\texttt{akharitenko@ethz.ch} \\
\And
Zebang Shen \\
ETH Zurich,\\
Institute for Machine Learning\\
\texttt{zebang.shen@inf.ethz.ch} \\
\And
Riccardo De Santi \\
ETH Zurich,\\
ETH AI Center\\
\texttt{rdesanti@ethz.ch} \\
\And
Niao He\\
ETH Zurich,\\
Institute for Machine Learning\\
\texttt{niao.he@inf.ethz.ch} \\
\And
Florian D\"orler  \\
ETH Zurich,\\
Automatic Control Laboratory \\
\texttt{doerfler@control.ee.ethz.ch} 
}
\newcommand\mynewcommand[1]{\let#1\relax\newcommand#1}
\mynewcommand{\c}[1]{\mathcal{#1}}
\mynewcommand{\b}[1]{\mathbb{#1}}
\mynewcommand{\bs}[1]{\boldsymbol{#1}}
\mynewcommand{\bar}[1]{\overline{#1}}
\mynewcommand{\hat}[1]{\widehat{#1}}
\mynewcommand{\tilde}[1]{\widetilde{#1}}
\mynewcommand{\intg}[3]{\int_{#1} #2 \,\text{d}#3}
\mynewcommand{\intb}[4]{\int_{#1}^{#2} #3 \,\text{d}#4}
\mynewcommand{\R}{\operatorname{R}}
\mynewcommand{\N}{\operatorname{N}}
\mynewcommand{\phi}{\varphi}
\DeclarePairedDelimiter{\abs}{\lvert}{\rvert}
\DeclareMathOperator{\dist}{dist}
\mynewcommand{\pr}{\text{pr}\,}
\mynewcommand{\Def}{\text{Def}}
\DeclarePairedDelimiter{\norm}{\lVert}{\rVert}
\DeclareMathOperator{\tr}{tr}
\newtheorem{theorem}{Theorem}
\newtheorem{corollary}[theorem]{Corollary}
\newtheorem{lemma}[theorem]{Lemma}
\newtheorem{remark}[theorem]{Remark}
\newcommand{\mat}[1]{\left(\begin{matrix}#1\end{matrix}\right)}
\newcommand{\smat}[1]{\left(\begin{smallmatrix}#1\end{smallmatrix}\right)}
\newcommand{\T}{\operatorname{T}}
\renewcommand{\P}{\operatorname{P}}
\newcommand{\I}{\operatorname{I}}
\newcommand{\II}{\mathrm{I\!I}}
\mynewcommand{\N}{\operatorname{N}}
\mynewcommand{\d}{\operatorname{d}}
\mynewcommand{\linkfunction}{\text{link function}}
\mynewcommand{\vv}{\texttt{v}}
\mynewcommand{\ss}{\texttt{s}}
\begin{document}

\maketitle

\begin{abstract}
{Under the \emph{data manifold hypothesis}, high-dimensional data concentrate near a low-dimensional manifold. We study Riemannian optimization when this manifold is only given implicitly through the data distribution, and standard geometric operations are unavailable. 
This formulation captures a broad class of data-driven design problems that are central to modern generative AI. Our key idea is a \emph{link function} that ties the data distribution to the geometric quantities needed for optimization: its gradient and Hessian recover the projection onto the manifold and its tangent space in the small-noise regime. This construction is directly connected to the score function in diffusion models, allowing us to leverage well-studied parameterizations, efficient training procedures, and even
pretrained score networks from the diffusion model literature to perform optimization. On top of this foundation, we develop two {efficient} inference-time algorithms for optimization over data manifolds: \emph{Denoising Landing Flow} (DLF) and \emph{Denoising Riemannian Gradient Descent} (DRGD). We provide theoretical guarantees for approximate feasibility (manifold adherence) and optimality (small Riemannian gradient norm). We demonstrate the effectiveness of our approach on finite-horizon reference tracking tasks in data-driven control, illustrating their potential for practical generative and design applications.}
\end{abstract}

\section{Introduction}

{Riemannian optimization} \cite{boumal2023introduction, absil2008optimization, hu2020brief} considers minimizing an objective function $f:\b{R}^d \to \b{R}$ over an \emph{explicitly known} embedded submanifold $\c{M} \subseteq \b{R}^d$,
\begin{align}
	\label{eq:manifoldOptimizationProblem}
	\min_{x \in \c{M}} f(x).
\end{align}
Problem (\ref{eq:manifoldOptimizationProblem}) is ubiquitous in fields of machine learning and control and encompasses problems such as independent component analysis \cite{nishimori1999learning}, low-rank matrix completion \cite{vandereycken2013low}, training of orthogonally normalized neural networks \cite{bansal2018can}, the control of rigid bodies \cite{duong2024port}, as well as sensor network localization \cite{patwari2004manifold} and many others.
Compared to general constrained optimization, Riemannian optimization promises the advantage of exploiting the natural geometry of the problem, producing feasible iterates and increased numerical robustness \cite{boumal2023introduction}, thus allowing for early termination and making it more suitable for real-time implementation.

{
In contrast to the above classical setup, in this work we focus on the setting where the manifold $\c{M}$ is given \emph{implicitly} through a \emph{finite set of samples} from an underlying \emph{population data distribution} $\mu_{\mathrm{data}}$ supported on $\c{M}$.
This perspective is especially relevant in view of the data \emph{manifold hypothesis} \citep{loaizadeep2024}, which posits that many real-world data sets lie (approximately) on a manifold with dimension much smaller than that of the ambient space \cite{fefferman2016testing}.
Importantly, such data manifolds are not just low-dimensional geometric structures -- they also capture rich \emph{semantic meaning}. 
For instance, the image manifold corresponds to photo-realistic images \cite{popeintrinsic}, the system behavior manifold to dynamically feasible input-output trajectories \cite{willems1997introduction}, while the manifold of airfoils represents aerodynamically viable shapes \cite{zheng2025manifold}.
The optimization problem (\ref{eq:manifoldOptimizationProblem}) in this implicit setting thus encompasses a broad class of modern tasks, including airfoil- and ship hull design \citep{chen2025adjoint, bagazinski2023shipgen}, additive manufacturing \cite{peng2023machine}, reinforcement learning \citep{lee2025local}, and Bayesian inverse problems \citep{chung2022improving}.
In this data-driven regime, methods from classical {smooth} Riemannian optimization \emph{cannot} be applied directly, since they rely on explicit manifold operations such as tangent-space projection, retraction, or exponential maps \citep{boumal2019global,boumal2023introduction}. 
Graph-based Riemannian optimization has recently been proposed in \citep{wangfast}, but the setup is quite different, as to be elaborated in \Cref{section_related_work}.
Furthermore, while there has been extensive research on {manifold learning} \cite{meilua2024manifold, lin2008riemannian, cayton2005algorithms, belkin2005towards} in the past, none of these works addressed (\ref{eq:manifoldOptimizationProblem}) from an {optimization point of view} and focused instead on learning the manifold geometry, rather than representation models that are suitable to be included as a constraint in an optimization problem.}  


Instead, in this work, we propose a novel data-driven approach to recover the fundamental operations needed for optimization on manifolds:
Starting {first} from the {population} data distribution $\mu_{\mathrm{data}}$, we smooth it with a Gaussian kernel to obtain
\begin{equation}
	p_\sigma = \c{N}(0,\sigma^2 I) \ast \mu_{\mathrm{data}},
\end{equation}
and define the associated \emph{\linkfunction}
\begin{equation}
	\label{eq:linkFunction}
	\ell_\sigma(x) = \frac{1}{2}\norm{x}^2 + \sigma^2 \log p_\sigma(x)\,, \quad x \in \b{R}^d\,.
\end{equation}
We show that, as the smoothing parameter $\sigma$ decreases, the gradient $\nabla \ell_\sigma$ recovers the projection back to the manifold, while the Hessian $\nabla^2 \ell_\sigma$ recovers the projection onto its tangent space.
These results reveal that core ingredients of Riemannian optimization -- such as retraction and gradient computation -- can be implemented directly from the derivative information of $\ell_\sigma$, which itself can be constructed from $\mu_{\operatorname{data}}$.\\
\\
Given the above novel theoretical findings, a key practical challenge is how to access (even approximately) the gradient {$\nabla \ell_\sigma$} and Hessian {$\nabla^2 \ell_\sigma$}, when, {in real-world applications}, only samples from $\mu_{\mathrm{data}}$ are available.
A crucial observation is that $p_\sigma$ coincides with the marginal distribution of the Variance-Exploding SDE (VE-SDE)\footnote{Analogous extensions to VP-SDE and DDPM can also be similarly derived.}.
{
Note that the gradient of $\log p_\sigma$, i.e. the \emph{score function}, is central to diffusion models. It is parameterized by a neural network and learned directly from samples of $\mu_{\mathrm{data}}$, typically via \emph{denoising score matching} \citep{vincent2011connection}.
}
We can hence exploit this well-developed toolkit for a practical implementation of our idea: 
Let $\ss(x, \sigma)$ be a pretrained score network.
We use $\vv(x) = x + \sigma^2 \ss(x, \sigma)$
to approximately represent $\nabla \ell_\sigma$, while the Hessian {$\nabla^2 \ell_\sigma$} can be recovered by computing its Jacobian.
This approach offers two key advantages: (i) strong inductive biases can be incorporated through the neural network parameterization, and (ii) efficient, well-established training techniques from the diffusion model literature can be directly leveraged.
Taken together, the theoretical link between {$\nabla \ell_\sigma$} and manifold geometry, and the practical machinery for learning the score, form the foundation of the paradigm shift: from classical {smooth} Riemannian optimization with explicit manifold knowledge to a data-driven framework where geometry is recovered from samples—thus enabling principled manifold optimization in generative and design-driven applications.

{Building on these insights, 
{we propose \emph{the first score-based framework for optimization over data manifolds}. We derive two algorithms: }
denoising landing flow (DLF) and denoising Riemannian gradient descent (DRGD). 
Both rely on an approximate score network $\ss(x, \sigma) \approx \nabla \log p_\sigma$ learned from samples of $\mu_{\mathrm{data}}$ using standard diffusion-model training. 
DRGD uses the learned gradient and its Jacobian (for a small, fixed $\sigma$) to mimic Riemannian gradient descent with retraction, while DLF performs gradient flow on a penalized objective, relaxing feasibility at intermediate iterates. 
We establish non-asymptotic convergence to approximate stationary points as $\sigma \to 0$. Importantly, our methods require only inference of the neural network and gradients with respect to its inputs—not with respect to the network parameters. Thus, if a pretrained score network is already available for a given task, no additional training is required to enable Riemannian optimization on the corresponding data manifold.
Viewed from this perspective, our approach can also be interpreted as an \emph{inference-time algorithm}, aligning with a growing trend in modern machine learning research.


\subsection{Our contributions} 

We next summarize our main contributions and outline the structure of the paper.
\begin{itemize}[leftmargin=*]
  \item \textbf{Link function and data-driven manifold operations.}
    Building on the smoothed data distribution $p_\sigma$ and the associated link function $\ell_\sigma$ introduced above, we show in Section~\ref{sec:scoreRetractionProjection} that its gradient and Hessian recover, in the small-$\sigma$ regime, the projection onto the manifold and its tangent space. 
    Further, with the help of score learning in diffusion model training, we bridge the gap between the requirements of classical Riemannian optimization (which assumes explicit manifold knowledge) and the emerging need to optimize over data manifolds that are only implicitly available.
  \item \textbf{First score-based algorithms for optimization over data manifolds.}
  We propose two algorithms -- denoising landing flow (DLF) in Section \ref{sec:landingGradientFlow} and denoising Riemannian gradient descent (DRGD) in Section \ref{sec:gradientDescent} -- to the best of our knowledge, \emph{the first in the literature} that exploit operations enabled by a pretrained score function. Our methods require only inexpensive inference-time queries and back-propagation with respect to the input of the neural network, making them computationally efficient and readily applicable when pretrained scores are available.
  
  \item \textbf{Non-asymptotic guaranties.}
  {We establish non-asymptotic convergence guarantees for both algorithms, showing approximate feasibility (outputs close to the data manifold) and approximate optimality (small Riemannian gradient norm) as $\sigma \to 0$ (Theorems~\ref{thm:flowSigmaPositive} and~\ref{thm:riemannianGradientDescent}). A key technical ingredient is a uniform control of how $\nabla \ell_\sigma$ and $\nabla^2 \ell_\sigma$ approximate the ideal manifold operations at finite $\sigma$ (Theorem~\ref{thm:uniformScoreProjectionJacobian}).}
\end{itemize}
Finally, in Section~\ref{sec:numericalExamples} we validate our approach on classical Riemannian optimization benchmarks and a data-driven optimal control problem for reference trajectory tracking. We demonstrate that our methods can generate feasible points on the manifold with objective values significantly lower than those observed in the training data (Figure~\ref{fig:trackingTrajectoriesIntro}), illustrating how strong inductive biases of modern deep networks can be harnessed for constrained optimization.

\begin{figure}
	\centering
	\includegraphics[width=0.5\linewidth]{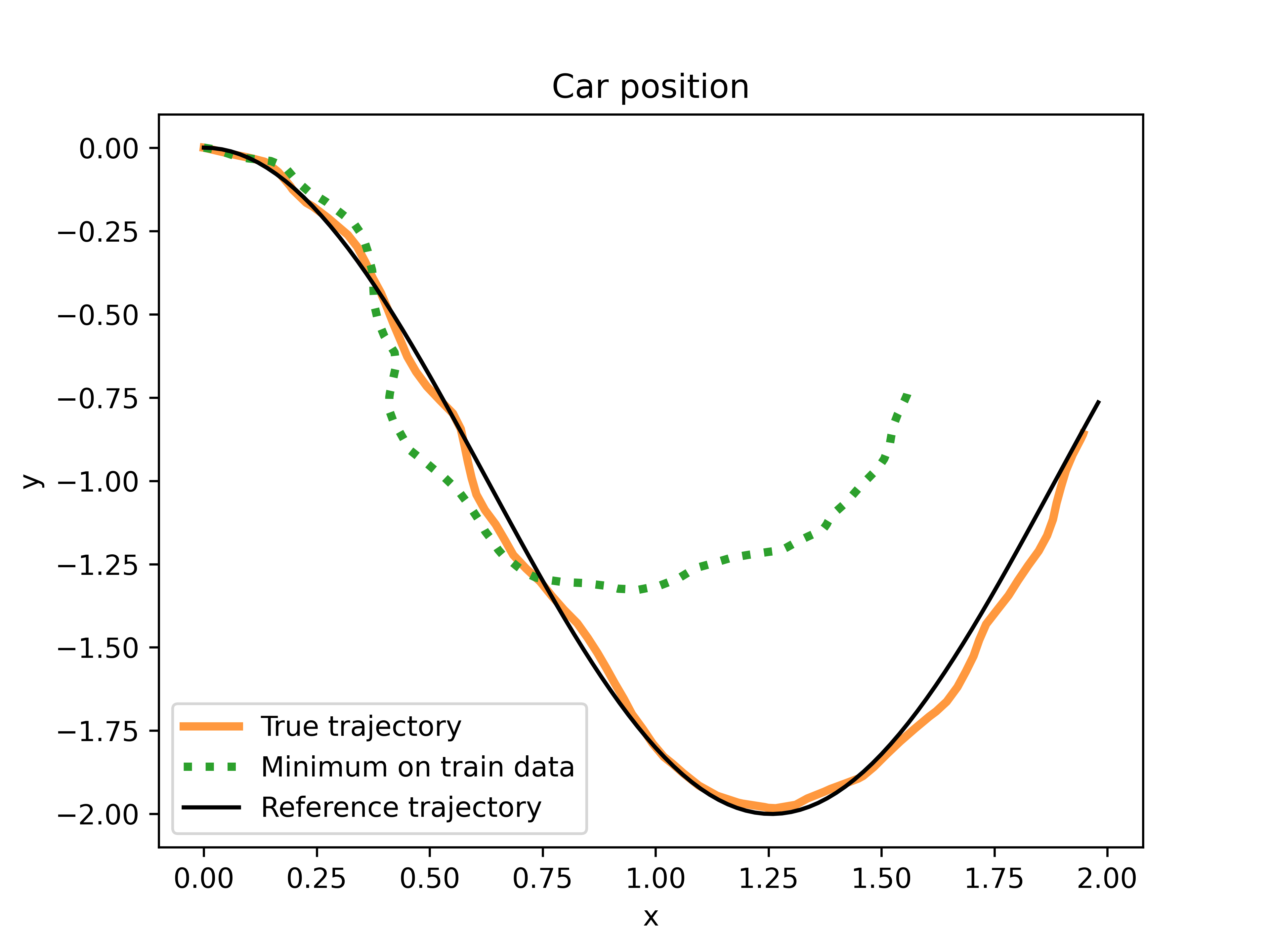}
	\caption{Optimized trajectory (orange) on the system trajectory manifold for the unicycle car model that is desired to track the reference trajectory (black, thin). The closest tracking trajectory in the set of available manifold samples is given in (green, dotted). See Section \ref{sec:numericalExamples} for more details.}
	\label{fig:trackingTrajectoriesIntro}
\end{figure}


\subsection{Related Work} \label{section_related_work}
In this section we briefly give an overview over the related work. 
More connections of our approach to previous work can be found in Appendix \ref{sec:connectionsPriorWork}.

\paragraph{Smooth Riemannian Optimization.}
Riemannian optimization (RO) was originally developed under the assumption that the constraint manifold is explicitly known, either through closed-form descriptions such as matrix manifolds or via nonlinear equality constraints \citep{sato2021riemannian,boumal2023introduction}.
A prototypical algorithm in this setting is \emph{Riemannian gradient descent}, which updates by taking a gradient step along the tangent space and then retracting back to the manifold \citep{boumal2019global}. 
Such algorithms guarantee that the iterates remain on the manifold at every iteration and are thus referred to as \emph{feasible methods}.
In contrast, \emph{infeasible methods}, where the iterates are not constrained to stay on $\c{M}$, have also been studied -- for example, augmented Lagrangian approaches \citep{xie2021complexity}. 
However, these typically involve solving complicated optimization subproblems in each inner loop, making them computationally expensive in practice.
More recently, a new class of \emph{landing-type algorithms} has emerged: instead of enforcing feasibility at every step, they regularize the objective function with the distance to the manifold and then perform gradient flow or descent on this regularized objective. 
Such methods have demonstrated strong empirical performance, offering a promising alternative to classical approaches \citep{ablin2022fast,schechtman2023orthogonal}.

\paragraph{{Optimization over graph}} Recently, \citet{wangfast} propose a graph-based strategy for manifold denoising, i.e. given observations sampled from the data manifold, for a new query point, find the closest point on the data manifold.
The mixed-order optimization algorithm in this work consists of two types of updates: (1) By extending the concept of tangent space in classical smooth manifolds to the graph setting, first-order steps can be taken over the tangent bundle graph; (2) By enumerating the points connected by the ``zero-order edge'', zero-th order steps can be taken to avoid local minima and achieve a strong result of convergence to global minima.
We highlight that their setup is quite different from ours: (i) Their method is purely non-parametric and hence cannot exploit the recent development of deep learning; (ii) The proposed mix-order algorithm is more like a discrete search method. As an example, their strong global convergence result is built on the existence of zero-order edges, which ``teleport'' the iterate out of the local minima automatically. This is \emph{not} the typical behavior of continuous optimization method, either deterministic or stochastic; (iii) Their method is purely designed for the denoising task while our focus is for minimizing general smooth objectives over data manifold. We highlight that due to the requirement of their analysis, for different objectives, the zero-order edges need to be reconstructed to avoid local minima.

\paragraph{Manifold Learning.}
High-dimensional data in modern machine learning often exhibit an intrinsic lower-dimensional structure.
Such structure is of central importance: it enables more efficient representation and compression of data, facilitates interpretability by revealing meaningful semantic organization, and provides a foundation for designing algorithms that exploit geometry rather than ambient dimensionality.
The task of uncovering this structure is commonly referred to as \emph{manifold estimation} or \emph{manifold learning}, with a large body of work devoted to this goal. 
\begin{itemize}[leftmargin=*]
    \item \textbf{{Non-parametric}} approaches include Isomap \citep{tenenbaum2000global}, Laplacian Eigenmaps \citep{belkin2003laplacian}, and Locally Linear Embedding (LLE) \citep{roweis2000nonlinear}, Diffusion Map \citep{coifman2006diffusion}, among many subsequent developments, e.g. \citep{zhou2020learning, yao2023manifold}.
    \item \textbf{Parametric manifold learning} techniques such as generative adversarial networks (GAN) and various types of autoencoders (AE) are used to estimate data manifolds $\c{M}$ by learning a map $\psi:\c{Z} \to \c{M}$ that parametrizes $\c{M}$ through some lower-dimensional ``latent space'' $\c{Z}$ and which can be interpreted as a learned coordinate chart on the former \cite{goodfellow2014generative, berahmand2024autoencoders}.
In the case of GAN this map corresponds to the generator, while for AE it corresponds to the decoder.
For both GAN and AE, it has been noted that $\psi$ is not always a local chart in the differential-geometric sense \cite{lee2022regularized} and an additional line of work considered enforcing the latter property for the decoder through regularization or architectural constraints \cite{sorrenson2023lifting, kumar2020regularized}.
Manifold learning flows (M-flows) \cite{brehmer2020flows} address this problem by parametrizing $\psi$ in such a way that $\psi$ is injective and $\psi\circ \psi^{-1}$ can be efficiently computed and have been used for manifold learning and
density estimation.
Having obtained a chart $\psi$, one can locally trivialize the problem \eqref{eq:manifoldOptimizationProblem} to $\min_{\c{Z}} f \circ \psi$, variants of which have also been known under the name of latent space optimization \cite{tripp2020sample, notin2021improving}. 
We provide a detailed comparison of this paragraph to our work in Appendix \ref{sec:connectionsPriorWork}.

\end{itemize}
\paragraph{Comparison with Pull-back Gradient Flow.} 
When the map \(\psi\) is learned, it induces a pull-back metric on the latent space \(\mathcal{Z}\). Smooth Riemannian optimization algorithms on \(\mathcal{M}\) can then be equivalently formulated in \(\mathcal{Z}\); for instance, Riemannian gradient flow on \(\mathcal{M}\) corresponds to a pull-back gradient flow on \(\mathcal{Z}\). While we are not aware of prior work that explicitly exploits this idea to address optimization over data manifolds, we provide a comparison with this approach in \Cref{section_pull_back_gradient_flow} and highlight several nontrivial challenges that should not be overlooked.



\paragraph{Diffusion models.} Diffusion models have achieved remarkable success in generative modeling, where the goal is to generate new samples consistent with an underlying data distribution $\mu_{\mathrm{data}}$.
A central ingredient of these methods is the learning of the \emph{score function} -- the gradient of the log-density of the diffused distribution $p_\sigma$ \cite{tang2025score, song2020score}.
In practice, the score is parameterized by a neural network whose architectural design has been extensively studied, and its training is carried out using well-established techniques such as denoising score matching \citep{song2020score}.
This combination of principled theory and mature practice has made diffusion models one of the most effective tools for data-driven generative modeling.

When the data distribution resides on a manifold, a recent observation in the diffusion model community is that the score is asymptotically orthogonal to the manifold surface \cite{stanczuk2024diffusion}. This observation has been exploited for the estimation of the manifold dimension. See \citep{kamkari2024geometric} for the same task.
Furthermore, \cite{ventura2024manifolds} has shown that in the case of linear manifolds (i.e. affine subspaces), the Jacobian of the score scaled by the diffusion temperature asymptotically approximates the projection of the manifold onto the normal space and used it to study the geometric phases of
diffusion models.

Recent work on the statistical complexity of diffusion models under the manifold hypothesis provides indirect evidence that these models capture geometric information about the underlying data manifold \citep{oko2023diffusion,tang2024adaptivity}.
In particular, the sample complexity required to learn the data distribution $\mu_{\mathrm{data}}$ depends only on the intrinsic dimension of the manifold, rather than on the ambient dimension.


\paragraph{Two Formulations: Optimization and Posterior Sampling.}
To avoid possible confusion, we stress the distinction between our optimization formulation and the posterior sampling literature.
In short, our optimization formulation in \cref{eq:manifoldOptimizationProblem} enforces the manifold constraint directly. This ensures (approximate) feasibility at the final step and guarantees that the optimization process remains semantically meaningful, which is not guaranteed by the sampling formulation, as discussed below.

Across the literatures Classifier(-Free) Guidance in Diffusion Models \citep{dhariwal2021diffusion,ho2022classifier} and the Plug-and-Play Framework in Bayesian Inverse Problems \citep{venkatakrishnan2013plug,laumont2022bayesian,pesme2025map,graikos2022diffusion,chung2022diffusion},
a unifying perspective is the task of sampling from a posterior distribution of the form
\begin{equation*}
	p_{\mathrm{post}} \propto p_{\mathrm{pre}} \exp\left(-\tfrac{r}{\alpha}\right),
\end{equation*}
where $r$ denotes a cost function to be minimized, corresponding to our objective $f$, and $\alpha>0$ is a temperature-like parameter \citep{domingo2024adjoint}.
In Classifier(-Free) Guidance, $r$ encodes a classifier signal (e.g., specified by a prompt) and in Bayesian inverse problems, $r$ is the negative log-likelihood of observations.
Meanwhile, $p_{\mathrm{pre}}$ serves as a prior distribution, typically derived from a large-scale pretrained generative model, which is expected to capture the semantic structure of the data manifold. 
Sampling from $p_{\mathrm{post}}$ thus aims to balance semantic plausibility with low cost.

If $p_{\mathrm{pre}}$ were \emph{exactly} supported on the data manifold $\c{M}$ and $\alpha \to 0$, this formulation would reduce to the constrained optimization problem \cref{eq:manifoldOptimizationProblem}.
In practice, however, the situation is very different: the distribution induced by a pretrained diffusion model is not concentrated on a low-dimensional manifold but rather has support of non-zero Lebesgue measure in the ambient space—indeed, in many cases, essentially the full space (due to the noisy generation process of $p_{\mathrm{pre}}$).
As a result, when $\alpha$ is set too small, the posterior $p_{\mathrm{post}}$ becomes dominated by the exponential tilt $\exp(-r/\alpha)$, pushing samples into regions far from the true data manifold $\c{M}$ and thereby \emph{losing} semantic meaning.
Consequently, these sampling-based frameworks must carefully tune $\alpha$ to trade off semantic fidelity (staying close to $\c{M}$) against optimization quality (achieving low $r$).


\section{Preliminaries}

Here we introduce some preliminary notation and concepts to state our results. 
We refer the reader to the Appendices \ref{sec:manifoldConcepts} and \ref{sec:diffusionModels} for more details.

\subsection{Manifolds and distance functions}
\label{sec:manifoldsDistanceFunctions}

Let $\c{M} \subseteq \b{R}^d$ be a $k$-dimensional embedded compact $C^2$-submanifold without boundary.
For any point $p \in \c{M}$ we denote by $\T_p \c{M} \subseteq \b{R}^d$ and $\N_p \c{M} \subseteq \b{R}^d$ the tangent and normal spaces of $\c{M}$ at $p$, respectively, and their orthogonal projections by $\P_{\T_p \c{M}}$ and $\P_{\N_p \c{M}}$.
For a $C^2$-submanifold, then there exists a radius $\tau_{\c{M}} > 0$ such that every point in $x \in \c{T}(\tau_{\c{M}})$, where 
\begin{align*}
    \c{T}(\tau) := \{x \in \b{R}^d \mid \operatorname{dist}(x,\c{M}) < \tau\}\,, \quad
    \operatorname{dist}(x,\c{M}) = \inf_{p \in \c{M}} \norm{x-p}\,,
\end{align*}
has a unique projection $\pi(x) \in \c{M}$. 
Sets of the form $\c{T}(\tau)$ with $\tau \in (0,\tau_{\c{M}})$ are called tubular neighborhoods of $\c{M}$.
The squared distance function is defined by $\d(x) = \frac{1}{2} \operatorname{dist}(x,\c{M})^2$ and $\d$ and $\pi$ are both differentiable on $\c{T}(\tau_{\c{M}})$ with $\frac{1}{2}\norm{x-\pi(x)}^2 = \d(x)$ and $x-\pi(x) \in \N_{\pi(x)}\c{M}$ and $\d'(x) = x - \pi(x)$ and $\pi'(x)$ given explicitly in Appendix \ref{sec:manifoldConcepts}.
Finally, for a function $f:\c{M} \to \b{R}$ we denote the Riemannian gradient by $\operatorname{grad}_{\c{M}} f(p)$, which in the case of $f:\b{R}^d \to \b{R}$ is given by $\operatorname{grad}_{\c{M}} f(p) = \P_{\T_p \c{M}} \nabla f(p)$.

\subsection{The Stein score function and score-based diffusion models}
\label{sec:scoreFunctionOnManifolds}

For a Borel probability measure we denote its Gaussian blurring by $p_\sigma = \c{N}(0,\sigma^2 I) \ast \mu$ for $\sigma > 0$ with $p_0 = \mu$.
The score function $\nabla \log p_\sigma$ of $p_\sigma$ and its Jacobian allow for the following interpretation (see \cite{jaffer1972relations}, also Appendix \ref{sec:steinScore}):
\begin{align}
	\label{eq:approximateScoreProjection}
	x + \sigma^2 \nabla \log p_\sigma(x) = \nabla \ell_\sigma(x) =\b{E}\nu_{x,\sigma}\,, \text{\ \ }
	I + \sigma^2 \nabla^2 \log p_\sigma(x) = \nabla^2 \ell_\sigma(x) = \frac{1}{\sigma^2} \operatorname{Cov}(\nu_{x,\sigma})\,,
\end{align}
where $\ell_\sigma$ is the link function (\ref{eq:linkFunction}) and $\nu_{x,\sigma}$ is the posterior distribution observing $x$ under the noise model $p_\sigma$ and prior $\mu$.
The representation for $\b{E}\nu_{x,\sigma}$ has also been known under the name of Tweedie's formula \cite{robbins1992empirical, efron2011tweedie}.
The score function has gained recent attention due to its use in score-based diffusion models in the field of generative modelling. 
Specifically in the so-called \emph{variance exploding} (VE) diffusion scheme one seeks to learn $\nabla \log p_\sigma$ for different noise scales $\sigma$ via a neural network $\ss(\cdot,\sigma)$ by minimizing the conditional score matching loss $L_{\operatorname{CSM}}(\ss(\cdot,\sigma))$, which attains its unique minimum in $\ss(\cdot,\sigma) = \nabla \log p_\sigma$.  
Then $\ss(\cdot,\sigma)$ is used for sampling from $\mu$ by following a particular reverse-time SDE or ODE flow in the noise scale $\sigma$ (see Appendix \ref{sec:diffusionModels} for more details).

\section{Score as retraction and tangent space projection}
\label{sec:scoreRetractionProjection}

In this section we give another interpretation to the score function $\nabla \log p_\sigma(x)$ and its Jacobian $\nabla^2 \log p_\sigma(x)$.
Namely, it has been already observed in \cite{stanczuk2024diffusion} that the score function is for $\sigma \to 0$ asymptotically orthogonal to the tangent space of the data manifold.
Our first contribution is showing that when the support of the distribution $\mu$ is a manifold $\c{M}$ and $\mu$ is absolutely continuous w.r.t. its volume measure, then both quantities (\ref{eq:approximateScoreProjection}) approximate the projection operator $\pi(x)$ and its Jacobian $\pi'(x)$ \emph{uniformly} on tubular neighborhoods of $\c{M}$.
Formally we establish the following

\begin{theorem}[Main]
	\label{thm:uniformScoreProjectionJacobian}
	Let $\c{M} \subseteq \b{R}^d$ be a compact, embedded $C^3$-submanifold and $\mu \in \c{P}(\b{R}^d)$ a Borel probability measure with $\operatorname{supp} \mu = \c{M}$ and $\mu \ll \operatorname{Vol}_{\c{M}}$ such that $\frac{\operatorname{d} \mu}{\operatorname{d} \operatorname{Vol}_{\c{M}}} \in C^3(\c{M})$.
	Then for any $\tau \in (0,\tau_{\c{M}})$ there exist some constants $K = K(\tau, \c{M}, \mu) > 0$ and $\bar{\sigma} = \bar{\sigma}(\tau, \c{M}, \mu) > 0$ depending on $\tau$, $\c{M}$ and $\mu$ such that 
	\begin{align}
		\label{eq:projectionCovarianceUniformEstimate}
		\norm{\b{E} \nu_{x,\sigma} - \pi(x)} \leq K \sigma \abs{\log(\sigma)}^3 \text{\ \ and\ \ }
		\norm*{\frac{1}{\sigma^2}\operatorname{Cov}(\nu_{x,\sigma}) - \pi'(x)} \leq K \sigma \abs{\log(\sigma)}^3
	\end{align}
	for all $\sigma \in (0,\bar{\sigma})$ and $x \in \c{T}(\tau)$.
\end{theorem}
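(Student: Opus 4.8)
The plan is to analyze the posterior $\nu_{x,\sigma}$ via a Laplace-type expansion, since for small $\sigma$ it concentrates near the foot point $\pi(x)$ on $\c{M}$. Write the posterior density (against $\operatorname{Vol}_\c{M}$) at a point $p \in \c{M}$ as proportional to $\rho(p)\exp\!\bigl(-\tfrac{1}{2\sigma^2}\norm{x-p}^2\bigr)$, where $\rho = \tfrac{\d\mu}{\d\operatorname{Vol}_\c{M}}$. For $x \in \c{T}(\tau)$ the function $p \mapsto \norm{x-p}^2$ has (by the reach condition) a unique nondegenerate minimizer at $p^\ast = \pi(x)$ over $\c{M}$, with $\norm{x-p^\ast}^2 = 2\d(x)$. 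Introducing normal coordinates $u \in \R^k$ on $\c{M}$ centered at $p^\ast$, one expands the exponent as $\d(x) + \tfrac12 u^\top A(x) u + O(\norm{u}^3)$, where $A(x)$ is a positive-definite $k\times k$ matrix (related to $\mathrm{I} - \d(x)$-times-shape-operator in tangent directions), and $\rho$ is expanded to third order as well using the $C^3$ hypothesis. Then a standard Laplace/Watson-lemma argument for integrals over a compact manifold yields asymptotic expansions for the normalizing constant, for $\b{E}\nu_{x,\sigma}$, and for $\operatorname{Cov}(\nu_{x,\sigma})$ in powers of $\sigma$, with leading terms $\pi(x)$ and $\sigma^2 \pi'(x)$ respectively.

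The key steps, in order: (1) Set up the localization — show that the posterior mass outside a geodesic ball $B_{r(\sigma)}(p^\ast)\subseteq\c{M}$ of radius $r(\sigma) \asymp \sigma\abs{\log\sigma}$ is exponentially small (using compactness and uniqueness of the minimizer, uniformly in $x\in\c{T}(\tau)$ since $\tau<\tau_\c{M}$ keeps the minimizer a definite distance from being degenerate). This step produces the $\abs{\log\sigma}^3$ factors: the Taylor remainders contribute powers of $r(\sigma)$, and truncating at radius $\asymp \sigma\abs{\log\sigma}$ makes the tail error $o(\sigma^N)$ for any $N$. (2) On the ball, change variables $u = \sigma v$, expand the exponent and $\rho$ to the needed order, and apply Gaussian moment identities; the odd-order terms vanish by symmetry and the surviving correction terms are $O(\sigma)$, times polynomial-in-$v$ factors that after Gaussian integration give the $\abs{\log\sigma}^3$. (3) Identify the leading-order Hessian $A(x)^{-1}$ appearing in the covariance with the tangential block of $\pi'(x)$, and check the normal-direction contributions vanish to the required order. (4) Combine with the formula $\pi'(x)$ from Appendix \ref{sec:manifoldConcepts} and assemble (\ref{eq:projectionCovarianceUniformEstimate}), (5) Finally, upgrade pointwise-in-$x$ estimates to uniform ones on $\c{T}(\tau)$: all the constants in the Laplace expansion depend only on bounds for $\rho$ and its derivatives on $\c{M}$, the reach $\tau_\c{M}$, the curvature $\kappa_\c{M}$, and $\operatorname{dist}(\tau,\tau_\c{M})$, all of which are uniform by compactness of $\c{M}$ and $\bar{\c{T}}(\tau)$.

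The main obstacle I expect is step (5) combined with the sharpness of the $\sigma\abs{\log\sigma}^3$ rate: getting a genuinely \emph{uniform} bound requires tracking how the constants in the Laplace remainder depend on $x$ through $\d(x)$ and through the foot point $p^\ast=\pi(x)$, and showing the third-order Taylor remainders of both the squared-distance exponent and the density $\rho$ are controlled uniformly (this is where $C^3$ regularity of $\c{M}$ and of $\rho$, rather than merely $C^2$, is genuinely used — it is what lets the error terms be $O(\sigma\abs{\log\sigma}^3)$ rather than, say, $O(\sigma^{1/2})$). A secondary technical point is handling the curvature of $\c{M}$ in the change of variables: the volume element in normal coordinates is $1 + O(\norm{u}^2)$ and the embedding $p(u) = p^\ast + u + O(\norm{u}^2)$ contributes second-order normal-direction corrections to $\b{E}\nu_{x,\sigma}$; one must verify these are absorbed into the stated error and do not corrupt the leading term $\pi(x)$. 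I would isolate these curvature computations in a lemma so the main Laplace argument stays clean.
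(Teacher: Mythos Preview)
Your proposal is correct and matches the paper's approach closely: both proceed by a non-asymptotic Laplace expansion in a chart centered at $p^\ast=\pi(x)$, localize to a ball of radius $\sigma|\log\sigma|$, and track the cubic Taylor remainders to produce the $\sigma|\log\sigma|^3$ rate, with uniformity coming from bounding all constants in terms of $\tau_{\c{M}}$, $\kappa_{\c{M}}$, and the $C^3$ norms of the chart and density.

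Two technical refinements the paper makes are worth noting. First, it works in the Monge (graph) chart $\psi_p=\operatorname{pr}_p^{-1}$ rather than normal coordinates; this yields explicit control of $\psi_p$, $\psi_p'$, $\psi_p''$ at $0$ via Lemma~\ref{lem:divolManifold} and makes the Hessian $\Sigma=\nabla^2 h(0)$ directly identifiable with the operator $H_x$ appearing in $\pi'(x)$. Second, for the covariance estimate the paper avoids computing second moments and then subtracting $(\b{E}\nu)(\b{E}\nu)^\top$; instead it writes $p_\sigma(x)^2\operatorname{Cov}(\nu_{x,\sigma})$ as a single Laplace integral over $\c{V}\times\c{V}$ with integrand $\bar f_2(z,\tilde z)$ satisfying $\bar f_2(0,0)=0$ and $\nabla\bar f_2(0,0)=(q,-q)$, so that the dangerous $O(\sigma^{-1})$ term cancels by the symmetry of the exponent $\bar h(z,\tilde z)=h(z)+h(\tilde z)$. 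Your direct approach would achieve the same cancellation but require more bookkeeping; the product-space formulation is cleaner.
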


The proof is deferred to Appendix \ref{sec:proofMainTheorem} and is based on a careful non-asymptotic estimate of the Laplace integral method.
As a consequence and together with fact that $\pi'(x)$ coincides with $\P_{\T_x \c{M}}$ for $x \in \c{M}$ (see Appendix \ref{sec:manifoldConcepts}) we obtain the following result.

\begin{corollary}
	Let $\c{M}$ and $\mu$ be as in Theorem \ref{thm:uniformScoreProjectionJacobian} and suppose that $x \in \c{M}$. 
	Then
	\begin{align*}
		\lim_{\sigma \to 0} I + \sigma^2 \nabla^2 \log p_\sigma(x) = P_{\operatorname{T}_x \c{M}}\,.
	\end{align*}
\end{corollary}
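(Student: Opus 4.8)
The plan is to obtain the claim as an immediate specialization of Theorem~\ref{thm:uniformScoreProjectionJacobian} at a base point of $\c{M}$, using the identities recorded in (\ref{eq:approximateScoreProjection}) together with the formula for the differential of the nearest-point projection. First I would fix any $\tau \in (0,\tau_{\c{M}})$. Since $\tau>0$ we have $x \in B_{\tau}(x) \subseteq \c{T}(\tau)$, so the point $x \in \c{M}$ lies in a tubular neighbourhood on which Theorem~\ref{thm:uniformScoreProjectionJacobian} applies, producing constants $K = K(\tau,\c{M},\mu)>0$ and $\bar\sigma = \bar\sigma(\tau,\c{M},\mu)>0$ with
\[
	\norm*{\tfrac{1}{\sigma^2}\operatorname{Cov}(\nu_{x,\sigma}) - \pi'(x)} \;\leq\; K\,\sigma\,\abs{\log(\sigma)}^3 \qquad \text{for all } \sigma \in (0,\bar\sigma).
\]
As $\sigma \to 0^{+}$ the right-hand side vanishes, hence $\tfrac{1}{\sigma^2}\operatorname{Cov}(\nu_{x,\sigma}) \to \pi'(x)$.

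Next I would invoke the second identity in (\ref{eq:approximateScoreProjection}), namely $I + \sigma^2\nabla^2\log p_\sigma(x) = \nabla^2\ell_\sigma(x) = \tfrac{1}{\sigma^2}\operatorname{Cov}(\nu_{x,\sigma})$, which is valid for all $x$ (the Gaussian convolution $p_\sigma$ being smooth and strictly positive for $\sigma>0$), in particular at our point $x \in \c{M}$. Combining this with the limit above gives
\[
	\lim_{\sigma \to 0} \; I + \sigma^2\nabla^2\log p_\sigma(x) \;=\; \pi'(x).
\]

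Finally I would use the geometric fact — stated in Appendix~\ref{sec:manifoldConcepts} and already quoted in the paragraph preceding the corollary — that for a base point $x \in \c{M}$ the Jacobian of the projection reduces to the orthogonal tangent projector, $\pi'(x) = \P_{\T_x\c{M}}$. This is read off from the explicit formula for $\pi'$ on $\c{T}(\tau_{\c{M}})$: at $x \in \c{M}$ one has $\pi(x) = x$, the normal-defect contribution (built from the shape operator / second fundamental form) degenerates, and what remains is the identity on $\T_x\c{M}$ and zero on $\N_x\c{M}$. Substituting into the previous display yields $\lim_{\sigma\to 0} I + \sigma^2\nabla^2\log p_\sigma(x) = \P_{\T_x\c{M}}$, as claimed.

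I expect no genuine obstacle here: the statement is a direct consequence of the main theorem, and the only ingredient not literally contained in Theorem~\ref{thm:uniformScoreProjectionJacobian} is the identification $\pi'(x) = \P_{\T_x\c{M}}$ on $\c{M}$, which is supplied by the manifold preliminaries. The only point worth a moment's care is that the corollary is pointwise while the theorem is uniform over $\c{T}(\tau)$ — but this is trivially covered, since the uniform bound in particular holds at the single point $x$, so passing to the limit is legitimate.
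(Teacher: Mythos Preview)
Your proposal is correct and follows exactly the paper's approach: the corollary is stated as an immediate consequence of Theorem~\ref{thm:uniformScoreProjectionJacobian} combined with the identification $\pi'(x) = \P_{\T_x\c{M}}$ for $x \in \c{M}$ from Appendix~\ref{sec:manifoldConcepts}, via the representation (\ref{eq:approximateScoreProjection}). Your added care in verifying $x \in \c{T}(\tau)$ and in spelling out why $\pi'(x)$ reduces to the tangent projector at a base point is more explicit than the paper's one-line justification, but the argument is the same.
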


In view of Theorem \ref{thm:uniformScoreProjectionJacobian} let us abbreviate (\ref{eq:approximateScoreProjection}) into
\begin{align}
	\label{eq:sigmaDistanceProjection}
	\d_\sigma(x) = -\sigma^2 \log p_\sigma(x)\,, \quad
	\pi_\sigma(x) = x + \sigma^2 \nabla \log p_\sigma(x)\,, \quad
	P_\sigma(x) = I + \sigma^2 \nabla^2 \log p_\sigma(x)\,,
\end{align}
with the limiting cases $\d_0 = \d$, $\pi_0 = \pi$ and $P_0(x) = \pi'(x)$.
We stress that the expressions in (\ref{eq:sigmaDistanceProjection}) are defined for all $x \in \b{R}^d$, whereas $\d$, $\pi$ and $\pi'$ in are only sensible in a tubular neighborhood $\c{T}$ of $\c{M}$. \\
Thus, in theory, a well-trained diffusion model score $\texttt{s}(\cdot,\sigma)$ and its Jacobian $\texttt{s}'(\cdot,\sigma)$ allow us to approximate the closest-point projection $\pi(x)$ and the tangent space projection $\P_{\T_x \c{M}}$ as $\sigma \to 0$ arbitrarily well via the operator $\vv(x) = x + \sigma^2 \ss(x,\sigma)$ and its Jacobian, respectively.

\section{Denoising Riemannian gradient flow with landing}
\label{sec:landingGradientFlow}

In this section we show how to use the score function from Section \ref{sec:scoreRetractionProjection} for Riemannian optimization of (\ref{eq:manifoldOptimizationProblem}) for some smooth $f \in C^1(\b{R}^d)$.
Assuming that we have access to a sufficiently accurate estimate of the score in form of a vector function $\vv \in C^1(\b{R}^d;\b{R}^d)$ such that $\vv(x) = x + \sigma^2 \ss(x,\sigma)$ and
\begin{align}
	\label{eq:scoreUniformBounds}
	\norm{\vv(x) - \pi_\sigma(x)} \leq \epsilon \text{\ and\ } \norm{\vv'(x) - P_\sigma(x)} \leq \epsilon \text{\ \ for\ \ } x \in \c{T}(\tau)
\end{align}
for some $\tau \in (0,\tau_{\c{M}})$, we propose for $\eta \geq 0$ the \emph{denoising landing flow} (DLF)
\begin{align}
	\label{eq:projectedFlow}
	\dot{x}&
	= - \vv'(x) \nabla f(\vv(x)) + \eta (\vv(x) - x)\,.
\end{align}
In the exact case $\vv(x) = \pi_\sigma(x)$ and $\vv'(x) = P_\sigma(x)$ (i.e. $\epsilon = 0$ in (\ref{eq:scoreUniformBounds})), flow (\ref{eq:projectedFlow}) is the gradient flow
\begin{align}
	\label{eq:gradientFlow}
	\dot{x}& = - \nabla F_\sigma^\eta(x) = - P_\sigma(x) \nabla f(\pi_\sigma(x)) + \eta (\pi_\sigma(x) - x) \text{\ \ with\ \ } F_\sigma^\eta(x) = f(\pi_\sigma(x)) + \eta \d_\sigma(x)
\end{align}
and the dynamics in (\ref{eq:gradientFlow}) consists of two parts: An approximate projection $P_\sigma(x) \nabla f(\pi_\sigma(x))$ of the gradient $\nabla f(\pi_\sigma(x))$ and an approximate landing term $\eta(\pi_\sigma(x) - x)$ corresponding to the penalty function $\eta \d_\sigma(x)$.
In the further case of $\sigma = 0$ and $x(0) \in \c{M}$ the flow (\ref{eq:gradientFlow}) reduces to the ordinary Riemannian gradient flow, which has been extensively studied \cite{helmke2012optimization, ambrosio2005gradient}.
Interestingly, when $\sigma = 0$, but only $x(0) \in \c{T}(\tau)$, then (\ref{eq:gradientFlow}) reduces to 
\begin{align}
	\label{eq:projectedFlowSigma0}
	\begin{aligned}
		\dot{x} 
		= - H_x^{-1} \operatorname{grad}_{\c{M}} f(\pi(x)) + \eta(\pi(x) - x)\,,
	\end{aligned}
\end{align}
with $H_x^{-1}$ a linear operator on $\T_{\pi(x)} \c{M}$ given in Appendix \ref{sec:manifoldConcepts}.
In particular the two terms in (\ref{eq:projectedFlowSigma0}) belong to $\T_{\pi(x)} \c{M}$ and $\N_{\pi(x)} \c{M}$, respectively, and are orthogonal to each other, implying that the distance between $x$ and $\c{M}$ is non-increasing, which allows for perfect landing on the manifold via similar arguments as in \cite{ablin2022fast, schechtman2023orthogonal}, see Theorem \ref{thm:flowSigma0} in Appendix \ref{sec:perfectLanding}.
For $\sigma > 0$ or $\epsilon > 0$, the two summands in (\ref{eq:gradientFlow}) and (\ref{eq:projectedFlow}) in general not perpendicular to each other and the landing is not exact. 
However, using Theorem \ref{thm:uniformScoreProjectionJacobian} we show the following.

\begin{theorem}
	\label{thm:flowSigmaPositive}
	Consider the flow (\ref{eq:projectedFlow}) and $\tau \in (0,\tau_{\c{M}})$. 
	Set $C = \norm{\nabla f |_{\c{T}(\tau)}}_{\infty}$ and $L = \operatorname{Lip}(\nabla f)$.
	Suppose that for some $\epsilon > 0$ and $\sigma \in (0,\bar{\sigma}(\tau,\c{M},\mu))$ with $\epsilon + K(\tau,\c{M},\mu) \sigma \abs{\log(\sigma)}^3 \leq \min\{\tau,\frac{2\tau}{1+C/\eta}\}$ the function $\vv$ satisfies (\ref{eq:scoreUniformBounds}).
	Then for any $x(0) \in \c{T}(\tau)$ the solution $x(t)$ to (\ref{eq:projectedFlow}) exists for all $t \geq 0$ and is contained in $\c{T}(\tau)$.
	Moreover, every accumulation point $x_*$ of this flow satisfies
	\begin{align*}
		\operatorname{dist}_{\c{M}}(x_*) \leq \tau_0 := \frac{1}{2} \left(\frac{C}{\eta} + 1\right) (\epsilon + K(\tau,\c{M},\mu) \sigma \abs{\log(\sigma)}^3) \,,
	\end{align*}
	and for the projection $p_* = \pi(x_*)$ it holds that
	\begin{align}
		\label{eq:riemannianGradientEstimate}
		\norm{\operatorname{grad}_{\c{M}} f(p_*)} \leq \left(2(L + C + 2\eta) 
		+ \frac{ (1+C/\eta)/\tau_{\c{M}}}{1-\tau/\tau_{\c{M}}} C \right)(\epsilon + K(\tau,\c{M},\mu) \sigma \abs{\log(\sigma)}^3)\,.
	\end{align}
\end{theorem}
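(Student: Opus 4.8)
Set $\delta := \epsilon + K(\tau,\c{M},\mu)\,\sigma\,\abs{\log\sigma}^{3}$. The first step is to replace the hypotheses by uniform comparisons with the noiseless geometry: combining Theorem~\ref{thm:uniformScoreProjectionJacobian} with the identifications $\pi_{\sigma}=\b{E}\,\nu_{\cdot,\sigma}$, $P_{\sigma}=\sigma^{-2}\operatorname{Cov}(\nu_{\cdot,\sigma})$ from (\ref{eq:approximateScoreProjection})--(\ref{eq:sigmaDistanceProjection}), the bound (\ref{eq:scoreUniformBounds}), and the triangle inequality, one gets $\norm{s(x)-\pi(x)}\le\delta$ and $\norm{s'(x)-\pi'(x)}\le\delta$ for all $x\in\c{T}(\tau)$; since $\delta\le\tau$ this also puts $s(x)$ into $\bar{\c{T}}(\tau)$, hence $\norm{\nabla f(s(x))}\le C$. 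I will also use, from Appendix~\ref{sec:manifoldConcepts}, two facts valid on $\c{T}(\tau_{\c{M}})$: (i) $\nabla\d(x)=x-\pi(x)\in\N_{\pi(x)}\c{M}$ and $\pi'(x)$ is symmetric with kernel $\N_{\pi(x)}\c{M}$, so $\langle x-\pi(x),\pi'(x)v\rangle=0$ for all $v$; and (ii) $\pi'(x)$ differs from $\P_{\T_{\pi(x)}\c{M}}$ by an operator of norm at most $\kappa_{\c{M}}\operatorname{dist}_{\c{M}}(x)\,/\,(1-\kappa_{\c{M}}\operatorname{dist}_{\c{M}}(x))$, with $\norm{\pi'(x)}\le(1-\kappa_{\c{M}}\operatorname{dist}_{\c{M}}(x))^{-1}$.

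\paragraph{Approximate feasibility and invariance.} To control $\operatorname{dist}_{\c{M}}(x(t))$ I would differentiate $\d(x(t))=\tfrac12\operatorname{dist}_{\c{M}}(x(t))^{2}$ along (\ref{eq:projectedFlow}). By (i) the Riemannian-gradient term pairs with $x-\pi(x)$ only through the residual $-\langle x-\pi(x),(s'(x)-\pi'(x))\nabla f(s(x))\rangle$, of size $\le\operatorname{dist}_{\c{M}}(x)\,\delta C$; and $\eta\langle x-\pi(x),s(x)-x\rangle=-\eta\operatorname{dist}_{\c{M}}(x)^{2}+\eta\langle x-\pi(x),s(x)-\pi(x)\rangle\le-\eta\operatorname{dist}_{\c{M}}(x)^{2}+\eta\operatorname{dist}_{\c{M}}(x)\,\delta$. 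Together these give $\tfrac{d}{dt}\operatorname{dist}_{\c{M}}(x(t))\le(C+\eta)\delta-\eta\operatorname{dist}_{\c{M}}(x(t))$ as long as $x(t)\in\c{T}(\tau)$, so $\operatorname{dist}_{\c{M}}(x(t))$ is non-increasing once it exceeds a level of order $(1+C/\eta)\delta$; the smallness assumption on $\delta$ keeps this level below $\tau$. Hence $\c{T}(\tau)$ is forward invariant, and since the right-hand side of (\ref{eq:projectedFlow}) is continuous and $\bar{\c{T}}(\tau)$ is compact, the solution exists for all $t\ge0$ and stays in $\c{T}(\tau)$; a linear comparison (Grönwall) argument then produces the stated $\limsup_{t\to\infty}\operatorname{dist}_{\c{M}}(x(t))\le\tau_{0}$, hence $\operatorname{dist}_{\c{M}}(x_{*})\le\tau_{0}$ at every accumulation point.

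\paragraph{Approximate optimality.} For the Riemannian gradient bound I would work with the merit function $\Phi(x)=f(s(x))+\eta\,\d(x)$ on $\c{T}(\tau)$, for which $\nabla\Phi(x)=s'(x)^{\top}\nabla f(s(x))+\eta\,(x-\pi(x))$ and (\ref{eq:projectedFlow}) is a perturbed gradient flow, $\dot x=-\nabla\Phi(x)+R(x)$ with $R(x)=(s'(x)^{\top}-s'(x))\nabla f(s(x))+\eta\,(s(x)-\pi(x))$; as $s'$ is $\delta$-close to the symmetric $\pi'$, $\norm{R(x)}\le(2C+\eta)\delta$. Then $\tfrac{d}{dt}\Phi(x(t))\le-\norm{\nabla\Phi(x(t))}^{2}+(2C+\eta)\delta\,\norm{\nabla\Phi(x(t))}$, and since $\Phi$ is continuous, hence bounded below, on the compact forward-invariant region, a LaSalle/Barbalat-type argument on the $\omega$-limit set (using that $\Phi$ strictly decreases wherever $\norm{\nabla\Phi}$ exceeds order $\delta$, while it can grow at rate only $O(\delta^{2})$) forces $\norm{\nabla\Phi(x_{*})}=O((C+\eta)\delta)$ at every accumulation point $x_{*}$. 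Finally I would project $\nabla\Phi(x_{*})$ onto $\T_{p_{*}}\c{M}$ with $p_{*}=\pi(x_{*})$: the landing part $\eta(x_{*}-\pi(x_{*}))\in\N_{p_{*}}\c{M}$ vanishes, leaving $\P_{\T_{p_{*}}\c{M}}s'(x_{*})^{\top}\nabla f(s(x_{*}))$ of norm $\le\norm{\nabla\Phi(x_{*})}$; comparing with $\operatorname{grad}_{\c{M}}f(p_{*})=\P_{\T_{p_{*}}\c{M}}\nabla f(p_{*})$, the swap $s'(x_{*})^{\top}\to\pi'(x_{*})$ costs $\delta C$, the swap $\nabla f(s(x_{*}))\to\nabla f(p_{*})$ costs $\norm{\pi'(x_{*})}L\delta$ by Lipschitzness of $\nabla f$ and $\norm{s(x_{*})-p_{*}}\le\delta$, and the swap $\pi'(x_{*})\to\P_{\T_{p_{*}}\c{M}}$ costs $\norm{\P_{\T_{p_{*}}\c{M}}(I-\pi'(x_{*}))\nabla f(p_{*})}\le\tfrac{\kappa_{\c{M}}\operatorname{dist}_{\c{M}}(x_{*})}{1-\kappa_{\c{M}}\operatorname{dist}_{\c{M}}(x_{*})}\,C$, controlled via $\operatorname{dist}_{\c{M}}(x_{*})\le\tau_{0}$, $\kappa_{\c{M}}\le1/\tau_{\c{M}}$ and $\tau_{0}<\tau$. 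Summing the contributions yields (\ref{eq:riemannianGradientEstimate}).

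\paragraph{Main obstacle.} The delicate step is the LaSalle/Barbalat argument in the optimality part: since $s$ is not assumed to be an exact gradient, (\ref{eq:projectedFlow}) is a gradient flow only up to the $O(\delta)$ perturbation $R$, so standard gradient-flow convergence does not apply directly; one must show that the \emph{entire} $\omega$-limit set of the trajectory — not merely one accumulation point — consists of $O(\delta)$-approximate critical points of $\Phi$, combining invariance of the $\omega$-limit set with the monotonicity-up-to-$O(\delta^{2})$ of $\Phi$. Everything else reduces to bookkeeping once facts (i)--(ii) and Theorem~\ref{thm:uniformScoreProjectionJacobian} are in hand.
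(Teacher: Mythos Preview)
Your proposal is correct and tracks the paper's proof closely: both combine Theorem~\ref{thm:uniformScoreProjectionJacobian} with (\ref{eq:scoreUniformBounds}) to obtain uniform $\delta$-closeness of $s,s'$ to $\pi,\pi'$ on $\c{T}(\tau)$, differentiate $\d$ along the flow to get forward invariance of $\c{T}(\tau)$ and the $\tau_0$ feasibility bound, use the merit function $V=f\circ s+\eta\,\d$ together with a perturbed-gradient-flow inequality and a LaSalle/barrier argument at accumulation points, and finally convert approximate stationarity into (\ref{eq:riemannianGradientEstimate}). The only differences are cosmetic: you retain the skew term $(s'^{\top}-s')\nabla f(s)$ in the perturbation $R$ (the paper writes $s'$ in place of $s'^{\top}$ when computing $\nabla V$ and therefore gets the smaller residual $\eta\epsilon$ instead of your $(2C+\eta)\delta$), and for the last step the paper invokes a short ``near-orthogonal vectors'' lemma applied to $\operatorname{grad}_{\c{M}}f(p_*)\in\T_{p_*}\c{M}$ and $\eta(\pi(x_*)-x_*)\in\N_{p_*}\c{M}$ rather than your direct tangential projection; your identification of the LaSalle step as the delicate point matches the paper's own hand-off to a barrier-function citation.
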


Thus, Theorem \ref{thm:flowSigmaPositive} shows that one can still use the flow (\ref{eq:projectedFlow}) for a fixed $\sigma > 0$ to converge to approximate critical points of the objective $f$, at which the approximation error and norm of the Riemannian gradient are both $\tilde{O}(\sigma)$ plus the score error $\epsilon$.

\begin{remark}
	\label{rem:computation}
	We can evaluate the right hand side of the flow (\ref{eq:projectedFlow}) in a single forward-backward pass of the network $\vv$.
	Namely, given an input $x$, we compute and store $p = \vv(x)$ by a forward pass of $\vv$, while keeping the computational graph of $\vv(x)$.
	Then we evaluate $g = \nabla f(p)$ and build the computational graph of $y = \<\vv(x),g\>$, while detaching $g$. 
	Finally we backpropagate on $x$ in $y$ to obtain $\vv'(x) g = \vv'(x) \nabla f(\vv(x))$.
\end{remark}

\section{Denoising Riemannian gradient descent}
\label{sec:gradientDescent}

In a practical implementation one has to consider a discretized version of the flow (\ref{eq:projectedFlow}).
A natural alternative is to study the following approximate version of the Riemannian gradient descent \cite{absil2008optimization,boumal2023introduction}
\begin{align}
	\label{eq:riemannianGradientDescent}
	x_{k+1} = \vv(x_k - \gamma_k \vv'(x_k) \nabla f(x_k))\,,
\end{align}
which we term the \emph{denoising Riemannian gradient descent} (DRGD). 
Here $\vv$ acts as an approximate retraction and $\vv'$ as an approximate projection onto the tangent space.
We obtain the following convergence result for this algorithm.
\begin{theorem}
	\label{thm:riemannianGradientDescent}
	Let $\tau \in (0,\tau_{\c{M}}/2)$ and set $C = \norm{\nabla f |_{\c{T}(\tau)}}_{\infty}$ and $L = \operatorname{Lip}(\nabla f)$, $D = \norm{f|_{\c{T}(\tau)}}_\infty$ and
	\begin{align*}
		L_0 = 8C \left(2(\frac{3}{\tau_{\c{M}}} + \tau M) + \frac{1}{\tau_{\c{M}}}\right) + 2 L\,.
	\end{align*}
	Suppose that for some $\sigma \in (0,\bar{\sigma}(\tau,\c{M},\mu))$ and $\epsilon > 0$ with $\epsilon' := \epsilon + K(\tau,\c{M},\mu) \sigma \abs{\log(\sigma)}^3 \leq \tau/2$ the function $\vv$ satisfies (\ref{eq:scoreUniformBounds}) and that the step-size $\gamma_k$ is constrained by $\gamma_k \in [\gamma_{\min},\gamma_{\max}]$ 
	with
	\begin{align*}
		0 < \gamma_{\min} <
		\gamma_{\max} < \min\left\{\frac{2}{L_0},\frac{\tau}{C(4+\tau)}\right\} \,.
	\end{align*}
	Then for any $x_0 \in \c{T}(\tau/2)$ the iterates $x_k$ of (\ref{eq:riemannianGradientDescent}) satisfy $\{x_k\}_{k=1}^\infty \subseteq \c{T}(\epsilon') \subseteq \c{T}(\tau/2)$ and for the projection $p_k = \pi(x_k)$ the following average-of-gradient-norm condition holds:
	\begin{align*}
		\frac{1}{N}\sum_{k=0}^N \norm{\operatorname{grad}_{\c{M}} f(p_k)}^2
		\leq \frac{4D/N + (8C^2 \epsilon'/\tau_{\c{M}}^2 + 2 (2 C + L_0 \gamma_{\max} (C + L))) \epsilon'}{\gamma_{\min}(1-\frac{L_0}{2}\gamma_{\max}) }\,.
	\end{align*}
	In particular there exists at least one accumulation point $x_* \in \bar{\c{T}}(\epsilon)$ of $\{x_k\}_{k=0}^\infty$ such that its projection $p_* = \pi(x_*)$ satisfies
	\begin{align*}
		\norm{\operatorname{grad}_{\c{M}} f(p_*)}^2
		\leq \frac{(8C^2 \epsilon'/\tau_{\c{M}}^2 + 2 (2 C + L_0 \gamma_k (C + L))) \epsilon'}{\gamma_{\min}(1-\frac{L_0}{2}\gamma_{\max})}\,.
	\end{align*}
\end{theorem}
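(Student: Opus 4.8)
The plan is to run a standard descent-lemma argument for the merit function $F := f \circ \pi$ along the iterates, after first establishing that the iterates stay within the tubular neighborhood where $\pi$ is well-behaved. I would proceed in four stages.

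\textbf{Stage 1: invariance of $\c{T}(\tau/2)$ and feasibility bound.} Assume $x_k \in \c{T}(\tau/2)$. The inner point $y_k := x_k - \gamma_k s'(x_k)\nabla f(x_k)$ differs from $x_k$ by at most $\gamma_k \norm{s'(x_k)}\,C \le \gamma_k (1 + \epsilon')\,C$; using $\gamma_{\max} < \tau/(C(4+\tau))$ one checks $y_k \in \c{T}(\tau)$, so the uniform bound (\ref{eq:scoreUniformBounds}) applies at $y_k$ and $x_{k+1} = s(y_k)$ satisfies $\norm{x_{k+1} - \pi_\sigma(y_k)} \le \epsilon$. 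Combining with Theorem \ref{thm:uniformScoreProjectionJacobian}, $\norm{x_{k+1} - \pi(y_k)} \le \epsilon' \le \tau/2$, which both gives $\dist_\c{M}(x_{k+1}) \le \epsilon'$ (the claimed $\{x_k\}_{k\ge1}\subseteq\c{T}(\epsilon')$) and keeps $x_{k+1}\in\c{T}(\tau/2)$; induction closes the invariance. This is the routine but bookkeeping-heavy part.

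\textbf{Stage 2: one-step decrease of $F = f\circ\pi$.} Write $p_k = \pi(x_k)$. I would show (i) the search direction $d_k := s'(x_k)\nabla f(x_k)$ is, up to an $O(\epsilon')$-error, equal to $P_0(x_k)\nabla f(p_k)$, which in turn is close to $\operatorname{grad}_\c{M} f(p_k)$ because $\norm{x_k-p_k}\le\epsilon'$ and $\pi'$ is Lipschitz and equals $\P_{\T_{p_k}\c{M}}$ on $\c{M}$; (ii) $F$ has Lipschitz gradient with constant $L_0$ on $\c{T}(\tau)$ — here the $3/\tau_\c{M}$ and $\tau M$ terms in $L_0$ come from bounding $\|\pi'\|$ and $\|\pi''\|$ on the neighborhood (with $M$ the relevant curvature-type constant from Appendix \ref{sec:manifoldConcepts}), and the $2L$ term from $\nabla f$; (iii) $x_{k+1} = s(y_k) = \pi_\sigma(y_k) + O(\epsilon) = \pi(y_k) + O(\epsilon')$, and since $\pi\circ\pi = \pi$ near $\c{M}$, $\pi(x_{k+1})$ is within $O(\epsilon')$ of $\pi(y_k)$, which is within $O(\gamma_k^2 + \gamma_k\epsilon')$ of $p_k - \gamma_k\operatorname{grad}_\c{M} f(p_k)$ after projecting the inner step. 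Feeding this into the $L_0$-smoothness inequality for $F$ at $p_k$ yields
\begin{align*}
	F(p_{k+1}) \le F(p_k) - \gamma_k\left(1 - \tfrac{L_0}{2}\gamma_k\right)\norm{\operatorname{grad}_\c{M} f(p_k)}^2 + \left(\tfrac{4C^2\epsilon'}{\tau_\c{M}^2} + (2C + L_0\gamma_{\max}(C+L))\epsilon'\right)\gamma_k,
\end{align*}
where the $4C^2\epsilon'/\tau_\c{M}^2$ term captures the error from $\pi(x_{k+1})\ne\pi(y_k)$ propagated through the $L_0$-smoothness gradient-cross term (using $\|\pi'\|\le 1/(1-\tau/\tau_\c{M})$ and $\tau<\tau_\c{M}/2$ so this is $\le 2/\tau_\c{M}$ hence the square gives the $\tau_\c{M}^{-2}$), and the remaining $(2C+\dots)\epsilon'$ term collects the first-order perturbations of the descent direction.

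\textbf{Stage 3: telescoping.} Sum the one-step inequality for $k=0,\dots,N$, use $\gamma_k(1-\tfrac{L_0}{2}\gamma_k) \ge \gamma_{\min}(1-\tfrac{L_0}{2}\gamma_{\max}) > 0$ (guaranteed by $\gamma_{\max} < 2/L_0$), and bound $F(p_0) - F(p_{N+1}) \le 2D$ via $\|f|_{\c{T}(\tau)}\|_\infty = D$ and $|F(p)| \le D$. Rearranging gives exactly the stated average-of-gradient-norm bound, and taking the minimizing index $k$ (plus compactness of $\bar{\c{T}}(\epsilon')$ to extract an accumulation point, at which $\operatorname{grad}_\c{M} f$ is continuous) yields the final single-point estimate with $\gamma_k$ in place of $\gamma_{\max}$ in that term.

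\textbf{Main obstacle.} The delicate step is Stage 2(iii): controlling how the \emph{retraction error} of $s$ interacts with projection. Because $x_{k+1} = s(y_k)$ is only approximately on $\c{M}$, the natural Lyapunov quantity $f(\pi(x_{k+1}))$ must be related to a genuine Riemannian-gradient step from $p_k$, and the composition $\pi\circ s$ is not the identity — one must quantify $\|\pi(s(y_k)) - \pi(y_k)\|$ in terms of $\epsilon'$ and show the error enters only at order $\epsilon'$ (not $\epsilon'/\gamma$), which is what allows the final bound to be $O(D/N) + O(\epsilon')$ rather than blowing up as $\gamma_{\min}\to0$. Getting the constants to match $L_0$ and the $8C^2/\tau_\c{M}^2$ coefficient exactly will require careful use of the explicit formula for $\pi'$ and the second-order Taylor expansion of $\pi$ from Appendix \ref{sec:manifoldConcepts}.
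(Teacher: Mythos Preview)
Your plan is correct and structurally matches the paper's proof (invariance, one-step descent estimate, telescoping, conversion to the Riemannian gradient). Two points of comparison are worth noting.

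First, the paper does \emph{not} use $F=f\circ\pi$ as the telescoping quantity; it uses $f(x_k)$ itself. Writing $y_k=x_k-\gamma_k s'(x_k)\nabla f(x_k)$ and the ``ideal'' step $z_k=x_k-\gamma_k P_0(x_k)\nabla f(\pi(x_k))$, the paper decomposes
\[
f(x_{k+1})-f(x_k)=\big[f(\pi(z_k))-f(\pi(x_k))\big]+\big[f(\pi(x_k))-f(x_k)\big]+\big[f(s(y_k))-f(\pi(y_k))\big]+\big[f(\pi(y_k))-f(\pi(z_k))\big].
\]
The first bracket is handled by the descent lemma for $f\circ\pi$ (Lipschitz constant $L_0$, via Lemma~\ref{lem:projectorSecondDerivativeNorm}); the remaining three are each $O(\epsilon')$ or $O(\gamma_k\epsilon')$ by direct Lipschitz estimates. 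This choice entirely sidesteps what you flag as the ``main obstacle'': one never needs $\pi(s(y_k))$, only $\norm{s(y_k)-\pi(y_k)}\le\epsilon'$, which is immediate from (\ref{eq:scoreUniformBounds}) and Theorem~\ref{thm:uniformScoreProjectionJacobian}. Your route through $F(p_k)$ also works (since $\norm{\pi(s(y_k))-\pi(y_k)}\le\operatorname{Lip}(\pi)\epsilon'\le 2\epsilon'$), but note that this error is $O(\epsilon')$, \emph{not} $O(\gamma_k\epsilon')$---your displayed Stage~2 inequality has an extra factor $\gamma_k$ on the error that should not be there; once removed, the telescoping gives exactly the stated bound after dividing through by $\gamma_{\min}(1-\tfrac{L_0}{2}\gamma_{\max})$.

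Second, the $8C^2(\epsilon')^2/\tau_{\c M}^2$ term does not come from the retraction error. In the paper the one-step descent is in terms of $\norm{P_0(x_k)\nabla f(\pi(x_k))}^2$, and only \emph{after} summing is this converted to $\norm{\operatorname{grad}_{\c M}f(p_k)}^2$ via Lemma~\ref{lem:projectorDifferenceNorm}, which bounds $\norm{P_0(x_k)-P_0(\pi(x_k))}$ by $\norm{x_k-p_k}/\tau_{\c M}$ up to a factor $\le 2$; squaring and multiplying by $C^2$ gives the term. So the $\tau_{\c M}^{-2}$ contribution is a tangent-space-projection mismatch at $x_k$ versus $p_k$, not a $\pi\circ s$ composition error.
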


\begin{remark}
    \label{rem:classicalResult}
    If we set $\epsilon = 0$ and $\sigma = 0$, we recover, up to constants, the classical result on iterates of the Riemannian gradient descent with known manifold $\c{M}$ and non-convex objective $f$ with Lipschitz gradient \cite[Corollary 4.9]{boumal2023introduction} (see Appendix \ref{sec:classicalComplexityRGD}). 
\end{remark}

\begin{remark}
	\label{rem:scoreBound}
	Note that both Theorem \ref{thm:flowSigmaPositive} and Theorem \ref{thm:riemannianGradientDescent} require the rather strong $L^\infty$-approximation assumption (\ref{eq:scoreUniformBounds}) on $\vv$ and its Jacobian.
	The analysis under a weaker $L^2$-bound is out of score for this paper and left for future work.  
\end{remark}

\section{Numerical Experiments and Applications}
\label{sec:numericalExamples}

In this section we provide some numerical results for our proposed algorithms, namely the denoising landing flow (\ref{eq:projectedFlow}) (more precisely, the discretized version (\ref{eq:projectedFlowDiscretized})) and the denoising Riemannian gradient descent (\ref{eq:riemannianGradientDescent}).

\subsection{Optimization on orthogonal group $O(n)$}
In this section we evaluate flow (\ref{eq:projectedFlow}) on a synthetic example.
In order to compare the error of our method to classical Riemannian optimization techniques, we consider distributions supported on manifolds that are the focus of study in \cite{absil2008optimization, boumal2023introduction}.
Specifically we consider the orthogonal group manifold $\c{M} = \operatorname{O}(n) \subseteq \b{R}^{n \times n}$ with $\mu = \operatorname{Vol}_{\c{M}}$ being the uniform volume measure and Brockett's cost function \cite{helmke2012optimization} defined by
\begin{align*}
	\min_{X \in \operatorname{O}(n)} f(X) := \tr(A X Q X^\top)\,,
\end{align*}
where $A, Q \in \b{S}^{n \times n}$ are given. 
We consider the cases $n \in \{10,20\}$ and assume that we are given a set $\c{D}_{\operatorname{train}} \subseteq \c{M}$ of $N_{\operatorname{data}} = 20000$ data points from $\mu$ and train the score function $\ss$ with denoising score matching (see Appendix \ref{sec:diffusionModels} for diffusion models and Appendix \ref{sec:implementationDetailsOrthogonal} for implementation details).
In Figure \ref{fig:resultsMain} (left) we compare the evolution of the objective value of our approximation of (\ref{eq:projectedFlow}) to the exact landing flow (\ref{eq:projectedFlowSigma0}) for different noise levels $\sigma > 0$ and dimension $n$.
We observe that we can obtain objective values with cost lower than the best possible point in the training set and that the accuracy improves as $\sigma \to 0$.

\begin{figure}[h]
	\begin{tabular}{c@{}c}
		\includegraphics[width=.48\linewidth]{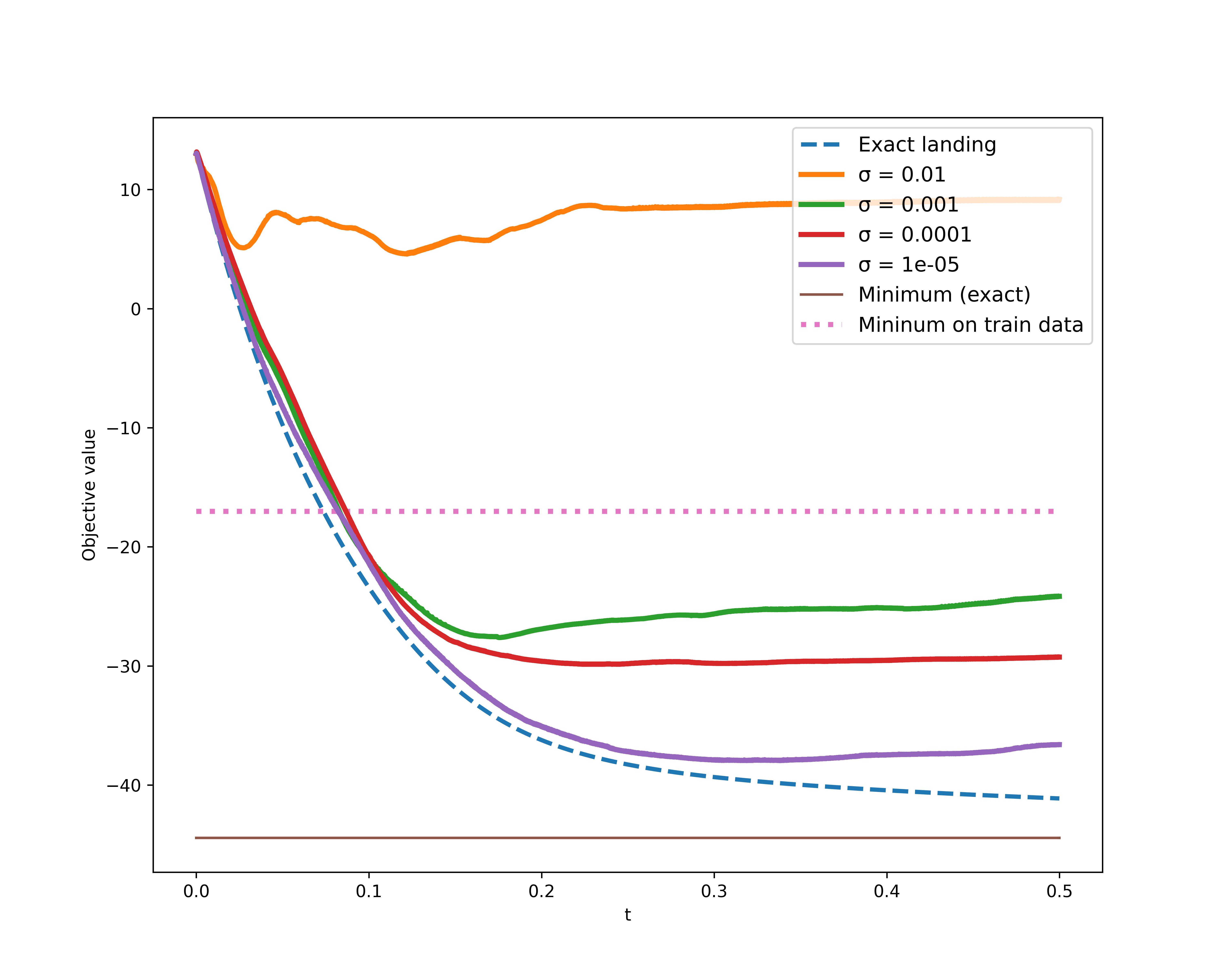} & \includegraphics[width=.48\linewidth]{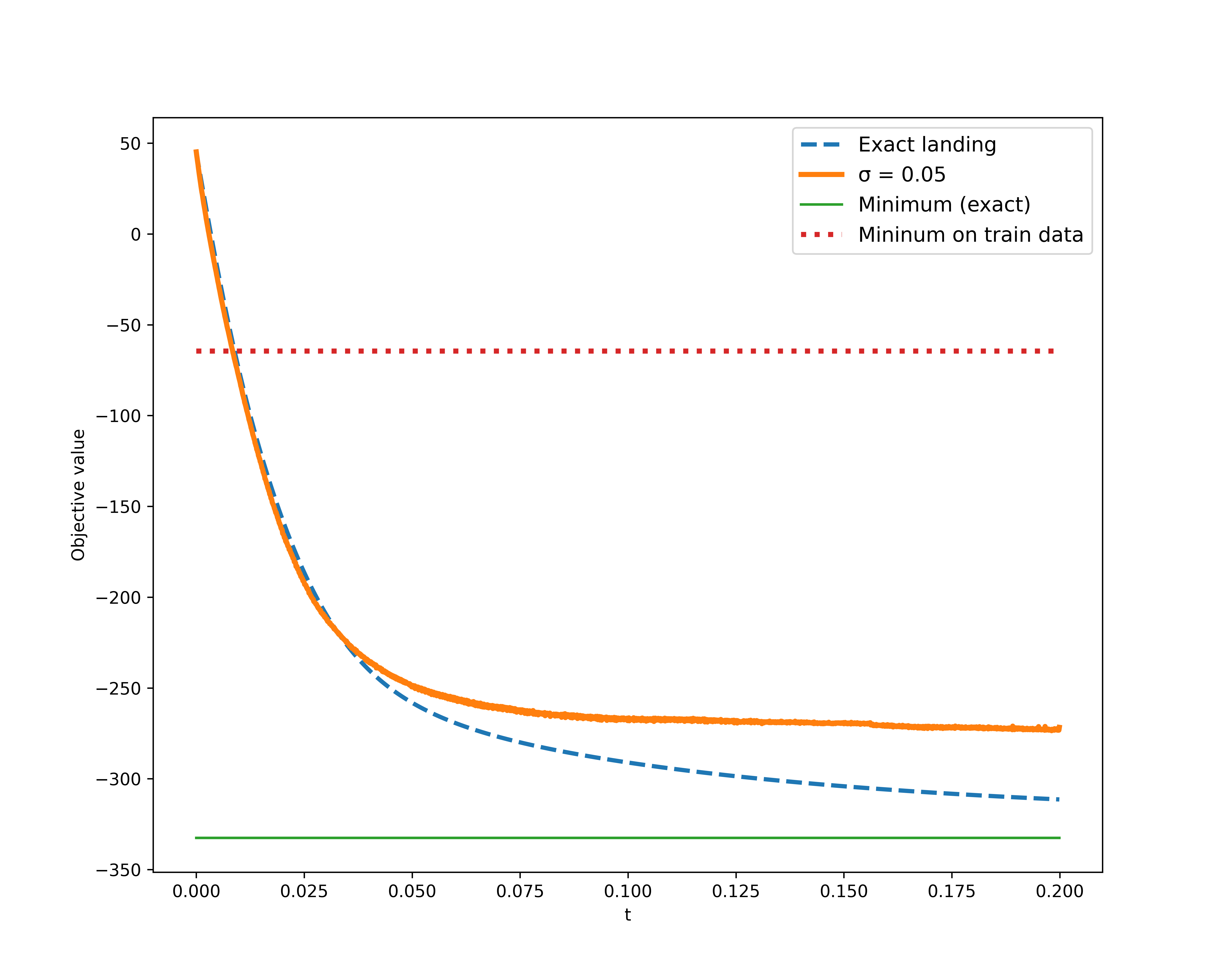}
	\end{tabular}
	\caption{Objective value vs. flow time $t$ for the orthogonal manifold for $n=10$ (left, different $\sigma > 0$) and for $n=20$ (right, $\sigma = 0.05$). Here ``exact landing'' refers to \eqref{eq:projectedFlow} with exact operations $\vv = \pi$ and $\vv' = \pi'$.}
	\label{fig:resultsMain}
\end{figure}

\subsection{Reference tracking via data-driven control}
\label{sec:dataDriven}

\textbf{Problem definition:} In this example we consider applying our method to the control of discrete-time dynamical systems on a finite horizon. 
Specifically we assume that we are given a discrete-time state-space system
\begin{align}
	\label{eq:systemDynamics}
	x_{k+1} = \bs{f}(x_k,u_k)\,, \quad
	y_k = \bs{g}(x_k,u_k)\,, \quad
	k=0,\ldots,N_h-1\,,  
\end{align}
on a finite time horizon $N_h$ with state $x_k \in \b{R}^{n_x}$, input $u_k \in \b{R}^{n_u}$, output $y_k \in \b{R}^{n_y}$ and a fixed initial state $x_0 = 0 \in \b{R}^{n_x}$.
The task is to find inputs $\bs{u} = (u_0,\ldots, u_{N_h-1})$ such that the corresponding outputs $\bs{y} = (y_0,\ldots,y_{N_h})$ closely track a prespecified reference trajectory $\bs{r} = (r_0,\ldots,r_{N_h})$ by solving the optimal control problem
\begin{align}
	\label{eq:referenceTrackingObjective}
	\min_{(\bs{u},\bs{y}) \in \c{M}_{\operatorname{IO}}} f(\bs{u},\bs{y})
\end{align}
with tracking objective
\begin{align}
	\label{eq:trackingObjective}
	f(\bs{u},\bs{y}) 
	= \sum_{k=0}^{N_{\operatorname{h}}-1} u_k^\top R u_k + (y_k-r_k)^\top Q (y_k-r_k) + (y_{N_h}-r_{N_h})^\top Q (y_{N_h}-r_{N_h})
\end{align}
for positive-definite weight matrices $R \in \b{S}^{n_u \times n_u}$ and $Q \in \b{S}^{n_y \times n_y}$ and feasible input-output set
\begin{align*}
	\c{M}_{\operatorname{IO}}
	= \left\{(\bs{u},\bs{y}) \in  (\b{R}^{n_u})^{N_{\operatorname{h}}} \times (\b{R}^{n_y})^{N_{\operatorname{h}}+1} \mid \begin{tabular}{l} $\text{exists\ } \bs{x} = (x_0,\ldots,x_{N_h}) \in (\b{R}^{n_x})^{N_h+1}$ \\ $\text{\ with\ } (\bs{u},\bs{x},\bs{y}) \text{\ satisfying\ } (\ref{eq:systemDynamics})$\end{tabular} \right\}\,.
\end{align*}
Under smoothness assumptions on the dynamics $\bs{f}$ and $\bs{g}$, the set $\c{M}_{\operatorname{IO}}$, being a graph of a smooth map, is a (non-compact) embedded smooth submanifold of $(\b{R}^{n_u})^{N_{h}} \times (\b{R}^{n_y})^{N_{h}+1}$.
The problem (\ref{eq:referenceTrackingObjective}) is ubiquitous in receding horizon control applications such as model predictive control (MPC) and used for e.g. autonomous driving \cite{vu2021model}, motion planning \cite{cohen2020finite}, optimizing HVAC system energy efficiency \cite{serale2018model} and inventory control \cite{kostic2009inventory}.
In many of these applications the dynamics (\ref{eq:systemDynamics}) governing the system are \emph{not known} explicitly.
Instead, in data-driven control \cite{dorfler2023data1,dorfler2023data2,markovsky2023data} one assumes that (\ref{eq:systemDynamics}) is given \emph{implicitly} by a finite number of measured input-output trajectories
\begin{align*}
	\c{D}_{\operatorname{train}} = \{(\bs{u}_i,\bs{y}_i) \mid i=1,\ldots,N_{\operatorname{data}}\} \subseteq \c{M}_{\operatorname{IO}}\,,
\end{align*}
where the input $\bs{u}$ is persistently exciting \cite{willems2005note}, e.g. given by (white) noise \cite{ljung1999system}.
In particular the so-called \emph{system behavior} manifold $\c{M}_{\operatorname{IO}}$ \cite{willems1997introduction} is given by samples from a distribution $\mu$ on its in- and outputs and fits precisely into our framework of data-driven Riemannian optimization (\ref{eq:manifoldOptimizationProblem}).
A similar setup with observable state $\bs{y} = \bs{x}$ has been considered in the domain of reinforcement learning \cite{janner2022planning}.
We test our proposed denoising Riemannian gradient descent on two classical systems from the control domain: The discretized double pendulum system and the unicycle car model \cite{lavalle2006planning} (see Appendix \ref{sec:benchmarkSystems} detailed information on the systems and the particular choice of $\bs{r}$, $R$ and $Q$ in (\ref{eq:trackingObjective})), each on a horizon of $N_h = 100$.
To apply our proposed methods, we train a diffusion model (see Appendix \ref{sec:implementationDetailsTracking} for implementation details) on the measured trajectories $\c{D}_{\operatorname{train}}$ and solve (\ref{eq:referenceTrackingObjective}) via the denoising Riemannian gradient descent to obtain a solution $(\bs{u}^*,\bs{y}^*)$.
As initial values we take the trajectories from the training set that minimize the objective cost $f$, i.e. $(\bs{u}_0,\bs{y}_0) = \argmin_{(\bs{u},\bs{y}) \in \c{D}_{\operatorname{train}}} f(\bs{u},\bs{y})$.
Note that, as seen in Section \ref{sec:gradientDescent}, in general the final iterate will not exactly lie on the input-output manifold, i.e. $(\bs{u}^*,\bs{y}^*) \notin \c{M}_{\operatorname{IO}}$.
To account for this deviation we back-test our generated input trajectory $\bs{u}^*$ by implementing it on the true underlying system (\ref{eq:systemDynamics}) to obtain the real output $\bs{y}^{\operatorname{true}}$.\\
\\
\textbf{Results and discussion:} In Figure \ref{fig:objectiveValueDynamics} (in Appendix \ref{sec:experimentsTrajectory}) we depict the evolution of the objective value w.r.t. iteration count and in Figure \ref{fig:trackingTrajectories} we depict the final optimizing trajectories $\bs{y}^*$ and $\bs{y}^{\operatorname{true}}$.
We can observe that the error $\norm{\bs{y}^* - \bs{y}^{\operatorname{true}}}$ is small, which shows that $(\bs{u}^*,\bs{y}^*)$ is close to the true system behavior $\c{M}_{\operatorname{IO}}$.
Moreover, we can see that the trajectory $\bs{y}^{\operatorname{true}}$ tracks the corresponding reference $\bs{r}$ much better than the train set minimum $\bs{y}_0$, which shows a generalization capability of our diffusion model.
We from Figure \ref{fig:objectiveValueDynamics} (left) that the current objective can depart from the true objective significantly. 
This is due to the iterates deviating from $\c{M}_{\operatorname{IO}}$.
In this example the algorithm (DRGD) recovers and we have found it to be robust w.r.t. moderate deviations from the manifold.
Note that we have set a iteration budget of $N_{\operatorname{iter}} = 3000$ and $N_{\operatorname{iter}} = 2500$, respectively, while the objective is still decreasing. 
Accelerating the convergence of DRGD is left for future work.

\begin{figure}
  \centering
  \includegraphics[width=0.48\linewidth]{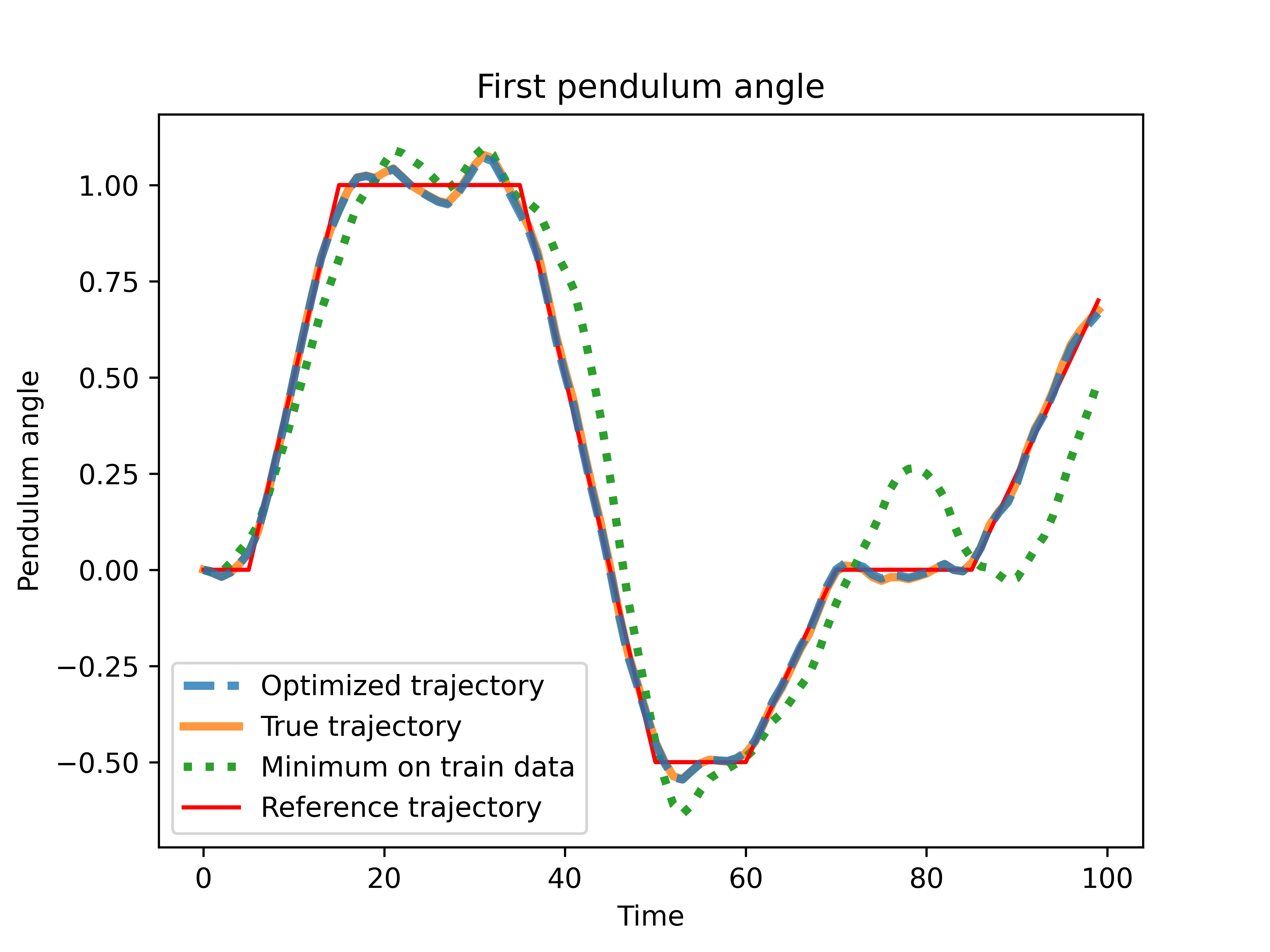}
  \includegraphics[width=0.48\linewidth]{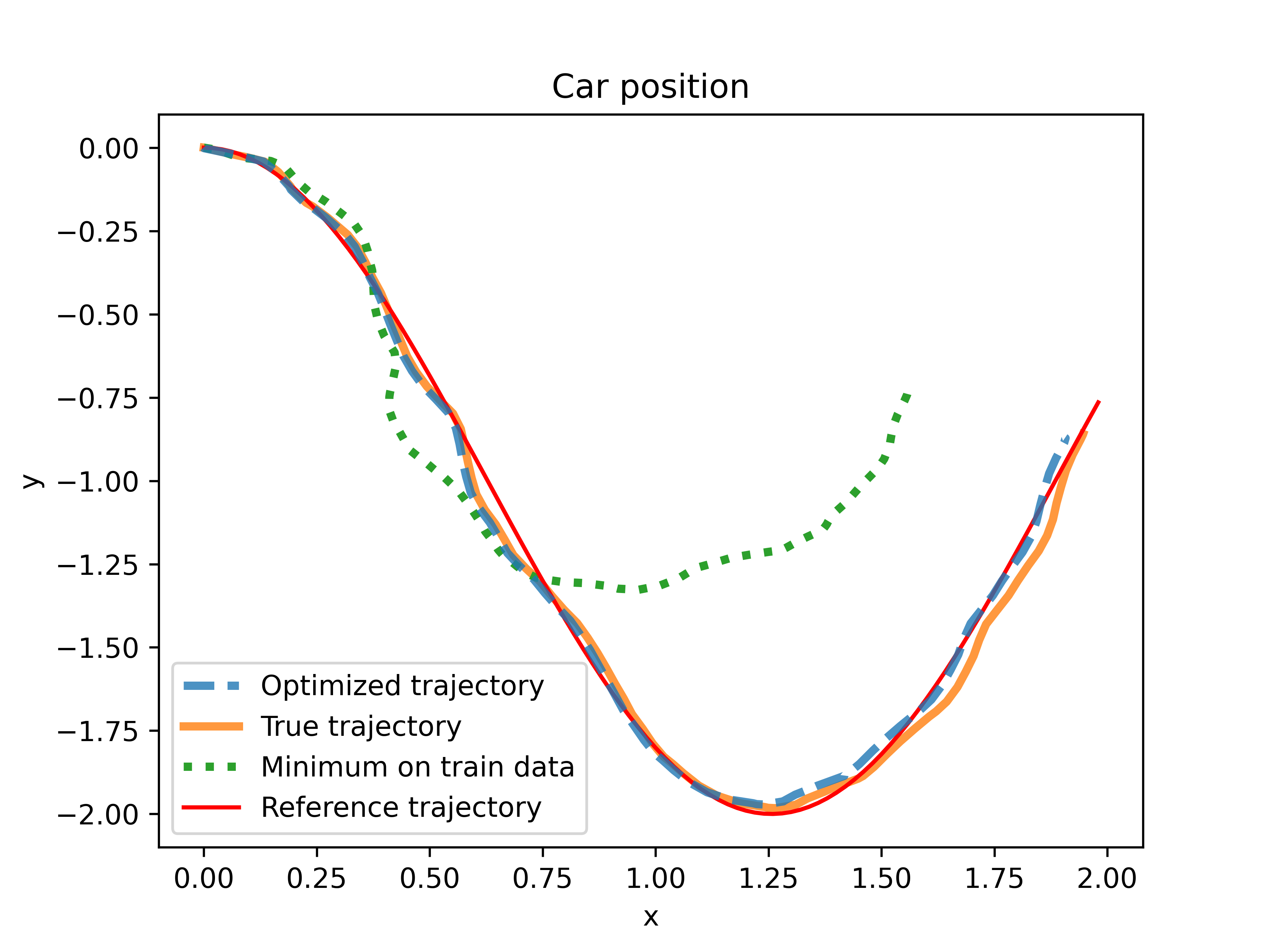}
  \caption{Denoising Riemannian gradient descent: Angle of the first pendulum (left) and unicycle car position (right) with the optimized output trajectory $\bs{y}^*$ (blue, dashed), the true system trajectory $\bs{y}^{\operatorname{true}}$ (orange), the initial trajectory $\bs{y}_0$ (green, dotted) and the reference trajectory $\bs{r}$ (red)}
  \label{fig:trackingTrajectories}
\end{figure}

\section{Conclusion and Future Work}

In this paper, we show that the denoising score and its Jacobian allow to perform manifold operations such as the closest-point and tangent space projection without the explicit knowledge of the manifold.
We then propose a landing flow for the corresponding manifold-constrained Riemannian optimization problem and show that its limit points approximate critical points of the original problem.
Moreover, we investigate the approximate version of the Riemannian gradient descent and provide a bound on the average-gradient-norm of its iterates, which converges to zero as the manifold operations become more exact.
We apply both algorithms on known manifolds and to finite-horizon reference tracking in the domain of data-driven control.
Future work will consist of deriving error bounds for this flow when the denoising score is trained with a non-zero $L^2$-error as well as the study of more sophisticated classical Riemannian optimization algorithms such as Newton and trust region methods when using the approximate manifold operations with the trained score to accelerate convergence.

\section*{Acknowledgments and Disclosure of Funding}
We thank the reviewers for their constructive suggestions. 
This work was supported as a part of NCCR Automation, a National Centre of Competence in Research, funded by the Swiss National Science Foundation (grant number 51NF40\_225155).
Riccardo De Santi is supported by the ETH AI Center through an ETH AI Center doctoral fellowship.

\bibliography{iclr2026_conference}
\bibliographystyle{iclr2026_conference}

\appendix

\section{Connections and comparisons to prior work}
\label{sec:connectionsPriorWork}

\subsection{Connection to denoising autoencoders}

In the following we show how our results connect to learning denoising auto-encoders.
The following theorem is shown in \cite{alain2014regularized}:
\begin{theorem}[Theorem 1 in \cite{alain2014regularized}]
    Let $\mu$ be a probability measure on $\b{R}^d$ and consider for $\sigma > 0$ the reconstruction problem
    \begin{align*}
        \min_{r} L_{\operatorname{DAE}}(r)
        \quad \text{for} \quad  
        L_{\operatorname{DAE}}(r) = \mathbb{E}_{x \sim \mu\,,\;\epsilon \sim \mathcal{N}(0,I)}\Vert x - r(x+\sigma \epsilon)\Vert^2\,.
    \end{align*}
    If $\mu$ is absolutely continuous with (Lebesgue) density $p$, the optimal reconstruction function $r_\sigma^*$ satisfies
    \begin{align*}
        r_\sigma^*(x)
        = \frac{\mathbb{E}_{\epsilon \sim \mathcal{N}(0,I)} p(x-\sigma \epsilon)(x-\sigma \epsilon)}{\mathbb{E}_{\epsilon \sim \mathcal{N}(0,I)} p(x-\sigma \epsilon)}
        \overset{\sigma \to 0}{=} x + \sigma^2 \nabla \log p(x) + o(\sigma^2)
    \end{align*}
    at any $x \in \b{R}^d$ with $p(x) > 0$.
\end{theorem}
To see how this result relates to Theorem \ref{thm:uniformScoreProjectionJacobian}, note that function $r_\sigma^*$ here is precisely our ``approximate projection'' $\pi_\sigma$, since the first above expression corresponds to the posterior of observing $x$ under the convolution of the prior $p$ with $\mathcal{N}(0,\sigma^2 I)$, i.e. in our notation
\begin{align*}
    r_\sigma^*(x)
    = \mathbb{E} \nu_{x,\sigma}
    = \pi_\sigma(x).
\end{align*}
Therefore, if $\mu$ has the Lebesgue-density $p$ (i.e. the data manifold would have full dimension $k=d$), then \cite{alain2014regularized} show that
\begin{align*}
    x + \sigma^2 \nabla \log p_\sigma(x) 
    \overset{\sigma \to 0}{=} x + \sigma^2 \nabla \log p(x) + o(\sigma^2).
\end{align*}
In particular, if $p(x) > 0$, then the last equation states that
\begin{align*}
    \lim_{\sigma \to 0} x + \sigma^2 \nabla \log p_\sigma(x) 
    = x.
\end{align*}
Theorem \ref{thm:uniformScoreProjectionJacobian} on the other hand deals with the case $k < d$, i.e. when the manifold is truly lower-dimensional. 
In this case, $\mu$ has no density $p$ w.r.t. the Lebesgue measure on $\mathbb{R}^d$, and it holds that
\begin{align*}
    \lim_{\sigma \to 0} x + \sigma^2 \nabla \log p_\sigma(x) 
    = \pi(x),
\end{align*}
where $\pi$ denotes the projection operation onto the data manifold $\mathcal{M}$.
While \cite{alain2014regularized} does consider manifold support in their discussion (see \cite[Section 3.4, Figure 5]{alain2014regularized}), note that the population data measure in this case is \textbf{not} absolute continuous w.r.t. the Lebesgue measure on $\mathbb{R}^d$, 
and hence $\log p(x)$ has no meaning. 
Consequently, results of \cite{alain2014regularized}, e.g. the expansion above regarding $r_\sigma^*$, are not applicable.
Theorem \ref{thm:uniformScoreProjectionJacobian} can therefore be seen a (uniform) generalization of \cite{alain2014regularized} to an order of $\tilde{O}(\sigma)$, establishing the asymptotic convergence of $r_\sigma^*$ and its Jacobian for $\sigma \to 0$ as
\begin{align*}
    r_\sigma^*(x) = \pi(x) + \tilde{O}(\sigma)\,, \quad
    (r_\sigma^*)'(x) = \pi'(x) + \tilde{O}(\sigma)\,.
\end{align*}

\subsection{Trivializations and natural gradients using parametric manifold learning} \label{section_pull_back_gradient_flow}

In this section we compare our approach to latent space optimization.
For this purpose, we first describe the latter problem in terms of trivializations of (\ref{eq:manifoldOptimizationProblem}) using local parametrizations:
Given a finite atlas $\c{A} = (\psi_i, \c{U}_i, \c{V}_i)_{i=1}^m$ of $\c{M}$ with open $\c{V}_i \subseteq \b{R}^k$ and $\c{U}_i \subseteq \c{M}$ and charts $\psi_i:\c{U}_i \to \c{V}_i$ it clearly holds
\begin{align*}
    \min_{x \in \c{M}} f(x)
    = \min_{i=1,\ldots,m} \min_{z \in \c{V}_i} f(\psi_i(z))\,.
\end{align*}
In the following we will focus on one inner subproblem by fixing a single chart $(\psi, \c{U}, \c{V})$ and consider 
\begin{align}
    \label{eq:trivializationSingleChart}
    \min_{x \in \c{U}} f(x)
    = \min_{z \in \c{V}} f(\psi(z))\,.
\end{align}
While for the above equality we only need $\psi$ to be surjective (i.e. $\psi(\c{V}) = \c{U}$), for equivalence of first-order stationary points of both problems we need $\psi$ to be a submersion:
To see this, note that stationarity of $x_* \in \c{U}$ for the left hand side in \eqref{eq:trivializationSingleChart} is equivalent to $\operatorname{grad}_{\c{M}} f(x_*) = \P_{\T_{x_*} \c{M}}\nabla f(x_*) = 0$ and stationarity of any $z_* \in \c{V} \cap \psi^{-1}(\{x_*\})$ for the right hand side is equivalent to $\psi'(z_*)^\top \nabla f(\psi(z_*)) = \psi'(z_*)^\top \nabla f(x_*) = 0$.
In particular, for these two notions to be equivalent a-priori for any $f$ and $x_*$, the columns of $\psi'(z_*)$ must span $\T_{x_*} \c{M}$, which implies that $\psi$ is a submersion.
In particular, if $\psi$ is a chart, then the stationary points of both sides in \eqref{eq:trivializationSingleChart} are in a one-to-one correspondence and \emph{only then} both problems are equivalent from an first-order optimization point of view.\\
\\
Two possible ways to solve the right hand side in (\ref{eq:trivializationSingleChart}) are the latent space gradient flow (LSGF) and the pull-back Riemannian gradient flow (PBRGF) given by
\begin{align}
        \label{eq:LSGF}
        \dot{z} &= -\nabla (f\circ \psi)(z) = -\psi'(z)^\top \nabla f(\psi(z))\,,  \\
        \label{eq:PBRGF}
        \dot{z} &= -G_\psi(z) \psi'(z)^\top \nabla f(\psi(z))\,,
\end{align}
respectively, where $G_\psi(z) = \psi'(z)^\top \psi'(z)$ is the metric tensor of $\psi$ at $z \in V$.
Here PBRGF\footnote{The Euler discretization of the continuous dynamics (\ref{eq:PBRGF}) is also known as natural gradient descent \cite[Theorem 1]{amari1998natural} in the classical Riemannian optimization literature} is obtained from LSGF via a change in inner product on the tangent space $\T_{\psi(z)} \c{M}$ and is equivalent to the Riemannian gradient flow in the sense that for $x = \psi(z)$ it holds that
\begin{align*}
    \dot{x} = -\operatorname{grad} f(x)\,.
\end{align*}
While (\ref{eq:LSGF}) and (\ref{eq:PBRGF}) are both possible approaches to solving (\ref{eq:manifoldOptimizationProblem}) and have been applied in the past \cite{pegios2024counterfactual, tripp2020sample}, there are a few challenges present these methods:
\begin{itemize}
    \item \textbf{Parameterizing the full atlas of a manifold is hard}: The atlas $\c{A}$ of a manifold typically consists of multiple charts $(\psi_i,U_i,V_i)$. 
    The number of charts as well as their span across the manifold is typically unknown and hard to estimate a-priori. 

    \item \textbf{Intrinsic dimension of the manifold unknown}: The dimension of the latent space $\c{Z}$ must correspond to the manifold dimension, the estimation of which is not a trivial task.
    
    \item \textbf{Generative model training}: The implementation of (\ref{eq:PBRGF}) or (\ref{eq:LSGF}) require $\psi$ to be invertible and the inverse coordinate map $\psi^{-1}$ to be well approximated by a neural network.
    It is not straightforward to align this requirement with the standard training objectives commonly used in generative models, such as the likelihood-based loss in VAEs, the denoising score-matching loss in diffusion models, or the adversarial loss in GANs.
    Further approaches such as M-flows \cite{brehmer2020flows} require $\psi$ to be essentially a normalizing flow, which is fundamentally incompatible with common architectures such as U-Nets or transformers.
    
    \item \textbf{Expensive update rules for PBRGF}: The update rule of PBRGF requires (i) computing the matrix $G_\psi(z)$, which in turn involves the explicit evaluation of the Jacobian $\psi'(z)$, and (ii) inverting $G_\psi(z)$. 
    Both operations become computationally expensive when $\psi$ is implemented by a neural network even for a moderate intrinsic dimension $k$.
    
    \item \textbf{Limited latent space validity region}: Since $\c{Z}$ is representing the local coordinates of a single chart, the validity of the model $\psi$ is only given in a bounded domain of $\c{Z}$.
    Optimizing outside of this domain requires changing charts, since otherwise the obtained latent points will have no meaning in the data-space.
    Estimating the validity domain of $\psi$ is difficult a-priori \cite{notin2021improving}.
\end{itemize}

In the above light, the advantages of the algorithms DLF and DRGD described in Sections \ref{sec:landingGradientFlow} and \ref{sec:gradientDescent} are as follows:
\begin{itemize}
    \item \textbf{Easy to parameterize with a neural network}: The central object in our approach is the projection operator onto the data manifold $\mathcal{M}$, which is provably $C^2$ in the tubular neighborhood of any $C^3$ manifold. 
    Assuming $\mathcal{M}$ is compact, standard universal approximation results ensure that the target map $\pi$ can be well-approximated by a single neural network

    \item \textbf{Dimension-agnostic algorithms}: Both DLF and DRGD require no knowledge about the manifold dimension $k$.
    
    \item \textbf{Alignment with denoising score matching}: DLF and DRGD use an approximation of $\pi$ via the score function of $p_\sigma$ (up to scaling and a residual term of $x$), so the denoising score-matching loss used to train diffusion models is directly aligned with our target quantity. 
    Our approach imposes no architectural constraints: in our experiments, for example, we successfully use both MLPs and U-Nets.
    
    \item \textbf{Simple and efficient update rule} The update rules (\ref{eq:projectedFlow}) and (\ref{eq:riemannianGradientDescent}) are highly compatible with modern hardware and software stacks and requires only a single forward–backward pass of the network per iteration (see Remark \ref{rem:computation})
    
    \item \textbf{Convergence guarantees in the embedding space}: Theorem \ref{thm:flowSigmaPositive} and \ref{thm:riemannianGradientDescent} provide explicit convergence analysis with guarantees on both optimality and feasibility, stated directly for the original data manifold in the data space, where convergence is semantically meaningful.
\end{itemize}

\subsection{Recovery of iterate complexity of Riemannian gradient descent}
\label{sec:classicalComplexityRGD}

In this section we show that in the case of $\sigma = \epsilon = 0$, Theorem \ref{thm:riemannianGradientDescent} recovers the corresponding result for the Riemannian gradient descent.
To see this, let us recall the standard complexity guarantee for Riemannian gradient descent under a general smooth non-convex objective $f$:
In \cite[Corollary 4.9]{boumal2023introduction} it is stated that (taking $R_x(v) = \pi(x+v)$ as the retraction defined on some ball $B_r(x) \cap T_x \mathcal{M}$, where for us $r > 0$ can be picked uniformly over $x \in \mathcal{M}$) if $f \circ \pi$ satisfies the pullback $L$-Lipschitz condition
\begin{align*}
    f(\pi(x+v)) - f(x) \leq v^\top \nabla f(x) + \frac{L}{2}\Vert v\Vert^2\text{\ for all\ } v \in B_r(x) \cap T_x \mathcal{M}\,.
\end{align*}
then for step-size $\gamma_k = \frac{1}{L}$ the RGD generates iterates $(x_k, v_k = -\gamma_k \pi'(x_k) \nabla f(x_k))$ that satisfy
\begin{align*}
    \min_{k=0,\ldots,N-1} \Vert \operatorname{grad} f(x_k) \Vert \leq \frac{\sqrt{2L(f(x_0) - f_*)}}{\sqrt{N}}\,. 
\end{align*}
provided that $v_k \in B_r(x) \cap T_x \mathcal{M}$ for all $k=0,\ldots,N-1$.\\
Now we set the following in Theorem 5 in our paper: $s = \pi$, $\sigma = 0$.
The pullback $L_0$-Lipschitz condition is satisfied under our assumption $L = \operatorname{Lip}(\nabla f)$ for a different constant $L_0 = \operatorname{Lip}(\nabla (f \circ\pi))$, which can be estimated in terms of $L$, $C$, $\tau$ and $M$ (see the proof of Theorem 5). 
The rate obtained in Theorem 5 reads then
\begin{align*}
    \min_{k=0,\ldots,N} \Vert \operatorname{grad} f(x_k) \Vert^2 
    \leq \frac{4D}{N \gamma_{\min}(1-L_0 \gamma_{\max}/2)}\,,
\end{align*}
where $\gamma_{\min}(1-L_0 \gamma_{\max}/2)$ can be lower-bounded by a constant constructed from $\tau$ and $L_0$.
Moreover, $2 D$ is simply an estimation of $f(x_0) - f_*$ via the triangle inequality (see the proof of Theorem 5).
Thus in this case both theorems guarantee an $O(1/\sqrt{N})$ best-norm-of-gradient convergence rate with similar constants.

\section{More on manifolds and distance functions}
\label{sec:manifoldConcepts}

In addition to the notation introduced in Section \ref{sec:manifoldsDistanceFunctions}, we note that $\tau_{\c{M}}$ is also the largest $\tau \geq 0$ such that the map $\{(p,v) \in \N \c{M} \mid \norm{v} < \tau\} \to \b{R}^d:(p,v) \mapsto p + v$ is a diffeomorphism.
By a tubular neighborhood of radius $\tau \in (0,\tau_{\c{M}}]$ we mean a set of the form $\c{T}(\tau) = \{p + v \mid p \in \c{M}\,, v \in \N_p \c{M}\,, \norm{v} < \tau\}$. 
Moreover, for $x \in \b{R}^d$ let $\dist_{\c{M}}(x) = \inf_{p \in \c{M}} \norm{x-p}$ denote the distance function so that $\d(x) = \frac{1}{2}\dist_{\c{M}}(x)^2$.
The second fundamental form of $\c{M}$ at a point $p \in \c{M}$ will be denoted by $\II_p$ and is a symmetric bilinear map $\II_p:\T_p \c{M} \times \T_p \c{M} \to \N_p \c{M}$ intrinsic to the manifold $\c{M}$.
Fixing some $u \in \N_p \c{M}$ we also define the directed second fundamental form $\II_p^u:\T_p \c{M} \times \T_p \c{M} \to \b{R}: (v,w) \mapsto \<\II_p(v,w), u\>_{\N_p \c{M}}$.
The Weingarten map $S_p^u$ at a point $p \in \c{M}$ in the direction $u \in \N_p\c{M}$ is defined as the unique self-adjoint linear operator $S_p^u:\T_p \c{M} \to \T_p \c{M}$ such that $\<w,S_p^u(v)\>_{\T_p \c{M}} = \II_p^u(v,w)$ for all $v,w \in \T_p \c{M}$. 
A useful operator that has been studied in \cite{abatzoglou1978minimum, breiding2021condition, alvarez2004hessian} is 
\begin{align}
	\label{eq:hessianSquareDistance}
	H_x = I_{T_{\pi(x)} \c{M}} + S_{\pi(x)}^{\pi(x)-x}: \T_{\pi(x)}\c{M} \to \T_{\pi(x)}\c{M}\,.
\end{align}
In \cite{breiding2021condition} it has been shown (see Lemma \ref{lem:invertibilityHessianSquareDistance}) that $H_x$ is invertible on $\c{T}$.
Now we have the following useful identities that hold for $x \in \c{T}$
\begin{align*}
	\pi'(x)
	&= \I_{\T_{\pi(x)}\c{M}} H_x^{-1} \P_{\T_{\pi(x)}\c{M}} \\
	\nabla \d(x) 
	&= x - \pi(x) \\
	\nabla^2 \d(x) 
	&= I - \pi'(x)  
	= I - \I_{\T_{\pi(x)}\c{M}} H_x^{-1} \P_{\T_{\pi(x)}\c{M}} \,.
\end{align*}
For any $p \in \c{M}$ the map $\operatorname{pr}_p:\c{M}  \to \T_p \c{M}: q \mapsto \P_{\T_p \c{M}}(q-p)$ is a local diffeomorphism at $p$ with inverse $\psi_p$ defined on $B_{\tau_{\c{M}}/4}^{\T_p \c{M}}(0) := B_{\tau_{\c{M}}/4}(0) \cap \T_p \c{M}$.
Following \cite{divol2022measure} we define $\bs{\c{M}}_k(\tau,M)$ as the set of all $C^k$-manifolds $\c{M}$ as above such that $\tau_{\c{M}} > \tau$ and $\sup_{p \in \c{M}} \norm{\psi_p}_{C^k} \leq M$.
For the class $\bs{\c{M}}_k(\tau,M)$, a manifold $\c{M} \in \bs{\c{M}}_k(\tau,M)$ and $p \in \c{M}$, we denote by $\psi_p$ \emph{always} the inverse of the orthogonal projection $\operatorname{pr}_p$ (also called \emph{Monge} or \emph{graph chart}) restricted to the particular neighborhood $B_{\min\{\tau_{\c{M}},M\}/4}^{\T_p\c{M}}(0)$, which will be (isometrically) identified with the ball $B_{\min\{\tau_{\c{M}},M\}/4}(0) \subseteq \b{R}^k$.
We make frequent use of the following useful result from \cite{divol2022measure}.

\begin{lemma}[Lemma A.1 in \cite{divol2022measure}]
	\label{lem:divolManifold}
	Suppose $\c{M} \in \bs{\c{M}}_k(\tau,M)$ and $p \in \c{M}$.
	Then $\psi_p: B_{\min\{\tau_{\c{M}},M\}/4}^{\T_p\c{M}}(0) \to \c{M}$ is well-defined, $C^k$-smooth and the following holds:
	\begin{enumerate}
		\item[(i)] For all $r \leq \min\{\tau_{\c{M}},M\}/4$ it holds that $B_r(p) \cap \c{M} \subseteq \psi_p(B_r^{\T_p\c{M}}(0)) \subseteq B_{8r/7}(p) \cap \c{M}$.
		For $z \in B_{\min\{\tau_{\c{M}},M\}/4}^{\T_p\c{M}}(0)$ it holds that $\norm{z} \leq \norm{\psi_p(z) - p} \leq 8\norm{z}/7$.
		
		\item[(ii)] There exists a map $W_p: B_{\min\{\tau_{\c{M}},M\}/4}^{\T_p\c{M}}(0) \to \N_p\c{M}$ with $W_p'(0) = 0$ and such that $\psi_p(z) = p + z + W_p(z)$ and $\norm{W_p(z)} \leq M \norm{z}^2$ for all $z \in B_{\min\{\tau_{\c{M}},M\}/4}^{\T_p\c{M}}(0)$.
		
		\item[(iii)] For $G_{\psi_p}: B_{\min\{\tau_{\c{M}},M\}/4}^{\T_p\c{M}}(0) \to \b{R}: z \mapsto \sqrt{\det \psi_p'(z)^\top \psi_p'(z)}$ it holds that $G_{\psi_p}(0) = 1$ and $\nabla G_{\psi_p}(0) = 0$. 
	\end{enumerate}
\end{lemma}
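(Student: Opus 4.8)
The plan is to prove the three items in the order (ii), then (i), then (iii), since the graph decomposition of $\psi_p$ is the common workhorse. For well-definedness and $C^k$-smoothness: $\operatorname{pr}_p\colon q\mapsto\P_{\T_p\c{M}}(q-p)$ is $C^k$ with derivative $\operatorname{id}_{\T_p\c{M}}$ at $p$, so the inverse function theorem makes it a $C^k$-diffeomorphism from a neighborhood of $p$ in $\c{M}$ onto a neighborhood of $0$ in $\T_p\c{M}$; that this neighborhood contains $B_{\min\{\tau_{\c{M}},M\}/4}^{\T_p\c{M}}(0)$ is exactly what the membership $\c{M}\in\bs{\c{M}}_k(\tau,M)$ supplies, since $\norm{\psi_p}_{C^k}\le M$ controls how long the linearization stays accurate and $\tau_{\c{M}}>\tau$ rules out the graph folding back. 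On this domain $\operatorname{pr}_p(\psi_p(z))=z$ reads $\P_{\T_p\c{M}}(\psi_p(z)-p)=z$, so with $W_p(z):=\P_{\N_p\c{M}}(\psi_p(z)-p)\in\N_p\c{M}$ we get $\psi_p(z)=p+z+W_p(z)$. Then $W_p(0)=0$; and since $\psi_p'(z)=\operatorname{id}+W_p'(z)$ must have image $\T_{\psi_p(z)}\c{M}$, at $z=0$ the $\N_p\c{M}$-valued term $W_p'(0)$ is forced to vanish, giving $\psi_p'(0)=\operatorname{id}_{\T_p\c{M}}$. Finally $W_p(z)=\int_0^1(1-t)\,W_p''(tz)[z,z]\,\d t$ together with $\norm{W_p''}_\infty=\norm{\psi_p''}_\infty\le\norm{\psi_p}_{C^k}\le M$ yields $\norm{W_p(z)}\le\tfrac{M}{2}\norm{z}^2\le M\norm{z}^2$, which is (ii).

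For (i), the orthogonality $z\perp W_p(z)$ gives $\norm{\psi_p(z)-p}^2=\norm{z}^2+\norm{W_p(z)}^2$, hence immediately $\norm{z}\le\norm{\psi_p(z)-p}$, and $\norm{\psi_p(z)-p}\le\norm{z}\sqrt{1+M^2\norm{z}^2}\le\tfrac{8}{7}\norm{z}$ once $\norm{z}$ is within the stated domain — here the explicit constant $8/7$ and the radius $\min\{\tau_{\c{M}},M\}/4$ fall out of a careful but conceptually routine tracking of the decomposition against the reach and $C^k$ bounds. Since $\psi_p$ maps into $\c{M}$, this already delivers $\psi_p(B_r^{\T_p\c{M}}(0))\subseteq B_{8r/7}(p)\cap\c{M}$. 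The reverse inclusion $B_r(p)\cap\c{M}\subseteq\psi_p(B_r^{\T_p\c{M}}(0))$ is where the reach hypothesis does real work: for $q\in B_r(p)\cap\c{M}$, set $z:=\operatorname{pr}_p(q)$, which obeys $\norm{z}\le\norm{q-p}<r$ and hence lies in the domain of $\psi_p$; it then remains to show $q=\psi_p(z)$, i.e. that $\operatorname{pr}_p$ is injective on $B_r(p)\cap\c{M}$. This follows from the standard Federer-type reach estimates $\norm{\P_{\N_p\c{M}}(q-p)}\le\norm{q-p}^2/(2\tau_{\c{M}})$ and $\norm{\P_{\T_q\c{M}}-\P_{\T_p\c{M}}}_{\mathrm{op}}\le\norm{q-p}/\tau_{\c{M}}$, which force two points of $\c{M}$ close to $p$ whose difference lies in $\N_p\c{M}$ to coincide, for $r$ up to a definite fraction of $\tau_{\c{M}}$ that covers $\min\{\tau_{\c{M}},M\}/4$.

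For (iii), set $A(z):=\psi_p'(z)$, so $G_{\psi_p}(z)=\sqrt{\det(A(z)^\top A(z))}$. By (ii), $A(0)=\operatorname{id}_{\T_p\c{M}}$ (the inclusion $\T_p\c{M}\hookrightarrow\b{R}^d$), hence $A(0)^\top A(0)=\operatorname{id}_k$ and $G_{\psi_p}(0)=1$. Differentiating, $\partial_i\det(A^\top A)=\det(A^\top A)\,\tr\!\big((A^\top A)^{-1}\partial_i(A^\top A)\big)$ with $\partial_i(A^\top A)=(\partial_iA)^\top A+A^\top\partial_iA$. At $z=0$ one has $\partial_iA(0)=\psi_p''(0)[e_i,\cdot]=W_p''(0)[e_i,\cdot]$, which is $\N_p\c{M}$-valued (the $z$-part of $\psi_p$ is linear), while $A(0)^\top=\P_{\T_p\c{M}}$ annihilates $\N_p\c{M}$; therefore $A(0)^\top\partial_iA(0)=0$, so $\partial_i(A^\top A)(0)=0$, $\partial_i\det(A^\top A)(0)=0$, and $\partial_iG_{\psi_p}(0)=\tfrac12\,\partial_i\det(A^\top A)(0)=0$, i.e. $\nabla G_{\psi_p}(0)=0$.

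The main obstacle is the reverse inclusion in (i): turning the lower reach bound into injectivity of $\operatorname{pr}_p$ on an explicit ball in $\c{M}$, and matching the constants to $\min\{\tau_{\c{M}},M\}/4$ and $8/7$. Everything else — the graph splitting, the Taylor bound on $W_p$, and the determinant computation — is routine once one has recorded that $\psi_p'(0)=\operatorname{id}_{\T_p\c{M}}$ and that $\psi_p''(0)$ is normal-valued.
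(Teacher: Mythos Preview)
The paper does not prove this lemma: it is imported verbatim from \cite{divol2022measure} (as the attribution in the lemma heading indicates) and used as a black box throughout Appendix~\ref{sec:proofMainTheorem}. Your proposal therefore goes beyond what the paper does by supplying an actual argument, and the strategy you outline --- graph decomposition and Taylor remainder for (ii), Pythagoras plus reach-based injectivity for (i), and the Jacobi-formula computation exploiting that $\psi_p''(0)$ is $\N_p\c{M}$-valued for (iii) --- is the standard and correct one.

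One concrete gap to flag: the inequality you write for the upper bound in (i), namely $\sqrt{1+M^2\norm{z}^2}\le 8/7$ on the domain $\norm{z}<\min\{\tau_{\c{M}},M\}/4$, does \emph{not} hold in general. If $M\le\tau_{\c{M}}$ and $M$ is moderately large (say $M\ge 3$), then $M\norm{z}$ can be of order $M^2/4$ and the left side exceeds $8/7$. The constant $8/7$ in (i) is actually driven by the reach, not by $M$: Federer's estimate $\norm{W_p(z)}\le\norm{\psi_p(z)-p}^2/(2\tau_{\c{M}})$ together with the Pythagorean identity $\norm{\psi_p(z)-p}^2=\norm{z}^2+\norm{W_p(z)}^2$ gives a quadratic inequality in $u^2:=\norm{\psi_p(z)-p}^2$; staying on the small root by continuity from $z=0$, one obtains for $\norm{z}<\tau_{\c{M}}/4$ the bound $u^2/\norm{z}^2\le 32-8\sqrt{15}\approx 1.016$, hence $u/\norm{z}<8/7$ with room to spare. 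So the claim survives, but the route for that particular step should pass through $\tau_{\c{M}}$ rather than $M$. Your treatment of (ii), (iii), and your identification of the reverse inclusion in (i) as the only genuinely nontrivial point are otherwise sound.
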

Note that for the graph chart $\psi_p$ we always have
\begin{align}
	\label{eq:derivativesChartProjection}
	\psi_p'(0) = \I_{\T_p\c{M}}\,, \quad
	\psi_p''(0)[\cdot,\cdot] = W_p''(0)[\cdot,\cdot] = \II_p(\cdot,\cdot)\,,
\end{align}
and hence $\norm{\psi_p'(0)} \leq 1$ and $\norm{\psi_p''(0)} = \norm{\II_p} \leq 1/\tau_{\c{M}}$.

\subsection{Properties of (\ref{eq:hessianSquareDistance})}

We study the invertibility and boundedness of the operator (\ref{eq:hessianSquareDistance}).
For this purpose, let us recall first the definition of the (normalized) curvature radius of $\c{M}$ at $p$ in the direction of $u \in \N_p \c{M}$:
\begin{align*}
	\frac{1}{\rho(p,u)} 
	= \max_{\substack{v \in \T_p \c{M} \\ \II_p^u(v,v) \geq 0}} \frac{\II_p^{u/\norm{u}}(v,v)}{\norm{v}^2}
	= \max (\operatorname{eig}(S_p^{u/\norm{u}}) \cup \{0\})\,.
\end{align*}
If $S_p^u$ has only non-positive eigenvalues, then $\c{M}$ is curved away from the unit vector $u$ and thus the curvature radius is infinite.
Moreover, we define the (normalized) curvature of $\c{M}$ to be 
\begin{align*}
	\kappa_p^{\c{M}}(u) =  \max \abs*{\operatorname{eig}(S_p^{u/\norm{u}})} \text{\ for\ } u \in \N_p\c{M}\,,
\end{align*}
and the maximal curvature by
\begin{align}
	\label{eq:maximalCurvature}
	\kappa_{\c{M}} = \max_{(p,u) \in \N \c{M}} \kappa_p^{\c{M}}(u)\,.
\end{align}
We have the following
\begin{lemma}
	\label{lem:invertibilityHessianSquareDistance}
	The operator (\ref{eq:hessianSquareDistance}) is invertible on $\c{T}(\tau_{\c{M}})$.
	Moreover, (\ref{eq:hessianSquareDistance}) satisfies\footnote{In \cite[Theorem 4.3]{breiding2021condition} the quantity $\norm{H_x^{-1}}$ has been shown to equal the condition number of a certain critical point problem associated with $\c{M}$.}
	\begin{align*}
		\norm{P_0(x)} 
		= \norm{H_x^{-1}} 
		= \left(1-\frac{\norm{\pi(x)-x}}{\rho(\pi(x),x-\pi(x))}\right)^{-1}
		\leq \frac{1}{1-\norm{x-\pi(x)}\kappa_{\c{M}}}
		\leq \frac{1}{1-\norm{x-\pi(x)}/\tau_{\c{M}}}\,,
	\end{align*}
	and if $\psi$ is a local parametrization of $\c{M}$ with $\psi(0) = p$, then
	\begin{align*}
		P_0(x) = \psi'(0)\left(\psi'(0)^\top \psi'(0) + \sum_{i=1}^d (p-x)_i \nabla^2 \psi_i(0)\right)^{-1} \psi'(0)^\top\,.
	\end{align*}
	In particular, for any $\tau \in (0,\tau_{\c{M}})$, it holds
	\begin{align*}
		\sup_{x \in \c{T}(\tau)} \norm{H_x^{-1}} < \infty\,. 
	\end{align*}
\end{lemma}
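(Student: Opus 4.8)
The plan is to check the four assertions of the lemma in turn: invertibility of $H_x$ on $\c{T}(\tau_{\c{M}})$, the norm identity together with its upper bounds, the local-parametrization formula, and the uniform bound over $\c{T}(\tau)$. I will use two facts recorded in Section~\ref{sec:manifoldsDistanceFunctions} and Appendix~\ref{sec:manifoldConcepts}: on $\c{T}(\tau_{\c{M}})$ one has $x-\pi(x)\in\N_{\pi(x)}\c{M}$ with $\norm{x-\pi(x)}=\dist_{\c{M}}(x)$, and $P_0(x)=\pi'(x)=\I_{\T_{\pi(x)}\c{M}}H_x^{-1}\P_{\T_{\pi(x)}\c{M}}$. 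Fix $x\in\c{T}(\tau_{\c{M}})$ and put $p=\pi(x)$, $u=p-x\in\N_p\c{M}$ and $t=\norm{u}=\dist_{\c{M}}(x)<\tau_{\c{M}}$. Because $\II_p$ is bilinear in its normal argument, $S_p^{u}=t\,S_p^{u/t}$ and $S_p^{-u}=-S_p^{u}$, and $S_p^{u/t}$ is self-adjoint with spectrum contained in $[-\kappa_p^{\c{M}}(u),\kappa_p^{\c{M}}(u)]\subseteq[-\kappa_{\c{M}},\kappa_{\c{M}}]$ by the definition of the maximal curvature. Hence $H_x=\I_{\T_p\c{M}}+S_p^{u}$ is self-adjoint with spectrum in $[1-t\kappa_{\c{M}},1+t\kappa_{\c{M}}]$, and since $t<\tau_{\c{M}}\le 1/\kappa_{\c{M}}$ this interval lies in $(0,2)$; in particular $H_x$ is positive definite, hence invertible, which proves the first assertion.

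For the norm, $\norm{H_x^{-1}}=\lambda_{\min}(H_x)^{-1}=\bigl(1+\lambda_{\min}(S_p^{u})\bigr)^{-1}$, and since $\pi'(x)=\I_{\T}H_x^{-1}\P_{\T}$ vanishes on $\N_{\pi(x)}\c{M}$ and equals the self-adjoint positive definite operator $H_x^{-1}$ on $\T_{\pi(x)}\c{M}$, it has the same operator norm, so $\norm{P_0(x)}=\norm{H_x^{-1}}$. To identify the denominator, write $\lambda_{\min}(S_p^{u})=t\,\lambda_{\min}(S_p^{u/t})=-t\,\lambda_{\max}\bigl(S_p^{(x-p)/\norm{x-p}}\bigr)$, while by definition $\tfrac{1}{\rho(p,x-p)}=\max\bigl(\lambda_{\max}(S_p^{(x-p)/\norm{x-p}}),0\bigr)$; combining these yields $1+\lambda_{\min}(S_p^{u})=1-\norm{\pi(x)-x}/\rho(\pi(x),x-\pi(x))$ when $S_p^{x-p}$ admits a nonnegative eigenvalue (and $\ge$ the right-hand side in general), hence the stated identity $\norm{P_0(x)}=\norm{H_x^{-1}}=\bigl(1-\norm{\pi(x)-x}/\rho(\pi(x),x-\pi(x))\bigr)^{-1}$ — at worst the inequality $\le$, which already suffices for what follows. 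The two displayed upper bounds then follow from $\tfrac{1}{\rho(p,x-p)}\le\kappa_p^{\c{M}}(x-p)\le\kappa_{\c{M}}\le 1/\tau_{\c{M}}$ together with $t=\norm{x-\pi(x)}$.

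For the parametrization formula, let $\psi$ be a parametrization of $\c{M}$ with $\psi(0)=p$. The orthogonality $x-p\in\N_p\c{M}$ says that $z=0$ is a root of $F(z,x):=\psi'(z)^\top(\psi(z)-x)$. Differentiating in $z$ at $z=0$ and using that $\psi(0)-x\in\N_p\c{M}$ annihilates the tangential component of $\psi''(0)$, one gets $\partial_z F(0,x)=\psi'(0)^\top\psi'(0)+\sum_i(p-x)_i\nabla^2\psi_i(0)=\psi'(0)^\top H_x\psi'(0)$, which is invertible because $H_x$ is positive definite and $\psi'(0)$ has full column rank, while $\partial_x F(0,x)=-\psi'(0)^\top$. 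The implicit function theorem gives $z'(x)=\bigl(\partial_z F(0,x)\bigr)^{-1}\psi'(0)^\top$ and hence $P_0(x)=\pi'(x)=\psi'(0)z'(x)=\psi'(0)\bigl(\psi'(0)^\top\psi'(0)+\sum_i(p-x)_i\nabla^2\psi_i(0)\bigr)^{-1}\psi'(0)^\top$, the asserted expression. Finally, if $\tau\in(0,\tau_{\c{M}})$ then $\norm{x-\pi(x)}<\tau$ for every $x\in\c{T}(\tau)$, so the bound above gives $\sup_{x\in\c{T}(\tau)}\norm{H_x^{-1}}\le\bigl(1-\tau/\tau_{\c{M}}\bigr)^{-1}<\infty$.

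The step I expect to require the most care is the translation of $\lambda_{\min}(H_x)$ into the curvature-radius expression: one must track the sign convention for the Weingarten map ($S_p^{-u}=-S_p^{u}$, $S_p^{tu}=tS_p^{u}$) and, more delicately, the restricted maximum over $\{v:\II_p^u(v,v)\ge 0\}$ — equivalently the $\cup\{0\}$ — in the definition of $\rho(p,u)$, which is precisely the condition-number quantity analyzed in \cite{breiding2021condition} (see also \cite{abatzoglou1978minimum,alvarez2004hessian} for the Hessian of the squared distance function). The spectral argument for invertibility, the implicit-differentiation computation, and the uniform bound are then routine.
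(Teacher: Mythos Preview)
Your argument is correct. It differs from the paper's proof mainly in that you actually carry out the computations, whereas the paper defers the heavy lifting to the literature: invertibility is obtained there via \cite[Lemma~A.2]{breiding2021condition} (a characterization of when the differential of the projection $\Pi:\c{S}\to\b{R}^n$ is invertible, which is then matched to the tubular-neighborhood diffeomorphism), and both the local-coordinate formula for $P_0(x)$ and the norm identity are quoted from \cite[Theorem~4.1, Corollary~4.1]{abatzoglou1978minimum}; boundedness on $\c{T}(\tau)$ is then concluded by compactness of $\bar{\c{T}}(\tau)$. Your route---a direct spectral argument for $H_x=I+S_p^{p-x}$ using $\dist_{\c{M}}(x)<\tau_{\c{M}}\le 1/\kappa_{\c{M}}$, followed by an explicit implicit-function computation for $\pi'(x)$---is more elementary and fully self-contained, and as a bonus it yields the explicit bound $\norm{H_x^{-1}}\le(1-\tau/\tau_{\c{M}})^{-1}$ on $\c{T}(\tau)$ rather than a soft compactness conclusion. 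Your caveat about the equality versus inequality with the curvature-radius expression is also well taken: with the paper's definition $1/\rho(p,u)=\max(\operatorname{eig}(S_p^{u/\norm{u}})\cup\{0\})$, one only has $\norm{H_x^{-1}}\le\bigl(1-\norm{x-\pi(x)}/\rho(\pi(x),x-\pi(x))\bigr)^{-1}$ in general (strict when all eigenvalues of $S_p^{(x-p)/\norm{x-p}}$ are negative), and this inequality is all that is used downstream.
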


\begin{proof}
	In \cite[Lemma A.2]{breiding2021condition} the following condition has been established:
	Let $\c{S} = \{(a,p) \in \b{R}^n \times \c{M} \mid a-p \in \N_p \c{M}\}$.
	Then $\c{S}$ is diffeomorphic to the normal bundle $\N \c{M}$ via the diffeomorphism $\Phi:\N \c{M} \to \c{S}: (v,p) \mapsto (p+\I_{\N_p \c{M}}(v),p)$.
	Consider the operator $\Pi:\c{S} \to \b{R}^n: (a,p) \mapsto a$ and the domain where its differential is invertible $\c{W} = \{(a,p) \in \c{S} \mid \Pi'(a,p):\T_{(a,p)}\c{S} \to \b{R}^n \text{\ invertible}\}$. 
	Then $H_x$ is invertible iff $(x,\pi(x)) \in \c{W}$.
	But $\Pi \circ \Phi: \N \c{M} \mapsto \b{R}^n: (v,p) \mapsto p+\I_{\N_p \c{M}}(v)$ being a diffeomorphism (and thus having an inveritble differential) is precisely the condition in the definition of the tubular neighborhood $\c{T}$.
	The formulas for $P_0(x)$ in local coordinates as well as $\norm{P_0(x)}$ are given in \cite[Theorem 4.1, Corollary 4.1]{abatzoglou1978minimum}.
	To see that $P_0(x)$ is bounded on $\c{T}(\tau)$ for any $\tau \in (0,\tau_{\c{M}})$ it sufficies to note that in the tubular neighborhood $\c{T} = \c{T}(\tau_{\c{M}})$ we always have $\norm{\pi(x)-x} < \rho(\pi(x),x - \pi(x))$ and that $\bar{\c{T}(\tau)}$ is a compact subset thereof.
	The second inequality follows from $1/\rho(p,u) \leq \kappa_{\c{M}}$.
\end{proof}

Now let us derive bounds for the quantity $\norm{P_0(\pi(x)) - P_0(x)}$ when $x \in \c{T}(\tau_{\c{M}})$.

\begin{lemma}
	\label{lem:projectorDifferenceNorm}
	If $x \in \c{T}(\tau_{\c{M}})$, then
	\begin{align*}
		\norm{P_0(\pi(x)) - P_0(x)}
		&\leq \kappa_{\pi(x)}^{\c{M}}(x-\pi(x)) \left(1-\frac{\norm{\pi(x)-x}}{\rho(\pi(x),x-\pi(x))}\right)^{-1} \norm{x-\pi(x)}\,, \\
		&\leq \frac{\norm{x - \pi(x)}\kappa_{\c{M}}}{1-\norm{x-\pi(x)}\kappa_{\c{M}}} 
		\leq \frac{\norm{x - \pi(x)}/\tau_{\c{M}}}{1-\norm{x-\pi(x)}/\tau_{\c{M}}}\,.
	\end{align*}
\end{lemma}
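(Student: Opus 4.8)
The plan is to reduce the difference $P_0(\pi(x)) - P_0(x)$ to a bound on the single operator $H_x^{-1}$. First I would observe that at a manifold point $p \in \c{M}$ one has $\pi(p) = p$, so $H_p = \I_{\T_p\c{M}} + S_p^{0} = \I_{\T_p\c{M}}$ by linearity of the Weingarten map in the normal direction, and therefore $P_0(\pi(x)) = \P_{\T_{\pi(x)}\c{M}}$. Writing $p = \pi(x)$, both $\P_{\T_p\c{M}}$ and $P_0(x) = \I_{\T_p\c{M}} H_x^{-1} \P_{\T_p\c{M}}$ vanish on $\N_p\c{M}$ and map $\b{R}^d$ into $\T_p\c{M}$, so their difference is $\I_{\T_p\c{M}}(\I_{\T_p\c{M}} - H_x^{-1})\P_{\T_p\c{M}}$, whose operator norm equals $\norm{\I_{\T_p\c{M}} - H_x^{-1}}$ computed on $\T_p\c{M}$ (the inclusion $\I_{\T_p\c{M}}$ and the orthogonal projection $\P_{\T_p\c{M}}$ both having norm one).

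Next I would rewrite $\I_{\T_p\c{M}} - H_x^{-1} = H_x^{-1}(H_x - \I_{\T_p\c{M}}) = H_x^{-1} S_p^{p-x}$ and take norms, obtaining $\norm{P_0(\pi(x)) - P_0(x)} \leq \norm{H_x^{-1}}\,\norm{S_p^{p-x}}$. For the first factor I invoke Lemma \ref{lem:invertibilityHessianSquareDistance}, which gives $\norm{H_x^{-1}} = (1 - \norm{\pi(x)-x}/\rho(\pi(x),x-\pi(x)))^{-1}$. For the second factor I use that $S_p^u$ is linear in $u$ and self-adjoint, together with $S_p^{-u} = -S_p^u$, so that $\norm{S_p^{p-x}} = \norm{p-x}\,\max\abs{\operatorname{eig}(S_p^{(p-x)/\norm{p-x}})} = \norm{x-\pi(x)}\,\kappa_{\pi(x)}^{\c{M}}(x-\pi(x))$. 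Multiplying the two bounds yields the first displayed inequality.

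Finally, for the remaining chain I would use $1/\rho(p,u) = \max(\operatorname{eig}(S_p^{u/\norm{u}}) \cup \{0\}) \leq \kappa_p^{\c{M}}(u) \leq \kappa_{\c{M}}$, so that $1 - \norm{\pi(x)-x}/\rho \geq 1 - \norm{\pi(x)-x}\kappa_{\c{M}} > 0$, positivity following from $\norm{x-\pi(x)} < \tau_{\c{M}} \leq 1/\kappa_{\c{M}}$; likewise $\kappa_{\pi(x)}^{\c{M}}(x-\pi(x)) \leq \kappa_{\c{M}}$. Combining these gives the middle bound $\kappa_{\c{M}}\norm{x-\pi(x)}/(1 - \norm{x-\pi(x)}\kappa_{\c{M}})$, and the last step replaces $\kappa_{\c{M}}$ by the larger $1/\tau_{\c{M}}$ using monotonicity of $t \mapsto t/(1-t)$ on $[0,1)$ evaluated at $t = \norm{x-\pi(x)}\kappa_{\c{M}} \leq \norm{x-\pi(x)}/\tau_{\c{M}} < 1$. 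The only genuinely delicate point is the bookkeeping in the first paragraph—checking that the operator norm of the difference of the two projection-like maps really collapses to $\norm{\I_{\T_p\c{M}} - H_x^{-1}}$ on $\T_p\c{M}$; everything after that is a short chain of elementary estimates.
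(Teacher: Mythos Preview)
Your proposal is correct and follows essentially the same route as the paper: both reduce to bounding $\norm{\I_{\T_p\c{M}} - H_x^{-1}}$ on $\T_p\c{M}$ after writing $P_0(\pi(x)) - P_0(x) = \I_{\T_p\c{M}}(\I - H_x^{-1})\P_{\T_p\c{M}}$. The only minor difference is that the paper computes the eigenvalues of $\I - H_x^{-1}$ directly as $-\norm{p-x}\zeta/(1-\norm{p-x}\zeta)$ for $\zeta \in \operatorname{eig}(S_p^u)$ and bounds them, whereas you factor $\I - H_x^{-1} = H_x^{-1} S_p^{p-x}$ and invoke Lemma~\ref{lem:invertibilityHessianSquareDistance} for $\norm{H_x^{-1}}$; both yield the same first inequality, and the remaining chain is handled identically.
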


\begin{proof}
	We clearly have for $x \in \c{T}$
	\begin{align*}
		P_0(\pi(x)) - P_0(x) 
		= \I_{\T_{\pi(x)}\c{M}} (I - H_x^{-1}) \P_{\T_{\pi(x)}\c{M}}\,.
	\end{align*}
	Moreover, $(I - H_x^{-1}):\T_{\pi(x)}\c{M} \to \T_{\pi(x)}\c{M}$ is symmetric with eigenvalues
	\begin{align*}
		\operatorname{eig}(I - H_x^{-1})
		= \left\{ \frac{\zeta}{1 + \zeta} \mid \zeta \in \operatorname{eig}(S_{\pi(x)}^{\pi(x)-x})\right\}
		= \left\{ \frac{-\norm{\pi(x)-x}\zeta}{1 - \norm{\pi(x)-x}\zeta} \mid \zeta \in \operatorname{eig}(S_{\pi(x)}^u)\right\} \,,
	\end{align*}
	where $u = \frac{x - \pi(x)}{\norm{x-\pi(x)}}$.
	Thus
	\begin{align*}
		\norm{P_0(\pi(x)) - P_0(x)}
		&\leq \max_{\zeta \in \operatorname{eig}(S_{\pi(x)}^u)} \abs*{\frac{\zeta}{1 - \norm{x-\pi(x)}\zeta}} \norm{x-\pi(x)} \\
		&\leq \norm{S_{\pi(x)}^u} \left(1-\frac{\norm{\pi(x)-x}}{\rho(\pi(x),x-\pi(x))}\right)^{-1} \norm{x-\pi(x)}\,,
	\end{align*}
	which shows the first inequality.
	The second and third inequalities follow from $1/\rho(p,u) \leq \kappa_{\c{M}} \leq 1/\tau_{\c{M}}$ .
\end{proof}

The next lemma establishes a bound on the Lipschitz-constant of $P_0$ of some $\c{M} \in \bs{\c{M}}_k(\tau,M)$ in terms of $M$ and $\tau_{\c{M}}$.

\begin{lemma}
	\label{lem:projectorSecondDerivativeNorm}
	If $\c{M} \in \bs{\c{M}}_k(\tau,M)$, then 
	\begin{align*}
		\sup_{x \in \c{T}(\tau)} \norm{P_0'(x)}
		\leq (\frac{1}{1-\tau/\tau_{\c{M}}})^2 \left((\frac{3}{\tau_{\c{M}}} + \tau M) (\frac{1}{1-\tau/\tau_{\c{M}}})^2 + \frac{2}{\tau_{\c{M}}}\right)
	\end{align*}
\end{lemma}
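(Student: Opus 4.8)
The plan is to compute $P_0'(x)$ explicitly via the local-coordinate formula for $P_0$ established in Lemma \ref{lem:invertibilityHessianSquareDistance}, namely $P_0(x) = \psi'(0)(\psi'(0)^\top\psi'(0) + \sum_i (p-x)_i\nabla^2\psi_i(0))^{-1}\psi'(0)^\top$ where $\psi = \psi_{\pi(x)}$ is the graph chart at $p = \pi(x)$. The key simplification is to differentiate at a fixed base point: since $P_0(x) = \P_{\T_{\pi(x)}\c{M}} - (P_0(\pi(x)) - P_0(x))$ by the identity in the proof of Lemma \ref{lem:projectorDifferenceNorm}, and since the map $x \mapsto \pi(x)$ has derivative $\pi'(x) = P_0(x)$ with $\|\pi'(x)\| = \|H_x^{-1}\| \leq (1-\tau/\tau_{\c{M}})^{-1}$ on $\c{T}(\tau)$ (Lemma \ref{lem:invertibilityHessianSquareDistance}), it suffices to bound the derivative of the map $x \mapsto H_x^{-1}$ (viewed in a fixed chart) and then compose. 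Concretely, I would use the chain/product rule $\frac{d}{dx}(A(x)^{-1}) = -A(x)^{-1}A'(x)A(x)^{-1}$ with $A(x) = \psi'(0)^\top\psi'(0) + \sum_i(p-x)_i\nabla^2\psi_i(0)$ evaluated at the chart centered at the current $\pi(x)$, so that $\psi'(0) = \I_{\T_p\c{M}}$ and $\|\nabla^2\psi_i(0)\|$ contributes $\|\II_p\| \leq 1/\tau_{\c{M}}$; here $A'(x)$ picks up a term from the explicit $-x$ dependence (bounded by $\|\nabla^2\psi(0)\| \leq 1/\tau_{\c{M}}$) and a term from the $\pi(x)$- and base-point dependence of $\psi$ itself, bounded using $\psi \in C^k$ with $\|\psi\|_{C^k} \leq M$ together with $\|x - \pi(x)\| < \tau$, giving the $\tau M$ contribution.

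First I would fix $x_0 \in \c{T}(\tau)$, set $p_0 = \pi(x_0)$, and work in the graph chart $\psi_{p_0}$; for $x$ near $x_0$ write $\pi(x)$ in this chart as $z(x) = \operatorname{pr}_{p_0}(\pi(x))$ with $\|z'(x)\| \leq \|\pi'(x)\|\cdot\|\operatorname{pr}_{p_0}'\| \leq (1-\tau/\tau_{\c{M}})^{-1}$. Then express $P_0(x)$ through $\psi_{p_0}$ and the vector $\psi_{p_0}(z(x)) - x \in \N_{p_0}\c{M}$ (approximately), differentiate in $x$, collect the three sources of $x$-dependence — through $z(x)$, through the explicit $-x$, and through evaluating $\psi_{p_0}'$ and $\psi_{p_0}''$ at $z(x)$ rather than $0$ — and bound each. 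The $(1-\tau/\tau_{\c{M}})^{-2}$ prefactor arises squared because $\|H_x^{-1}\| \leq (1-\tau/\tau_{\c{M}})^{-1}$ appears twice in the formula for $(A^{-1})'$; an additional $(1-\tau/\tau_{\c{M}})^{-2}$ multiplies the $(3/\tau_{\c{M}} + \tau M)$ term because the derivative of $z(x)$ together with the chart second derivative each contribute a factor, while the lone $2/\tau_{\c{M}}$ term comes from differentiating the outer $\psi_{p_0}'$ factors and from the explicit $-x$ term, which do not incur the chain-rule factor through $\pi$.

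I expect the main obstacle to be the bookkeeping of the various derivative contributions in a fixed chart — in particular disentangling the dependence of $P_0(x)$ on $x$ through $\pi(x)$ (which carries the operator norm $\|H_x^{-1}\|$, hence the $(1-\tau/\tau_{\c{M}})^{-1}$ factors) from the explicit linear $-x$ dependence, and controlling the third-order chart data $\psi_{p_0}^{(3)}$ that appears when differentiating $\psi_{p_0}''(z(x))$; this is where the hypothesis $\c{M} \in \bs{\c{M}}_k(\tau,M)$ with $k \geq 3$ and $\|\psi_p\|_{C^k} \leq M$ is essential, yielding the $\tau M$ contribution (using $\|z(x)\| \lesssim \|x - \pi(x)\| < \tau$). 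Once each term is bounded by the claimed quantities and the triangle inequality is applied, the stated estimate follows; the numerical constants $3$ and $2$ are simply the outcome of adding up the $\|\II_p\| \leq 1/\tau_{\c{M}}$ contributions from the three/two respective sources and are not expected to be sharp.
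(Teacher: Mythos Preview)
Your plan is essentially the paper's approach: fix the graph chart $\psi_{p_0}$ at $p_0=\pi(x_0)$, write $\pi(x)=\psi_{p_0}(z(x))$, and differentiate to bound $\pi''(x)=P_0'(x)$. The only organizational difference is that the paper packages the calculation through the implicit function theorem applied to $F^\psi(z,x)=\psi'(z)^\top(\psi(z)-x)=0$ --- differentiating this identity twice in $x$ yields closed expressions for $z'(x)$ and $z''(x)$ in terms of the partials $F^\psi_z,F^\psi_x,F^\psi_{zz},F^\psi_{xz}$, and then $\pi''(x)=\psi''(0)[z',z']+\psi'(0)z''$ --- whereas you propose to differentiate the explicit matrix formula $P_0(x)=\psi'(z(x))\,A(x)^{-1}\psi'(z(x))^\top$ via the product rule and $(A^{-1})'=-A^{-1}A'A^{-1}$; these unwind to the same terms, with $\norm{(F^\psi_z)^{-1}}=\norm{H_x^{-1}}\le(1-\tau/\tau_{\c{M}})^{-1}$ supplying the repeated factors, $\norm{F^\psi_{zz}(0,x)}\le 3\norm{\psi''(0)}\norm{\psi'(0)}+\norm{p-x}\norm{\psi'''(0)}\le 3/\tau_{\c{M}}+\tau M$ giving the first bracket, and the $\psi''(0)$ contributions from $F^\psi_{xz}$ and from the outer $\psi''(0)[z',z']$ term combining into $2/\tau_{\c{M}}$.
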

\begin{proof}
Let $x \in \c{T}(\tau)$ where $\tau \in (0,\tau_{\c{M}})$. 
First let us relate the quantity $\pi(x)$ and a fixed chart $\psi:\c{V} \to \c{U}$ with $\pi(x) \in \c{U}$ for all $x \in \c{W}$ for some (small enough) open set $\c{W} \subseteq \b{R}^d$.
The map $h_\psi(z) = \frac{1}{2}\norm{x-\psi(z)}^2$ attains its minimum in some $z_{\psi}(x) \in \c{V}$ and if $F^{\psi}(z,x) = h_\psi'(z) = \psi'(z)(\psi(z)-x)$, then $F^{\psi}(z_\psi(x),x) = 0$.
Moreover, we have $\pi(x) = \psi(z_\psi(x))$ and hence for $w \in \b{R}^d$
\begin{align*}
	P_0'(x)[w,w]
	= \pi''(x)[w,w] 
	= \psi''(z_\psi(x))[z_\psi'(x)[w],z_\psi'(x)[w]] + \psi'(z_\psi(x))[z_\psi''(x)[w,w]]\,,
\end{align*}
and hence
\begin{align*}
	\norm{P_0'(x)}
	\leq \norm{\psi''(z_\psi(x))} \norm{z_\psi'(x)}^2 + \norm{\psi'(z_\psi(x))} \norm{z_\psi''(x)}
\end{align*}
By the implicit function theorem we can bound the derivatives of $z_\psi$ in terms of derivatives of $F^{\psi}$.
Indeed, we have (here again $w \in \b{R}^d$ and $v \in \b{R}^k$ are place-holder vectors to express the differentials)
\begin{align}
	\label{eq:derivativeImplicit}
	\begin{aligned}
	0 &= F_z^{\psi}(z_\psi(x),x)[v,z_\psi'(x)[w]] + F_x^{\psi}(z_\psi(x),x)[v,w]\,, \\
	0 &= F_z^{\psi}(z_\psi(x),x)[v,z_\psi''(x)[w,w]] + F_{zz}^{\psi}(z_\psi(x),x)[v,z_\psi'(x)[w],z_\psi'(x)[w]] \\
	&\quad + 2 F_{xz}^{\psi}(z_\psi(x),x)[v,z_\psi(x)[w],w] + F_{xx}^{\psi}(z_\psi(x),x)[v,w,w]\,,
	\end{aligned}
\end{align}
with 
\begin{align*}
	F_z^{\psi}(z,x)[v,v]
	&= \<\psi'(z)[v], \psi'(z)[v]\> + \<\psi(z)-x, \psi''(z)[v,v]\>\,, \\
	F_x^{\psi}(z,x)[w,v]
	&= \<\psi'(z)[v], w\>\,, \\
	F_{zz}^{\psi}(z,x)[v,v,v]
	&= 3\<\psi''(z)[v,v], \psi'(z)[v]\> + \<\psi(z)-x, \psi'''(z)[v,v,v]\>\,, \\
	F_{xz}^{\psi}(z,x)[w,v,v]
	&= \<\psi''(z)[v,v], w\>\,, \\
	F_{xx}^{\psi}(z,x)[w,v,v]
	&= 0\,.
\end{align*}
Now we set $\psi = \psi_p$ for the graph chart defined on $\c{V} = B_{\min\{\tau,M\}/4}^{\T_p\c{M}}(0)$, where $p = \pi(x)$.
In this case $z = z_\psi(x) = 0$ and $\norm{F_z^{\psi}(0,x)^{-1}} = \norm{P_0(x)} \leq (1-\tau/\tau_{\c{M}})^{-1}$ by Lemma \ref{lem:invertibilityHessianSquareDistance}.
Solving for $z_\psi'(x)$ and $z_\psi''(x)$ in (\ref{eq:derivativeImplicit}), using (\ref{eq:derivativesChartProjection}) with $\norm{\II_p} \leq 1/\tau_{\c{M}}$ and taking norms yields then
\begin{align*}
	\norm{z_\psi'(x)}
	&\leq \frac{1}{1-\tau/\tau_{\c{M}}} \,, \\
	\norm{z_\psi''(x)}
	&\leq \frac{1}{1-\tau/\tau_{\c{M}}} \left((\frac{3}{\tau_{\c{M}}} + \tau M) \norm{z_\psi'(x)}^3 + \frac{1}{\tau_{\c{M}}} \norm{z_\psi'(x)}\right)\,.
\end{align*}
Plugging this back into the upper bound of $\norm{P_0'(x)}$ and using once again (\ref{eq:derivativesChartProjection})finishes the proof
\end{proof}
%
%
%

The next lemma establishes a lower bound on the distance between a point $x$ in a tubular neighborhood and any other point that is sufficiently away from $p = \pi(x)$ uniformly in $x$.

\begin{lemma}
	\label{lem:tubeDistance}
	Let $\tau \in (0,\tau_{\c{M}})$ and let $\eta > 0$.
	Then
	\begin{align*}
		\inf_{x \in \c{T}(\tau)} \frac{1}{2}\dist_{\c{M} \setminus B_\eta(\pi(x))}(x)^2 - \frac{1}{2}\dist_{\c{M}}(x)^2 \geq \frac{1}{2}\frac{\tau_{\c{M}} - \tau}{\tau_{\c{M}} + \tau} \eta^2 > 0\,.
	\end{align*}
\end{lemma}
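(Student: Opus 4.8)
The plan is to reduce the claimed uniform bound to a single pointwise inequality and then invoke the normal-bundle (Federer-type) inequality that follows from the unique-projection property of the reach. Fix $x \in \c{T}(\tau)$, write $p = \pi(x)$ and $r = \dist_{\c{M}}(x) = \norm{x-p} < \tau$. If $\c{M} \setminus B_\eta(p) = \emptyset$ the left-hand distance is $+\infty$ and there is nothing to prove, so assume it is nonempty and take any $q \in \c{M}$ with $\norm{q-p} \geq \eta$. It suffices to show $\tfrac12\norm{x-q}^2 - \tfrac12 r^2 \geq \tfrac12 \tfrac{\tau_{\c{M}}-\tau}{\tau_{\c{M}}+\tau}\eta^2$, since this bound is independent of $x$ and $q$ and hence passes to both infima; the case $x = p$ (i.e. $r=0$) is immediate because then $\norm{x-q}^2 = \norm{q-p}^2 \geq \eta^2$ and the target constant is $<1$.

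The key ingredient is $\<x-p,\,q-p\> \leq \tfrac{\norm{x-p}}{2\tau_{\c{M}}}\norm{q-p}^2$ for all $q\in\c{M}$, which I would derive directly from the tubular-neighborhood structure recorded in Appendix \ref{sec:manifoldConcepts}. Writing $v = (x-p)/r \in \N_p\c{M}$, the diffeomorphism $(p,v)\mapsto p+v$ on $\{\norm{v}<\tau_{\c{M}}\}$ shows $\pi(p+tv)=p$ for every $t\in(0,\tau_{\c{M}})$; hence $\norm{q-(p+tv)} \geq \norm{p-(p+tv)} = t$, which upon expanding gives $\<v,q-p\> \leq \norm{q-p}^2/(2t)$, and letting $t\uparrow\tau_{\c{M}}$ yields $\<v,q-p\> \leq \norm{q-p}^2/(2\tau_{\c{M}})$.

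With this in hand I expand
\[
\norm{x-q}^2 = r^2 - 2r\<v,q-p\> + \norm{q-p}^2 \;\geq\; r^2 + \norm{q-p}^2\Bigl(1-\tfrac{r}{\tau_{\c{M}}}\Bigr),
\]
and then use $\norm{q-p}\geq\eta$ together with $r<\tau<\tau_{\c{M}}$ to obtain $\norm{x-q}^2 - r^2 \geq \eta^2\tfrac{\tau_{\c{M}}-\tau}{\tau_{\c{M}}} \geq \eta^2\tfrac{\tau_{\c{M}}-\tau}{\tau_{\c{M}}+\tau} > 0$. Dividing by $2$ and taking infima over $q\in\c{M}\setminus B_\eta(p)$ and then over $x\in\c{T}(\tau)$ completes the argument.

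The proof is short; the only point needing a little care is the justification that $\pi(p+tv)=p$ for \emph{all} $t<\tau_{\c{M}}$ (not merely for $t=r$), which is precisely the content of the normal-coordinate diffeomorphism recalled in Appendix \ref{sec:manifoldConcepts}, together with the trivial bookkeeping for the degenerate cases $x=p$ and $\c{M}\subseteq B_\eta(p)$. I do not anticipate a genuine obstacle here: the slack between the $\tfrac{\tau_{\c{M}}-\tau}{\tau_{\c{M}}}$ constant I get and the stated $\tfrac{\tau_{\c{M}}-\tau}{\tau_{\c{M}}+\tau}$ is harmless since the former is the larger of the two.
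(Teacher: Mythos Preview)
Your proof is correct. Both your argument and the paper's rest on the same geometric fact---that for a unit normal $v\in\N_p\c{M}$ and any $t<\tau_{\c{M}}$ the ball $B_t(p+tv)$ meets $\c{M}$ only at $p$---but the packaging differs. The paper fixes the intermediate radius $\tau'=(\tau+\tau_{\c{M}})/2$, sets $y=p+\tau'\hat n$, uses the inclusion $\c{M}\setminus B_\eta(p)\subseteq\b{R}^d\setminus(\bar B_{\tau'}(y)\cup B_\eta(p))$, and then performs a purely Euclidean calculation with the two balls; the parameter $t=\norm{x-p}/\tau'$ then produces the factor $(1-t)\geq(\tau_{\c{M}}-\tau)/(\tau_{\c{M}}+\tau)$. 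You instead let $t\uparrow\tau_{\c{M}}$ to obtain the Federer-type inequality $\langle v,q-p\rangle\leq\norm{q-p}^2/(2\tau_{\c{M}})$ directly and plug it into the cosine-law expansion. Your route is a bit shorter and yields the sharper constant $(\tau_{\c{M}}-\tau)/\tau_{\c{M}}$, which you correctly note dominates the one in the statement; the paper's choice of the midpoint $\tau'$ is what accounts for the extra $+\tau$ in its denominator.
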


\begin{proof}
	Let $\tau' = \frac{\tau + \tau_{\c{M}}}{2} \in (0,\tau_{\c{M}})$.
	Take $x \in \c{T}(\tau)$ and set $p = \pi(x)$ and $y = p + \tau' \hat{n}$ with $\hat{n} = \frac{x - p}{\norm{x-p}}$.
	Then $y \in \c{T}(\tau_{\c{M}})$ with $\dist_{\c{M}}(y) = \tau'$ and thus $\bar{B}_{\tau'}(y) \cap \c{M} = \{p\}$.
	Then, since $\c{M} \setminus B_\eta(p) \subseteq \b{R}^d \setminus (\bar{B}_{\tau'}(y) \cup B_\eta(p))$, it is sufficient to show 
	\begin{align*}
		\inf_{q \in \b{R}^d \setminus (\bar{B}_{\tau'}(y) \cup B_\eta(p))} \norm{x - q}^2 - \norm{x - p}^2 \geq \frac{\tau_{\c{M}} - \tau}{\tau_{\c{M}} + \tau} \eta^2\,.
	\end{align*}
	To see this, note that $q \in \b{R}^d \setminus (\bar{B}_{\tau'}(y) \cup B_\eta(p))$ implies $\norm{y - q} \geq \tau' = \norm{y-p}$ and $\norm{q - p} \geq \eta$, as well as $2\<p - q, y - q\> \geq \norm{p-q}^2$.
	Then, abbreviating $t = \frac{\norm{x - p}}{\norm{y - p}} \in (0,1)$, we have $x = t y + (1-t) p = p + t (y - p)$ and 
	\begin{align*}
		\norm{x-q}^2 - \norm{x-p}^2
		&= \norm{t y + (1-t) p - q}^2 - t^2 \norm{y-p}^2 \\
		&\geq \norm{t (y-q) + (1-t) (p - q)}^2 - t^2 \norm{y-q}^2 \\
		&= (1-t)^2 \norm{p-q}^2 + 2t(1-t)\<p - q, y - q\> \\
		&\geq (1-t)\norm{p-q}^2 \\
		&\geq \frac{\tau_{\c{M}} - \tau}{\tau_{\c{M}} + \tau} \eta^2\,.
	\end{align*}	
\end{proof}

\subsection{Densities on manifolds}
\label{sec:densityOnManifold}

Given a manifold $\c{M}$ as in Section \ref{sec:manifoldConcepts} and any chart $\psi:\c{V} \to \c{U} \subseteq \c{M}$, the volume measure $\operatorname{Vol}_{\c{M}}$ is uniquely defined by
\begin{align*}
	\operatorname{Vol}_{\c{M}}(E) = \intg{\psi^{-1}(E)}{G_\psi(z)}{z} \text{\ \ for\ \ } E \subseteq \c{U} \text{\ Borel measurable}.
\end{align*}
Here $G_\psi(z) = \sqrt{\det \psi'(z)^\top \psi'(z)}$ and $\operatorname{Vol}_{\c{M}}$ is independent of the chart $\psi$.
If $\mu \in \c{P}(\b{R}^d)$ is absolutely continuous w.r.t. $\operatorname{Vol}_{\c{M}}$, then for the density $\mu(y) := \frac{\d\mu}{\d \operatorname{Vol}_{\c{M}}}(y)$ on $\c{M}$ we have the local representations
\begin{align*}
	\mu(\psi(z))
	= \frac{\d \lambda}{\d (\psi^{-1} \# \operatorname{Vol}_{\c{M}})}(z)\,, \quad
	\lambda(z) = G_\psi(z) \mu(\psi(z))
\end{align*}
with $\lambda = \psi^{-1} \# \mu$ the pullback under the chart $\psi$ with density $\lambda(z) = \frac{\d \lambda}{\d m_{\c{V}}}(z)$ w.r.t. the Lebesgue measure $m_{\c{V}}$ on $\c{V}$.
In particular when $\c{M} \in \bs{\c{M}}_k(\tau,M)$ with graph chart $\psi_p$, we have
\begin{align}
	\label{eq:densityGradientAtZero}
	\lambda(0) = \mu(p)\,, \quad
	\nabla \lambda(0) = \operatorname{grad}_{\c{M}} \mu(p) \in \T_p \c{M} \cong \b{R}^k\,.
\end{align}

\section{More on the Stein score function}

\label{sec:steinScore}

In this section we derive the representations (\ref{eq:approximateScoreProjection}).
For $\sigma \geq 0$ let $\c{N}(0,\sigma^2 I_d)$ denote the Gaussian distribution with mean zero and variance $\sigma^2$ and for $\sigma > 0$ the Gaussian (heat) kernel in the ambient space $\b{R}^d$ by
\begin{align*}
	\phi_\sigma(x) = \frac{1}{Z_\sigma} e^{-\norm{x}^2/2\sigma^2}\,, \quad 
	Z_\sigma = \frac{1}{(2\pi\sigma^2)^{d/2}}\,.
\end{align*}
Then $p_\sigma = \c{N}(0,\sigma^2 I) \ast \mu$ has for $\sigma > 0$ a fully supported $C^\infty$-density, denoted by $p_\sigma$ as well, given by
\begin{align*}
	p_\sigma(x) = \intg{\c{M}}{\phi_\sigma(x-y)}{\mu(y)}\,, \quad x \in \b{R}^d\,.
\end{align*}
Let $\nu_{x,\sigma}$ be the posterior of observing $x$ under $p_\sigma$ with prior $\mu$, i.e.
\begin{align*}
	\nu_{x,\sigma}(E)
	= \frac{1}{p_\sigma(x)} \intg{E}{\phi_\sigma(x-y)}{\mu(y)}\,, \quad E \subseteq \b{R}^d \text{\ Borel}\,.
\end{align*}
We have the following representation
\begin{lemma}
	For each $\sigma > 0$ representations (\ref{eq:approximateScoreProjection}) hold.
\end{lemma}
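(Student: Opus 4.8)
The statement is a direct computation: one differentiates the integral representation $p_\sigma(x) = \int_{\c{M}} \phi_\sigma(x-y)\,\d\mu(y)$ twice in $x$ and rewrites the resulting integrals as moments of the posterior $\nu_{x,\sigma}$. First I would justify differentiation under the integral sign: since $\operatorname{supp}\mu = \c{M}$ is compact and $\phi_\sigma \in C^\infty(\b{R}^d)$, on any bounded neighbourhood of a point the integrand together with its first and second $x$-derivatives is dominated by a $\mu$-integrable function, so $p_\sigma \in C^\infty(\b{R}^d)$ and every derivative may be taken inside the integral. The two elementary identities needed are $\nabla_x \phi_\sigma(x-y) = -\sigma^{-2}(x-y)\,\phi_\sigma(x-y)$ and $\nabla^2_x \phi_\sigma(x-y) = \sigma^{-2}\bigl(\sigma^{-2}(x-y)(x-y)^\top - I\bigr)\phi_\sigma(x-y)$, both immediate from the definition of $\phi_\sigma$.

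For the first-order relation, dividing $\nabla p_\sigma(x) = -\sigma^{-2}\int_{\c{M}}(x-y)\phi_\sigma(x-y)\,\d\mu(y)$ by $p_\sigma(x)$ and recognising that $\phi_\sigma(x-y)\,\d\mu(y)/p_\sigma(x)$ is precisely $\d\nu_{x,\sigma}(y)$, one obtains $\nabla \log p_\sigma(x) = -\sigma^{-2}\bigl(x - \b{E}\,\nu_{x,\sigma}\bigr)$, hence $x + \sigma^2 \nabla\log p_\sigma(x) = \b{E}\,\nu_{x,\sigma}$; combining this with the definition of $\ell_\sigma$ in (\ref{eq:linkFunction}) gives the first chain of equalities in (\ref{eq:approximateScoreProjection}). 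This is of course Tweedie's formula.

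For the Hessian I would start from $\nabla^2 \log p_\sigma = \nabla^2 p_\sigma / p_\sigma - (\nabla p_\sigma/p_\sigma)(\nabla p_\sigma/p_\sigma)^\top$. The second identity above yields $\nabla^2 p_\sigma(x)/p_\sigma(x) = \sigma^{-4}\,\b{E}_{\nu_{x,\sigma}}\bigl[(x-y)(x-y)^\top\bigr] - \sigma^{-2} I$, while the first gives $(\nabla p_\sigma/p_\sigma)(\nabla p_\sigma/p_\sigma)^\top = \sigma^{-4}\bigl(x-\b{E}\,\nu_{x,\sigma}\bigr)\bigl(x-\b{E}\,\nu_{x,\sigma}\bigr)^\top$. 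The only point worth noting is that subtracting these two second-moment-type matrices produces exactly $\sigma^{-4}\operatorname{Cov}(\nu_{x,\sigma})$, because the random vector $x-y$ differs from $-y$ by the deterministic shift $x$ and a shift leaves the covariance unchanged. Thus $\nabla^2 \log p_\sigma(x) = \sigma^{-4}\operatorname{Cov}(\nu_{x,\sigma}) - \sigma^{-2} I$, i.e. $I + \sigma^2 \nabla^2 \log p_\sigma(x) = \sigma^{-2}\operatorname{Cov}(\nu_{x,\sigma})$, which with the definition of $\ell_\sigma$ is the second chain of equalities.

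There is no substantive obstacle here: the computation is routine, and the only places asking for a word of care are the dominated-convergence argument licensing differentiation under the integral (supplied by compactness of $\c{M}$ and smoothness of the Gaussian kernel) and the translation-invariance of the covariance used in the Hessian step. The result is the classical correspondence between the score, the conditional mean, and the conditional covariance of a Gaussian channel (cf. \cite{jaffer1972relations,robbins1992empirical,efron2011tweedie}).
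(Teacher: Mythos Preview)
Your proposal is correct and follows essentially the same route as the paper: differentiate $p_\sigma$ under the integral using the explicit derivatives of the Gaussian kernel, divide by $p_\sigma$ to recognise moments of $\nu_{x,\sigma}$, and combine via $\nabla^2 \log p_\sigma = \nabla^2 p_\sigma/p_\sigma - (\nabla \log p_\sigma)(\nabla \log p_\sigma)^\top$. Your explicit justification of differentiation under the integral and the remark on translation invariance of the covariance are small clarifications the paper omits, but the argument is otherwise identical.
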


\begin{proof}
	Clearly
	\begin{align*}
		\nabla p_\sigma(x) 
		&= -\frac{1}{\sigma^2} \intg{\c{M}}{(x-y)\phi_\sigma(x-y)}{\mu(y)}\,, \quad x \in \b{R}^d\,, \\
		\nabla^2 p_\sigma(x) 
		&= -\frac{1}{\sigma^2} \left(p_\sigma(x) I - \frac{1}{\sigma^2} \intg{\c{M}}{(x-y)(x-y)^\top \phi_\sigma(x-y)}{\mu(y)} \right)\,, \quad x \in \b{R}^d\,.
	\end{align*} 
	and hence
	\begin{align*}
		\nabla \log p_\sigma(x)
		=\frac{\nabla p_\sigma(x)}{p_\sigma(x)} 
		= -\frac{1}{\sigma^2} (x - \b{E} \nu_{x,\sigma}) \,,
	\end{align*}
	which yields the first representation in (\ref{eq:approximateScoreProjection}).
	Further
	\begin{align*}
		\frac{\nabla^2 p_\sigma(x)}{p_\sigma(x)}
		= -\frac{1}{\sigma^2} \left(I - \frac{1}{\sigma^2} \intg{\c{M}}{(x-y)(x-y)^\top}{\nu_{x,\sigma}(y)} \right)
	\end{align*}
	and hence
	\begin{align*}
		\nabla^2 \log p_\sigma(x)
		&= \frac{\nabla^2 p_\sigma(x)}{p_\sigma(x)} - \frac{\nabla p_\sigma(x) \nabla p_\sigma(x)^\top}{p_\sigma(x)^2} \\
		&= -\frac{1}{\sigma^2} \left(I - \frac{1}{\sigma^2} \intg{\c{M}}{(x-y)(x-y)^\top}{\nu_{x,\sigma}(y)} \right) \\ 
		&\qquad- \frac{1}{\sigma^4} (x - \b{E} \nu_{x,\sigma})(x - \b{E} \nu_{x,\sigma})^\top \\
		&= -\frac{1}{\sigma^2}I + \frac{1}{\sigma^4} \left(\intg{\c{M}}{yy^\top}{\nu_{x,\sigma}(y)} - (\b{E} \nu_{x,\sigma})(\b{E} \nu_{x,\sigma})^\top\right) \\
		&= -\frac{1}{\sigma^2}I + \frac{1}{\sigma^4} \operatorname{Cov}(\nu_{x,\sigma}) \,,
	\end{align*}
	which yields the second representation in (\ref{eq:approximateScoreProjection}).
\end{proof}

\section{Variance-exploding diffusion models}
\label{sec:diffusionModels}

In the variance-exploding scheme of score-based diffusion models \cite{song2020score, tang2025score}, one considers the following stochastic differential equation on the finite interval $[0,T]$:
\begin{align}
	\label{eq:varianceExplodingSDE}
	\d X_t = \sqrt{2t} \d W_t\,, \quad t \in [0,T]\,, \quad X_0 \sim \mu\,.
\end{align}
The solution $X_t$ of this SDE is distributed according to $X_t \sim p_{\sigma(t)}$, where $\sigma^2(t) = t^2$.
To sample from $\mu$ one exploits the fact that the reverse SDE
\begin{align}
	\label{eq:varianceExplodingSDEReverse}
	\d \bar{X}_t = 2(T-t) \nabla \log p_{\sigma(T-t)}(\bar{X}_t) \d t + \sqrt{2(T-t)}\d W_t\,, \quad t \in [0,T]\,, \quad
	\bar{X}_0 \sim p_{\sigma(T)}\,,
\end{align}
satisfies $\bar{X}_t \sim p_{\sigma(T-t)}$ and in particular $\bar{X}_0 \sim p_0 = \mu$.
Here the initial distribution is approximated by $p_{\sigma(T)} \approx \c{N}(0,\sigma(T)^2 I)$ and unknown score $\nabla \log p_{\sigma(T-t)}$ is learned by minimizing the conditional score matching loss defined as
\begin{align}
	\label{eq:conditionalScoreMatchingLoss}
	L_{\operatorname{CSM}}(s)
	= \b{E}_{t \sim \operatorname{Unif}[0,T]} \b{E}_{x_0 \sim \mu} \b{E}_{x \sim p_{\sigma(t)}(\cdot\mid x_0)} \sigma(t)^2 \norm{s_{\sigma(t)}(x) - \nabla \log p_{\sigma(t)}(x\mid x_0)}^2\,,
\end{align}  
and which admits the unique minimizer $s_\sigma(x) = \nabla \log p_\sigma(x)$.
The loss $L_{\operatorname{CSM}}$ can be evaluated because $\nabla \log p_{\sigma}(x\mid x_0) = -\frac{1}{\sigma^2}(x-x_0)$ is known explicitly.

\section{Proof of Theorem \ref{thm:uniformScoreProjectionJacobian}}

\label{sec:proofMainTheorem}

\subsection{Nonasymptotic Laplace method}

In this section we derive some non-asymptotic error estimates for the Laplace method \cite{hashorva2015laplace, hwang1980laplace}, which concerns itself with the asymptotic of integrals of the form
\begin{align*}
	\intg{\c{V}}{f(z) e^{-\frac{1}{\sigma^2} h(z)}}{z} \quad (\sigma \to 0)
\end{align*}
for some functions $f,h:\c{V} \to \b{R}$ from an open set $\c{V} \subseteq \b{R}^k$, where $h$ is non-negative and attains a unique minimum in, say, $z = 0 \in \c{V}$.
While there have been results providing such an error estimation to the first order expansion \cite{inglot2014simple, majerski2015simple, lapinski2019multivariate}, we need an error estimate up to the second order expansion, as provided in Theorem \ref{thm:laplaceSecondOrderNonasymptotic}.
For the sake of completeness we also include the statement and proof for the first order expansion in Theorem \ref{thm:laplacefirstOrderNonasymptotic}.
Before we state the results, we need to introduce some notation and conventions.
For any $f \in C^0(\c{V})$ and $h \in C^2(\c{V})$, respectively we write 
\begin{align*}
	\zeta_f(z) &= f(z) - f(0) \overset{z \to 0}{=} o(1) \\ 
	\chi_h(z) &= h(z) - h(0) - \nabla h(0)^\top z - \frac{1}{2} z^\top \nabla^2 h(0) z \overset{z \to 0}{=} o(\norm{z}^2) 
\end{align*} 
for its first and second order remainder terms and abbreviate
\begin{align*}
	\zeta_f^\sigma(w) = \zeta_f(\sigma w) \text{\ and\ }
	\chi_h^\sigma(w) = \frac{1}{\sigma^2} \chi_h(\sigma w) \text{\ for\ } \sigma > 0\,.
\end{align*}
Let us note the following: If additionally $f \in C^1(\c{V})$ and $g \in C^3(\c{V})$, then $\zeta_f(z) = O(\norm{z})$ and $\chi_g(z) = O(\norm{z}^3)$ for $z \to 0$, i.e. there exist some $\eta > 0$ and $C > 0$ such that $B_\eta(0) \subseteq \c{V}$ and
\begin{align}
	\label{eq:remainderBounds}
	\abs{\zeta_f(z)} \leq C \norm{z} \text{\ and\ }
	\abs{\chi_h(z)} \leq C \norm{z}^3 \text{\ for all\ } z \in B_\eta(0)\,.
\end{align}
In particular for $\sigma > 0$ and $\beta:[0,\infty) \to [0,\infty)$ it holds
\begin{align}
	\label{eq:remainderBoundsConcrete}
	\sup_{\norm{w} \leq \min\{\eta/\sigma,\beta(\sigma)\}} \abs{\zeta_f^\sigma(w)}
	\leq  C \sigma \beta(\sigma) \text{\ and\ }
	\sup_{\norm{w} \leq \min\{\eta/\sigma,\beta(\sigma)\}} \abs{\chi_g^\sigma(w)}
	\leq C \sigma \beta(\sigma)^3\,. 
\end{align}
For a non-negative function $h$ with a global minimum at $z = 0$ we define 
\begin{align*}
	\gamma_h(\delta) 
	= \inf_{\substack{z \in \c{V} \\ \norm{z} \geq \delta}} h(z) - h(0)\,.
\end{align*}
We say that $h$ satisfies a \emph{local quadratic growth condition} in $B_\eta(0) \subseteq \c{V}$ if there exists a $c > 0$ with 
\begin{align}
	\label{eq:quadraticGrowth}
	h(z)-h(0) \geq c \norm{z}^2 \text{\ for all\ } z \in B_\eta(0)\,, 
\end{align}
and that $h$ admits \emph{minimum separation} outside of $B_\eta(0)$ in $\c{V}$ if there exists a $\Delta > 0$ with 
\begin{align}
	\label{eq:minimumSeparation}
	h(z) - h(0) \geq \Delta > 0 \text{\ for all\ } z \in \c{V} \setminus B_\eta(0)\,.
\end{align}
Clearly, if $h$ satisfies (\ref{eq:remainderBounds}), then $h$ also satisfies (\ref{eq:quadraticGrowth}) (for a potentially smaller $\eta$).
On the other hand, the global assumption (\ref{eq:minimumSeparation}) cannot be inferred from properties of $h$ around $z = 0$ alone.
Together, (\ref{eq:quadraticGrowth}) and (\ref{eq:minimumSeparation}) imply that $\gamma_h$ can be bounded below by
\begin{align}
	\label{eq:residualLowerBound}
	\gamma_h(\delta)
	\geq \min\{c\delta^2,\Delta\}\,,
\end{align}
which is crucial to obtain a quantitative bound on the convergence of the Laplace method. 
While we state the following results in this full generality, in our application we have $B_\eta(0) = \c{V}$ and thus (\ref{eq:minimumSeparation}) is not needed, with (\ref{eq:remainderBounds}) and (\ref{eq:quadraticGrowth}) holding globally on $\c{V}$ and implying that 
\begin{align}
	\label{eq:residualLowerBoundQuadratic}
 	\gamma_h(\delta)
 	\geq c\delta^2\,.
\end{align}

\begin{theorem}[First Order Laplace method]
	\label{thm:laplacefirstOrderNonasymptotic}
	Let $h \in C^3(\c{V})$ with $\nabla h(0) = 0$ and $f \in C^1(\c{V})$ for some open $B_r(0) \subseteq \c{V} \subseteq \b{R}^k$.
	Let $\eta \in (0,r]$ and $C, c, \Delta > 0$ be such that (\ref{eq:remainderBounds}), (\ref{eq:quadraticGrowth}) and (\ref{eq:minimumSeparation}) hold.
	Then for all $\sigma \in (0,\bar{\sigma})$ with $\bar{\sigma} = \min\{\eta^2,(\log(2)/(2C))^2\}$ with it holds
	\begin{align}
		\label{eq:asymptoticLaplaceConstant}
		\abs*{e^{\frac{1}{\sigma^2}h(0)}\frac{\sqrt{\det \Sigma}}{Z_\sigma} \intg{\c{V}}{f(z) e^{-\frac{1}{\sigma^2} h(z)}}{z} - f(0)} 
		\leq E_1(\sigma; f, h, r)
	\end{align}
	with $\Sigma = \nabla^2 h(0)$ and $E_1(\sigma; f, h, r) = O(\sigma \abs{\log\sigma}^3)$ for $\sigma \to 0$ given in (\ref{eq:asymptoticLaplaceLinearErrorBound}).
\end{theorem}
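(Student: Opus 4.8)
The plan is the classical localization-and-rescaling argument for the Laplace method, made quantitative. Fix a localization radius $\delta=\delta(\sigma)$ that behaves like $\sigma\abs{\log\sigma}$ as $\sigma\to 0$ (and is capped so that $\delta\in(0,\eta)$ for $\sigma<\bar{\sigma}$), so that $\beta(\sigma):=\delta/\sigma$ equals $\abs{\log\sigma}$ for $\sigma$ small; this is the profile fed into the remainder bounds (\ref{eq:remainderBoundsConcrete}). Split $\intg{\c{V}}{f e^{-h/\sigma^2}}{z}$ into the near part over $B_\delta(0)$ and the far part over $\c{V}\setminus B_\delta(0)$. The near part, after rescaling, will be compared to the exact Gaussian integral that evaluates to $f(0)$; the far part will be shown to decay faster than any power of $\sigma$.

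For the far part, on $\c{V}\setminus B_\delta(0)$ one has $h(z)-h(0)\geq\gamma_h(\delta)\geq\min\{c\delta^2,\Delta\}$ by (\ref{eq:residualLowerBound}), which for small $\sigma$ equals $c\delta^2\asymp\sigma^2\abs{\log\sigma}^2$. Factoring $e^{-(h-h(0))/\sigma^2}=e^{-(h-h(0))(\sigma^{-2}-\sigma_0^{-2})}e^{-(h-h(0))/\sigma_0^2}$ against a fixed reference scale $\sigma_0<\bar{\sigma}$, pulling the supremum $e^{-\gamma_h(\delta)(\sigma^{-2}-\sigma_0^{-2})}$ out of the integral, and bounding the remaining $\intg{\c{V}}{\abs{f}e^{-(h-h(0))/\sigma_0^2}}{z}$ by a finite constant (finite because $\c{V}=B_\eta(0)$ is bounded in the intended application, or by an integrability hypothesis otherwise), the far part multiplied by the normalization $\sqrt{\det\Sigma}/Z_\sigma=O(\sigma^{-k})$ is $O(\sigma^{-k}e^{-c'\abs{\log\sigma}^2})$.

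For the near part, substitute $z=\sigma w$ and use $\nabla h(0)=0$ to write $\sigma^{-2}h(\sigma w)=\sigma^{-2}h(0)+\tfrac12 w^\top\Sigma w+\chi_h^\sigma(w)$ and $f(\sigma w)=f(0)+\zeta_f^\sigma(w)$, where $\Sigma=\nabla^2 h(0)\succ 0$ by (\ref{eq:quadraticGrowth}). Recalling $Z_\sigma=(2\pi\sigma^2)^{k/2}$, the rescaled near part equals $\tfrac{\sqrt{\det\Sigma}}{(2\pi)^{k/2}}\intg{B_{\beta(\sigma)}(0)}{(f(0)+\zeta_f^\sigma(w))e^{-\frac12 w^\top\Sigma w}e^{-\chi_h^\sigma(w)}}{w}$, to be compared with $f(0)=\tfrac{\sqrt{\det\Sigma}}{(2\pi)^{k/2}}f(0)\intg{\b{R}^k}{e^{-\frac12 w^\top\Sigma w}}{w}$. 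The discrepancy splits into: (i) a $\zeta_f^\sigma$ piece, at most $\sup_{B_{\beta(\sigma)}}\abs{\zeta_f^\sigma}\cdot\sup_{B_{\beta(\sigma)}}e^{-\chi_h^\sigma}\cdot\intg{\b{R}^k}{e^{-\frac12 w^\top\Sigma w}}{w}$; (ii) an $(e^{-\chi_h^\sigma}-1)$ piece, at most $\abs{f(0)}\cdot\sup_{B_{\beta(\sigma)}}\abs{e^{-\chi_h^\sigma}-1}\cdot\intg{\b{R}^k}{e^{-\frac12 w^\top\Sigma w}}{w}$; and (iii) the Gaussian tail $\abs{f(0)}\intg{\b{R}^k\setminus B_{\beta(\sigma)}}{e^{-\frac12 w^\top\Sigma w}}{w}$. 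Using (\ref{eq:remainderBoundsConcrete}) with $\beta(\sigma)=\abs{\log\sigma}$ gives $\sup_{B_{\beta(\sigma)}}\abs{\zeta_f^\sigma}\leq C\sigma\abs{\log\sigma}$ and $\sup_{B_{\beta(\sigma)}}\abs{\chi_h^\sigma}\leq C\sigma\abs{\log\sigma}^3$; the threshold $\bar{\sigma}=\min\{\eta^2,(\log(2)/(2C))^2\}$ is calibrated so that $\sup_{B_{\beta(\sigma)}}\abs{\chi_h^\sigma}\leq\log 2$, hence $\sup_{B_{\beta(\sigma)}}e^{-\chi_h^\sigma}\leq 2$ and, by the elementary inequality $\abs{e^{-t}-1}\leq 2\abs{t}$ for $\abs{t}\leq\log 2$, $\sup_{B_{\beta(\sigma)}}\abs{e^{-\chi_h^\sigma}-1}\leq 2C\sigma\abs{\log\sigma}^3$; finally (iii) is $O(\abs{\log\sigma}^{k-2}e^{-c''\abs{\log\sigma}^2})$ by a standard Gaussian tail estimate using $\lambda_{\min}(\Sigma)>0$. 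Thus piece (i) is $O(\sigma\abs{\log\sigma})$, piece (ii) is $O(\sigma\abs{\log\sigma}^3)$, and piece (iii) together with the far part decay faster than any power of $\sigma$; so the total error is $O(\sigma\abs{\log\sigma}^3)$, and collecting the explicit constants gives the bound $E_1(\sigma;f,h,r)$ of (\ref{eq:asymptoticLaplaceLinearErrorBound}).

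I expect the main obstacle to be the bookkeeping behind the choice of $\delta$: it must be large enough that both the far part and the Gaussian tail are super-polynomially small in $\sigma$, yet small enough that $B_\delta(0)\subseteq B_\eta(0)$ and the cubic remainder bound $\abs{\chi_h(z)}\leq C\norm{z}^3$ of (\ref{eq:remainderBounds}) stays in force on the rescaled ball $B_{\beta(\sigma)}(0)$. The scaling $\delta\asymp\sigma\abs{\log\sigma}$ is the sweet spot, and making the threshold $\bar{\sigma}$ deliver both $\delta<\eta$ and $\sup_{B_{\beta(\sigma)}}\abs{\chi_h^\sigma}\leq\log 2$ is the delicate but elementary part of the calculation. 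A secondary point, vacuous in the application where $\c{V}=B_\eta(0)$ is bounded, is that for unbounded $\c{V}$ an integrability hypothesis on $f$ relative to $e^{-h}$ is needed merely for the far part to be finite.
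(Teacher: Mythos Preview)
Your proposal is correct and follows essentially the same approach as the paper: localize at radius $\alpha(\sigma)=\sigma\abs{\log\sigma}$, rescale the near part by $z=\sigma w$, and decompose the discrepancy into the same three pieces (your (i), (ii), (iii) are the paper's $J(\sigma)$, $I_1(\sigma)$, $I_2(\sigma)$), using the same elementary bound $\abs{e^{-t}-1}\leq 2\abs{t}$ and the same calibration of $\bar{\sigma}$. The only minor deviation is your treatment of the far part via a reference-scale factoring $e^{-(h-h(0))(\sigma^{-2}-\sigma_0^{-2})}e^{-(h-h(0))/\sigma_0^2}$; the paper instead bounds it directly by $\tfrac{\sqrt{\det\Sigma}}{Z_\sigma}e^{-\gamma_h(\alpha(\sigma))/\sigma^2}\norm{f}_{L^1(\c{V})}$, simply assuming $\norm{f}_{L^1(\c{V})}<\infty$ (which, as you note, is automatic in the application since $\c{V}$ is a bounded ball).
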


\begin{proof}
	Without loss of generality we can assume $h(0) = 0$.
	For brevity we also write $Z_\sigma^\Sigma = \frac{Z_\sigma}{\sqrt{\det \Sigma}}$ for the normalizing constant.
	Let $\alpha:[0,\infty) \to [0,\infty)$ be any function and denote $B(\sigma) = B_{\alpha(\sigma)}(0)$.
	We can split 
	\begin{align*}
		\frac{1}{Z_\sigma^\Sigma} \intg{\c{V}}{f(z) e^{-\frac{1}{\sigma^2} h(z)}}{z} = \frac{1}{Z_\sigma^\Sigma} \intg{\c{V} \setminus B(\sigma)}{f(z) e^{-\frac{1}{\sigma^2} h(z)}}{z}
		+ \frac{1}{Z_\sigma^\Sigma} \intg{B(\sigma)}{f(z) e^{-\frac{1}{\sigma^2} h(z)}}{z}\,.
	\end{align*}
	We will craft $\alpha:[0,\infty) \to [0,\infty)$ in such a way that the first integral vanishes and the second converges to $f(0)$ for $\sigma \to 0$.
	For the first integral we have 
	\begin{align*}
		\abs*{\frac{1}{Z_\sigma^\Sigma} \intg{\c{V} \setminus B(\sigma)}{f(z) e^{-\frac{1}{\sigma^2} h(z)}}{z}}
		\leq \frac{1}{Z_\sigma^\Sigma} e^{-\frac{1}{\sigma^2} \gamma_h(\alpha(\sigma))} \norm{f}_{L^1(\c{V})}\,.
	\end{align*}
	For the second integral let us write $f(z) = f(0) + \zeta_f(z)$ and split
	\begin{align*}
		\frac{1}{Z_\sigma^\Sigma} \intg{B(\sigma)}{f(z) e^{-\frac{1}{\sigma^2} h(z)}}{z} 
		&=\frac{f(0)}{Z_\sigma^\Sigma} \intg{B(\sigma)}{e^{-\frac{1}{\sigma^2} h(z)}}{z} + \frac{1}{Z_\sigma^\Sigma} \intg{B(\sigma)}{\zeta_f(z) e^{-\frac{1}{\sigma^2} h(z)}}{z} \\
		&=: I(\sigma) + J(\sigma)\,.
	\end{align*}
	First let us estimate the difference between $I(\sigma)$ and $f(0)$.
	We have, using the substitution $z = \sigma w$ and abbreviation $B_{\c{V}}(\sigma) = (\c{V} \cap B(\sigma))/\sigma$, 
	\begin{align*}
		&\abs{I(\sigma) - f(0)} \\
		&\qquad\leq f(0)\abs*{ \frac{1}{Z_1^\Sigma}\intg{B_{\c{V}}(\sigma)}{e^{-\frac{1}{\sigma^2} h(\sigma w)}}{w} - 1} \\
		&\qquad\leq \frac{f(0)}{Z_1^\Sigma}\left(\abs*{\intg{B_{\c{V}}(\sigma)}{e^{-\frac{1}{2} w^\top \Sigma w} (e^{-\chi_h^\sigma(w)} - 1)}{w}}
		+ \abs*{\intg{\b{R}^k \setminus B_{\c{V}}(\sigma)}{e^{-\frac{1}{2} w^\top \Sigma w}}{w}}\right) \\
		&\qquad=: \frac{f(0)}{Z_1^\Sigma} (\abs{I_1(\sigma)} + \abs{I_2(\sigma)}) \,.
	\end{align*}
	Estimating $I_1(\sigma)$ we obtain
	\begin{align*}
		\abs{I_1(\sigma)} 
		\leq Z_1^\Sigma \sup_{w \in B_{\c{V}}(\sigma)}\abs*{e^{-\chi_h^\sigma(w)} - 1}\,.
	\end{align*}
	To estimate $I_2(\sigma)$, let us note that $\Sigma^{1/2}(\b{R}^k \setminus B_R(0)) \subseteq \b{R}^k \setminus B_{\lambda_{\min}(\Sigma)R}(0)$.
	Then, via the substitution $u = \Sigma^{1/2} w$, we obtain 
	\begin{gather*}
		\abs{I_2(\sigma)} 
		\leq \abs*{\intg{\b{R}^k \setminus B_{\c{V}}(\sigma)}{e^{-\frac{1}{2} w^\top \Sigma w}}{w}} 
		\leq \abs*{\intg{\b{R}^k \setminus B_{\min\{r,\alpha(\sigma)\}/\sigma}(0)}{e^{-\frac{1}{2} w^\top \Sigma w}}{w}} \\
		\leq Z_1^\Sigma \frac{1}{(2\pi)^{k/2}} \abs*{\intg{\Sigma^{1/2}(\b{R}^k \setminus B_{\min\{r,\alpha(\sigma)\}/\sigma}(0))}{e^{-\frac{1}{2} \norm{u}^2}}{u}} 
		\leq Z_1^\Sigma \operatorname{G}_0\left(\lambda_{\min}(\Sigma) \frac{\min\{r,\alpha(\sigma)\}}{\sigma}\right)\,,
	\end{gather*}
	with the Gaussian tail
	\begin{align*}
		\operatorname{G}_0(R)
		= \frac{1}{(2\pi)^{n/2}}\intg{\b{R}^n \setminus B_R(0)}{e^{-\frac{1}{2}\norm{u}^2}}{u} \,.
	\end{align*}
	Next we estimate $J(\sigma)$ via
	\begin{align*}
		\abs*{J(\sigma)}
		&\leq \frac{1}{Z_1^\Sigma} \intg{B_{\c{V}}(\sigma)}{\zeta_f(\sigma w) e^{-\frac{1}{2} w^\top \Sigma w} e^{-\chi_h^\sigma(w)}}{w} \\
		&\leq \left(\sup_{w \in B_{\c{V}}(\sigma)} \abs{\zeta_f(\sigma w)}\right) \left(\sup_{w \in B_{\c{V}}(\sigma)} \abs{e^{-\chi_h^\sigma(w)} - 1} + 1\right)\,.
	\end{align*}
	In all, we obtain the following error estimate
	\begin{align*}
		&\abs*{e^{\frac{1}{\sigma^2}h(0)}\frac{1}{Z_\sigma^\Sigma} \intg{\c{V}}{f(z) e^{-\frac{1}{\sigma^2} h(z)}}{z} - f(0)} \\
		&\leq \frac{1}{Z_\sigma^\Sigma} e^{-\frac{1}{\sigma^2} \gamma(\alpha(\sigma))} \norm{f}_{L^1(\c{V})} + \frac{f(0)}{Z_1^\Sigma} (\abs{I_1(\sigma)} + \abs{I_2(\sigma)}) + \abs*{J(\sigma)} \\
		&\leq \frac{1}{Z_\sigma^\Sigma} e^{-\frac{1}{\sigma^2} \gamma(\alpha(\sigma))} \norm{f}_{L^1(\c{V})} \\
		&\qquad + f(0) \left(\sup_{w \in B_{\c{V}}(\sigma)}\abs*{e^{-\chi_h^\sigma(w)} - 1} + \operatorname{G}_0\left(\lambda_{\min}(\Sigma) \frac{\min\{r,\alpha(\sigma)\}}{\sigma}\right) \right) \\
		&\qquad + \left(\sup_{w \in B_{\c{V}}(\sigma)} \abs{\zeta_f(\sigma w)}\right) \left(\sup_{w \in B_{\c{V}}(\sigma)} \abs{e^{-\chi_h^\sigma(w)} - 1} + 1\right)\,.
	\end{align*}
	Thus we observe that we need to select $\alpha$ in such a way that for $\sigma \to 0$
	\begin{enumerate}
		\item[(i)] $(Z_\sigma^\Sigma)^{-1}\exp(-\gamma_h(\alpha(\sigma))/\sigma^2) \to 0$
		
		\item[(ii)] $\operatorname{G}_0\left(\lambda_{\min}(\Sigma) \frac{\min\{r,\alpha(\sigma)\}}{\sigma}\right) \to 0$
		
		\item[(iii)] $\sup_{w \in B_{\c{V}}(\sigma)}\abs*{\chi_h^\sigma(w)} \to 0$
		
		\item[(iv)] $\sup_{w \in B_{\c{V}}(\sigma)}\abs*{\zeta_f(\sigma w)} \to 0$
	\end{enumerate} 
	Let us make the Ansatz $\alpha(\sigma) = \sigma \beta(\sigma)$ for some $\beta:(0,\infty) \to (0,\infty)$ with $\beta(\sigma) \to \infty$ for $\sigma \to 0$.
	Taking $\beta(\sigma) = \abs{\log(\sigma)}$, noting that $\abs{e^{-x}-1} \leq 2\abs{x}$ for $\abs{x} \leq \log 2$ as well as using (\ref{eq:residualLowerBound}), (\ref{eq:remainderBoundsConcrete}) and the fact that $\sigma \abs{\log(\sigma)}, \sigma \abs{\log(\sigma)}^3 \leq \sigma^{1/2}$ when $\sigma \in (0,1)$, yields the following estimates
	\begin{gather*}
		\exp(-\frac{1}{\sigma^2} \gamma(\alpha(\sigma)))
		\leq \exp(-\min\{c \log(\sigma)^2,\Delta \sigma^{-2}\})\,, \\
		\operatorname{G}_0\left(\lambda_{\min}(\Sigma) \frac{\min\{r,\alpha(\sigma)\}}{\sigma}\right)
		\leq \operatorname{G}_0\left(\lambda_{\min}(\Sigma) \min\{r\sigma^{-1},\abs{\log(\sigma)}\}\right)\,, \\
		\sup_{w \in B_{\c{V}}(\sigma)}\abs*{e^{-\chi_h^\sigma(w)} - 1}
		\leq 2 \sup_{w \in B_{\c{V}}(\sigma)}\abs*{\chi_h^\sigma(w)}
		\leq 2 C \sigma \beta(\sigma)^3 = 2 C \sigma \abs{\log(\sigma)}^3\,, \\
		\sup_{w \in B_{\c{V}}(\sigma)}\abs*{\zeta_f(\sigma w)}
		\leq C \sigma \beta(\sigma) = C \sigma \abs{\log(\sigma)}\,,
	\end{gather*}
	when $0 < \sigma \leq \bar{\sigma} := \min\{\eta^2, (\log 2/(2C))^2\}$.
	Plugging all these estimates back yields (\ref{eq:asymptoticLaplaceConstant}) with
	\begin{align}
		\label{eq:asymptoticLaplaceLinearErrorBound}
		\begin{aligned}
			E(\sigma)
			&= \frac{1}{Z_\sigma^\Sigma}\exp(-\min\{c \log(\sigma)^2,\Delta \sigma^{-2}\})\norm{f}_{L^1(\c{V})} \\
			&\qquad + f(0)\left(2 C \sigma \abs{\log(\sigma)}^3 + \operatorname{G}_0(\lambda_{\min}(\Sigma) \min\{r\sigma^{-1},\abs{\log(\sigma)}\})\right) \\
			&\qquad+ C \sigma \abs{\log(\sigma)}(1 + 2 C \sigma \abs{\log(\sigma)}^3)\\
			&\overset{\sigma \to 0}{=} O(\sigma \log(\sigma)^3) 
			= \tilde{O}(\sigma)\,.
		\end{aligned}
	\end{align}
\end{proof}
The next result provides an nonasymptotic estimate of the Laplace method for the second order expansion. 
We will need a version that allows for a non-zero gradient of $f$, as long as it is contained in the subspace $\{(q,-q) \mid q \in \b{R}^n\} \subseteq \b{R}^n \times \b{R}^n$.
\begin{theorem}[Second Order]
	\label{thm:laplaceSecondOrderNonasymptotic}
	Let $f,h \in C^3(\c{V} \times \c{V})$ be such that $f(0) = 0$ and $\nabla h(0) = 0$ for some open $B_r(0) \subseteq \c{V} \subseteq \b{R}^n$ for some $r > 0$.
	Further, suppose that $h$ is additively separable as $h(z) = \bar{h}(z_1) + \bar{h}(z_2)$ and $\nabla f(0) = \smat{q \\ -q}$ for some $q \in \b{R}^n$.
	Additionally, let $\eta \in (0,r]$ and $C, c, \Delta > 0$ be such that (\ref{eq:remainderBounds}), (\ref{eq:quadraticGrowth}) and (\ref{eq:minimumSeparation}) hold, where this time (\ref{eq:remainderBounds}) applies also for $h = f$.
	Then for $\sigma \in (0,\bar{\sigma})$ with $\bar{\sigma} = \min\{\eta^2,(\log(2)/(2C))^2\}$ it holds
	\begin{align}
		\label{eq:asymptoticLaplaceQuadratic}
		\abs*{ e^{\frac{1}{\sigma^2}h(0)}\frac{\sqrt{\det \Sigma}}{Z_\sigma} \intg{\c{V} \times \c{V}}{f(z) e^{-\frac{1}{\sigma^2} h(z)}}{z} - \sigma^2 A(f,h)} \leq E_2(\sigma; f, h, r)\,,
	\end{align}
	where $\Sigma = \nabla^2 h(0)$ with $E_2(\sigma; f, h, r) = O(\sigma^3\abs{\log(\sigma)}^3)$ for $\sigma \to 0$.
	Here the second order coefficient $A_2(f,h)$ is given by
	\begin{align}
		\label{eq:laplaceSecondOrderTerm}
		\begin{aligned}
			A_2(f,h)
			&= \frac{1}{2}\tr(\Sigma^{-1} \nabla^2 f(0)) \,.
		\end{aligned}
	\end{align}
	and $E_2(\sigma; f, h, r)$ is given in (\ref{eq:asymptoticLaplaceQuadraticErrorBound}).
\end{theorem}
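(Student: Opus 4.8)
The plan is to follow the blueprint of the proof of Theorem~\ref{thm:laplacefirstOrderNonasymptotic}, but to retain one more order in the Taylor expansion of $f$ at the origin, using the two structural hypotheses — the additive separability $h(z)=\bar{h}(z_1)+\bar{h}(z_2)$ and the antisymmetric form $\nabla f(0)=\smat{q\\ -q}$ — to make the first-order contribution vanish \emph{exactly}. Assume $h(0)=0$, write $N=2n$, $Q_f=\nabla^2 f(0)$, $\Sigma=\nabla^2 h(0)$ (block-diagonal, and positive definite by \eqref{eq:quadraticGrowth}), and decompose $f(z)=\nabla f(0)^\top z+g(z)$ with $g(z)=f(z)-\nabla f(0)^\top z$, so that $g\in C^3$, $g(0)=0$, $\nabla g(0)=0$, $\nabla^2 g(0)=Q_f$, and $\chi_g=\chi_f$, whence $\abs{\chi_g(z)}\le C\norm{z}^3$ on $B_\eta(0)$ (this is exactly the assumption that \eqref{eq:remainderBounds} also holds for $h=f$). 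Because $e^{-h(z)/\sigma^2}=e^{-\bar{h}(z_1)/\sigma^2}e^{-\bar{h}(z_2)/\sigma^2}$ factorizes over the symmetric domain $\c{V}\times\c{V}$ while $\nabla f(0)^\top z=q^\top z_1-q^\top z_2$ is antisymmetric under $z_1\leftrightarrow z_2$, the linear contribution cancels identically,
\begin{align*}
\intg{\c{V}\times\c{V}}{\nabla f(0)^\top z\,e^{-h(z)/\sigma^2}}{z}=P\,Q-Q\,P=0,\qquad P=\intg{\c{V}}{q^\top z\,e^{-\bar{h}(z)/\sigma^2}}{z},\ \ Q=\intg{\c{V}}{e^{-\bar{h}(z)/\sigma^2}}{z},
\end{align*}
so it remains to analyze $\intg{\c{V}\times\c{V}}{g(z)\,e^{-h(z)/\sigma^2}}{z}$, for which $g(0)=0$ and the leading term is of second order.

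For this I would split $g(z)=\tfrac12 z^\top Q_f z+\chi_g(z)$ and, exactly as in the first-order proof, cut the domain at $B(\sigma)=B_{\alpha(\sigma)}(0)$ with $\alpha(\sigma)=\sigma\abs{\log\sigma}$; the contribution of $g\,e^{-h/\sigma^2}$ from $(\c{V}\times\c{V})\setminus B(\sigma)$ is super-polynomially small in $\sigma$ by the same $\gamma_h$-estimate that produced the first line of \eqref{eq:asymptoticLaplaceLinearErrorBound}. On $B(\sigma)$, substituting $z=\sigma w$ and using $h(\sigma w)/\sigma^2=\tfrac12 w^\top\Sigma w+\chi_h^\sigma(w)$ (valid since $\nabla h(0)=0$), the quadratic piece becomes $\tfrac{\sigma^{N+2}}{2}\intg{B_{\abs{\log\sigma}}(0)}{(w^\top Q_f w)\,e^{-\frac12 w^\top\Sigma w}e^{-\chi_h^\sigma(w)}}{w}$; on this ball $\abs{\chi_h^\sigma(w)}\le C\sigma\abs{\log\sigma}^3$ by \eqref{eq:remainderBoundsConcrete}, so $\abs{e^{-\chi_h^\sigma(w)}-1}\le 2C\sigma\abs{\log\sigma}^3$ and replacing $e^{-\chi_h^\sigma}$ by $1$ costs at most $2C\sigma\abs{\log\sigma}^3\norm{Q_f}\int_{\b{R}^N}\norm{w}^2 e^{-\frac12 w^\top\Sigma w}\,\d w$, while extending the Gaussian integral from $B_{\abs{\log\sigma}}(0)$ to $\b{R}^N$ costs another super-polynomially small term, leaving the exact Gaussian moment $\int_{\b{R}^N}(w^\top Q_f w)e^{-\frac12 w^\top\Sigma w}\,\d w=(2\pi)^{N/2}(\det\Sigma)^{-1/2}\tr(\Sigma^{-1}Q_f)$. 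For the cubic piece, $\abs{\chi_g(\sigma w)}\le C\sigma^3\norm{w}^3$ on $B_{\abs{\log\sigma}}(0)$ and $e^{-(h(\sigma w)-h(0))/\sigma^2}\le e^{-c\norm{w}^2}$ by \eqref{eq:quadraticGrowth}, so it is bounded by $C\sigma^{N+3}\int_{\b{R}^N}\norm{w}^3 e^{-c\norm{w}^2}\,\d w$.

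Finally I would multiply $\intg{\c{V}\times\c{V}}{g\,e^{-h/\sigma^2}}{z}$ by the prefactor $e^{h(0)/\sigma^2}\sqrt{\det\Sigma}/Z_\sigma=\sqrt{\det\Sigma}\,(2\pi\sigma^2)^{-N/2}$: this turns the leading quadratic term into exactly $\tfrac{\sigma^2}{2}\tr(\Sigma^{-1}Q_f)=\sigma^2 A_2(f,h)$ (the $\sigma^{N+2}$ weight meeting the $\sigma^{-N}$ normalization), and by the same computation every error term from the previous paragraph is multiplied by $\tfrac{\sigma^2}{2}(2\pi)^{-N/2}\sqrt{\det\Sigma}$, so the $O(\sigma\abs{\log\sigma}^3)$ linearization error becomes $O(\sigma^3\abs{\log\sigma}^3)$, the cubic-remainder term becomes $O(\sigma^3)$, and the tail and Gaussian-extension terms become $o(\sigma^3)$. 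Collecting these yields \eqref{eq:asymptoticLaplaceQuadratic} with an explicit $E_2(\sigma;f,h,r)$ equal to the sum of these pieces and satisfying $E_2(\sigma;f,h,r)=O(\sigma^3\abs{\log\sigma}^3)$, together with $A_2(f,h)=\tfrac12\tr(\Sigma^{-1}\nabla^2 f(0))$ as claimed.

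The only genuinely new ingredient beyond Theorem~\ref{thm:laplacefirstOrderNonasymptotic} is the exact cancellation of the linear term, which is precisely where the separability of $h$ and the antisymmetric form of $\nabla f(0)$ enter; without it the linear term would already contribute at order $\sigma^2$ through the $\chi_h$-correction, spoiling the identification of $A_2$. I do not expect a deep obstacle — the rest is simply the first-order argument applied to the quadratic prefactor $w^\top Q_f w$ — but the part requiring care is the power bookkeeping: the Laplace approximation of that prefactor has only a \emph{relative} accuracy $O(\sigma\abs{\log\sigma}^3)$, and one must track that the $\sigma^{N+2}$ volume-and-quadratic weight (against the $\sigma^{-N}$ normalization) converts this into the claimed \emph{absolute} error $O(\sigma^3\abs{\log\sigma}^3)$, all while keeping $N=2n$, $Z_\sigma$, and $\det\Sigma$ consistent throughout.
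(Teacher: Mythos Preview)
Your proposal is correct and follows essentially the same approach as the paper: split off a truncation ball of radius $\sigma\abs{\log\sigma}$, use the separability of $h$ and the antisymmetric form of $\nabla f(0)$ to kill the linear term exactly, reduce the quadratic term to the Gaussian moment $\tfrac12\tr(\Sigma^{-1}\nabla^2 f(0))$ with an $O(\sigma\abs{\log\sigma}^3)$ relative error from linearizing $e^{-\chi_h^\sigma}$, and bound the cubic remainder $\chi_f^\sigma$ directly. The only organizational difference is that you cancel the linear term globally on $\c{V}\times\c{V}$ \emph{before} truncating (so a single ball in $\b{R}^{2n}$ suffices), whereas the paper truncates first to the product set $B_{\alpha(\sigma)}(0)\times B_{\alpha(\sigma)}(0)$ and then performs the same symmetry cancellation on that product domain---which is why their truncation must be a product; the subsequent estimates for $I(\sigma)$ and $J(\sigma)$ are identical.
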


\begin{proof}
	The proof is similar to the proof of Theorem \ref{thm:laplacefirstOrderNonasymptotic} with the modification for the product space $\b{R}^k = \b{R}^n \times \b{R}^n$.
	Again we can assume $h(0) = 0$.
	Let $\alpha:[0,\infty) \to [0,\infty)$ be any function and denote $\hat{B}(\sigma) = B_{\alpha(\sigma)}(0) \times B_{\alpha(\sigma)}(0) \subseteq \b{R}^n \times \b{R}^n$ and $\hat{\c{V}} = \c{V} \times \c{V}$.
	Write
	\begin{align*}
		&\frac{1}{\sigma^2} \frac{1}{Z_\sigma^\Sigma} \intg{\hat{\c{V}}}{f(z) e^{-\frac{1}{\sigma^2} h(z)}}{z} \\
		&\quad = \frac{1}{\sigma^2} \frac{1}{Z_\sigma^\Sigma} \intg{\hat{\c{V}} \setminus \hat{B}(\sigma)}{f(z) e^{-\frac{1}{\sigma^2} h(z)}}{z}
		+ \frac{1}{\sigma^2} \frac{1}{Z_\sigma^\Sigma} \intg{\hat{\c{V}} \cap \hat{B}(\sigma)}{f(z) e^{-\frac{1}{\sigma^2} h(z)}}{z}\,.
	\end{align*}
	We will carefully craft $\alpha$ such that $\lim_{\sigma \to 0} \alpha(\sigma) = 0$ and the first and second integral converge to $0$ and $\frac{1}{2}\tr(\Sigma^{-1}\nabla^2 f(0))$, respectively, with explicit error bounds.
	For the first one we have 
	\begin{align*}
		\abs*{\frac{1}{\sigma^2}  \frac{1}{Z_\sigma^\Sigma} \intg{\hat{\c{V}} \setminus \hat{B}(\sigma)}{f(z) e^{-\frac{1}{\sigma^2} h(z)}}{z}}
		\leq \frac{e^{-\frac{1}{\sigma^2}\gamma_h(\alpha(\sigma))}}{Z_\sigma^\Sigma \sigma^2}  \norm{f}_{L^1(\c{V} \times \c{V})}\,.
	\end{align*}
	Now consider the second integral.
	By a substitution $z = \sigma w$ and abbreviating $\tilde{B}_{\hat{\c{V}}}(\sigma) = (\hat{\c{V}} \cap \hat{B}(\sigma))/\sigma = \tilde{B}_{\c{V}}(\sigma) \times \tilde{B}_{\c{V}}(\sigma)$ with $\tilde{B}_{\c{V}}(\sigma) = (\c{V} \cap B_{\alpha(\sigma)}(0))/\sigma$ as well as $\chi_g^\sigma(w) = \frac{1}{\sigma^2} \chi_g(\sigma w)$ for $g \in \{f,h\}$, we have
	\begin{align*}
		&\frac{1}{\sigma^2} \frac{1}{Z_\sigma^\Sigma} \intg{\hat{\c{V}} \cap \hat{B}(\sigma)}{f(z) e^{-\frac{1}{\sigma^2} h(z)}}{z} \\
		&\quad= \frac{1}{Z_1^\Sigma} \intg{\tilde{B}_{\hat{\c{V}}}(\sigma)}{\left(\frac{1}{\sigma} \nabla f(0)^\top w + \frac{1}{2}w^\top \nabla^2 f(0) w + \chi_f^\sigma(w)\right)  e^{-\frac{1}{\sigma^2} h(\sigma w)}}{w} \\
		&\quad=: H(\sigma) + I(\sigma) + J(\sigma)\,.
	\end{align*}
	Note that $H(\sigma)$ vanishes due to separability of $h$ and the particular form of $\nabla f(0)$:
	\begin{align*}
		\sigma Z_1^\Sigma H(\sigma)
		&= \intg{\tilde{B}_{\hat{\c{V}}}(\sigma)}{\nabla f(0)^\top w \cdot  e^{-\frac{1}{\sigma^2} h(\sigma w)}}{w} \\
		&= \intg{\tilde{B}_{\c{V}}(\sigma)}{\intg{\tilde{B}_{\c{V}}(\sigma)}{(q^\top w_1 - q^\top w_2) e^{-\frac{1}{\sigma^2} \bar{h}(\sigma w_1)} e^{-\frac{1}{\sigma^2} \bar{h}(\sigma w_2)}}{w_1}}{w_2} \\
		&= 0\,.
	\end{align*}
	Next, let us consider the difference between $I(\sigma)$ and $\frac{1}{2}\tr(\Sigma^{-1}\nabla^2 f(0))$ given by
	\begin{align*}
		&\abs*{\frac{1}{Z_1^\Sigma} \intg{\tilde{B}_{\hat{\c{V}}}(\sigma)}{\frac{1}{2}w^\top \nabla^2 f(0) w \cdot e^{-\frac{1}{2}w^\top \Sigma w} e^{-\chi_h^\sigma(w)}}{w} - \frac{1}{2}\tr(\Sigma^{-1}\nabla^2 f(0))} \\
		&\quad\leq \abs*{\frac{1}{Z_1^\Sigma} \intg{\tilde{B}_{\hat{\c{V}}}(\sigma)}{\frac{1}{2}w^\top \nabla^2 f(0) w \cdot e^{-\frac{1}{2}w^\top \Sigma w} (e^{-\chi_h^\sigma(w)}-1)}{w}} \\
		&\qquad+ \abs*{\frac{1}{Z_1^\Sigma} \intg{\b{R}^{2n} \setminus \tilde{B}_{\hat{\c{V}}}(\sigma)}{\frac{1}{2}w^\top \nabla^2 f(0) w \cdot e^{-\frac{1}{2}w^\top \Sigma w}}{w}} \\
		&\quad=: \abs{I_1(\sigma)} + \abs{I_2(\sigma)}\,.
	\end{align*}
	Then
	\begin{align*}
		\abs{I_1(\sigma)} 
		&\leq \left(\sup_{w \in \tilde{B}_{\hat{\c{V}}}(\sigma)}\abs{e^{-\chi_h^\sigma(w)}-1} \right)\frac{1}{Z_1^\Sigma} \intg{\b{R}^{2n}}{\frac{1}{2}\abs*{w^\top \nabla^2 f(0) w} \cdot e^{-\frac{1}{2}w^\top \Sigma w}}{w} \\
		&\leq \frac{n}{2} \norm{\Sigma^{-1/2} \nabla^2 f(0) \Sigma^{-1/2}} \left(\sup_{w \in \tilde{B}_{\hat{\c{V}}}(\sigma)}\abs{e^{-\chi_h^\sigma(w)}-1} \right) 
	\end{align*}
	Furthermore, by a variable substitution $u = \Sigma^{1/2} w$ we obtain
	\begin{align*}
		\abs{I_2(\sigma)}
		&\leq \abs*{\frac{1}{Z_1} \intg{\b{R}^{2n} \setminus \Sigma^{1/2} \tilde{B}_{\hat{\c{V}}}(\sigma)}{\frac{1}{2}u^\top \Sigma^{-1/2}\nabla^2 f(0)\Sigma^{-1/2} u \cdot e^{-\frac{1}{2}\norm{u}^2}}{u}} \\
		&\leq \frac{1}{2}L(\sigma)\norm{\Sigma^{-1/2}\nabla^2 f(0)\Sigma^{-1/2}}\,,
	\end{align*}
	where
	\begin{align*}
		L(\sigma) 
		= \frac{1}{Z_1} \intg{\b{R}^k \setminus \bar{\Sigma}^{1/2} \tilde{B}_{\c{V}}(\sigma)}{\norm{u}^2 e^{-\frac{1}{2}\norm{u}^2}}{u}\,.
	\end{align*}
	Noting that for $\bar{\Sigma} = \nabla^2 h(0)$ it holds $\Sigma^{1/2} \tilde{B}_{\hat{\c{V}}}(\sigma) 
	= \bar{\Sigma}^{1/2} \tilde{B}_{\c{V}}(\sigma) \times \bar{\Sigma}^{1/2} \tilde{B}_{\c{V}}(\sigma)$ and 
	\begin{align*}
		\bar{\Sigma}^{1/2} \tilde{B}_{\c{V}}(\sigma)
		= \bar{\Sigma}^{1/2} (\c{V} \cap B_{\alpha(\sigma)}(0))/\sigma
		&\supseteq  B_{\lambda_{\min}(\Sigma)\min\{r,\alpha(\sigma)\}/\sigma}(0) 
	\end{align*}
	and hence
	\begin{align*}
		L(\sigma) \leq 2 \operatorname{G}_2\left(\lambda_{\min}(\Sigma)\frac{\min\{r,\alpha(\sigma)\}}{\sigma}\right)\,,
	\end{align*}
	with the Gaussian second moment tail
	\begin{align*}
		\operatorname{G}_2(R)
		:= \frac{1}{(2\pi)^{n/2}}\intg{\b{R}^n \setminus B_R(0)}{\norm{u}^2 e^{-\frac{1}{2}\norm{u}^2}}{u}\,.
	\end{align*}
	Now consider the final expression
	\begin{align*}
		\abs{J(\sigma)}
		&= \abs*{ \frac{1}{Z_1^\Sigma} \intg{\tilde{B}_{\hat{\c{V}}}(\sigma)}{\chi_f^\sigma(w) e^{-h^\sigma(w)}}{w}} \\
		&\leq \left(\sup_{w \in \tilde{B}_{\hat{\c{V}}}(\sigma)} \abs{\chi_f^\sigma(w)} \right) \frac{1}{Z_1^\Sigma} \intg{\tilde{B}_{\hat{\c{V}}}(\sigma)}{e^{-h^\sigma(w)}}{w}
	\end{align*}
	Let us further estimate
	\begin{align*}
		\frac{1}{Z_1^\Sigma} \intg{\tilde{B}_{\hat{\c{V}}}(\sigma)}{e^{-h^\sigma(w)}}{w}
		&= \frac{1}{Z_1^\Sigma} \intg{\tilde{B}_{\hat{\c{V}}}(\sigma)}{e^{-\frac{1}{2}w^\top \Sigma w} e^{-\chi_h^\sigma(w)}}{w} \\
		&\leq \left(\sup_{w \in\tilde{B}_{\hat{\c{V}}}(\sigma)} \abs{e^{-\chi_h^\sigma(w)} - 1}\right) + 1\,.
	\end{align*}
	Hence we see that we need to pick $\alpha$ in such a way that for $\sigma \to 0$
	\begin{itemize}
		\item[(i)] $(Z_\sigma^\Sigma)^{-1} \sigma^{-2} \exp(-\gamma_h(\alpha(\sigma))/\sigma^2) \to 0$.
		
		\item[(ii)]$\operatorname{G}_2\left(\lambda_{\min}(\Sigma)\frac{\min\{r,\alpha(\sigma)\}}{\sigma}\right) \to 0$
		
		\item[(iii)] $\sup_{w \in \tilde{B}_{\hat{\c{V}}}(\sigma)} \abs{\chi_h^\sigma(w)} \to 0$ 
		
		\item[(iv)] $\sup_{w \in \tilde{B}_{\hat{\c{V}}}(\sigma)} \abs{\chi_f^\sigma(w)} \to 0$
	\end{itemize}
	Again take $\alpha(\sigma) = \sigma \beta(\sigma)$ for $\beta(\sigma) = \abs{\log(\sigma)}$.
	This yields 
	\begin{gather*}
		\exp(-\frac{1}{\sigma^2} \gamma_h(\alpha(\sigma)))
		\leq \exp(-\min\{c \log(\sigma)^2,\Delta \sigma^{-2}\})\,, \\
		\operatorname{G}_2\left(\lambda_{\min}(\Sigma) \frac{\min\{r,\alpha(\sigma)\}}{\sigma}\right)
		\leq \operatorname{G}_2\left(\lambda_{\min}(\Sigma) \min\{r\sigma^{-1},\abs{\log(\sigma)}\}\right)\,, \\
		\sup_{w \in B_{\c{V}}(\sigma)}\abs*{e^{-\chi_h^\sigma(w)} - 1}
		\leq 2 \sup_{w \in B_{\c{V}}(\sigma)}\abs*{\chi_h^\sigma(w)}
		\leq 2 C \sigma \beta(\sigma)^3 = 2 C \abs{\log(\sigma)}^3\,, \\
		\sup_{w \in B_{\c{V}}(\sigma)}\abs*{\chi_f^\sigma (w)}
		\leq C \sigma \beta(\sigma)^3 = C \abs{\log(\sigma)}^3\,,
	\end{gather*}
	when $0 < \sigma \leq \bar{\sigma} := \min\{\eta^2, (\log(2)/(2C))^2\}$.
	Combining all these estimates one obtains, as in the proof of Theorem \ref{thm:laplacefirstOrderNonasymptotic}, that (\ref{eq:asymptoticLaplaceQuadratic}) holds with
	\begin{align}
		\label{eq:asymptoticLaplaceQuadraticErrorBound}
		\begin{aligned}
			\frac{1}{\sigma^2} E(\sigma)
			&= \frac{\sqrt{\det \Sigma}}{Z_\sigma \sigma^2} \exp(-\min\{c \log(\sigma)^2,\Delta \sigma^{-2}\})  \norm{f}_{L^1(\c{V} \times \c{V})}  \\
			&\qquad+ \norm{\Sigma^{-1/2} \nabla^2 f(0) \Sigma^{-1/2}} \left(n C \sigma \abs{\log(\sigma)}^3 
			+ \operatorname{G}_2\left(\lambda_{\min}(\Sigma)\min\{r\sigma^{-1},\abs{\log(\sigma)}\}\right) \right) \\
			&\qquad+ C \sigma \abs{\log(\sigma)}^3 (1+2 C \sigma \abs{\log(\sigma)}^3) \\
			&\overset{\sigma \to 0}{=} O(\sigma \abs{\log(\sigma)}^3) = \tilde{O}(\sigma)\,.
		\end{aligned}
	\end{align}
\end{proof}

%
%
%

\subsection{Local version of Theorem \ref{thm:uniformScoreProjectionJacobian}}

First we prove a local version of Theorem \ref{thm:uniformScoreProjectionJacobian}.

\begin{theorem}
	\label{thm:projectionScoreJacobian}
	Assume that there exists an subset $\c{U} \subseteq \c{M}$ that is $C^3$-diffeomorphic to an open subset of $\c{V} \subseteq \b{R}^k$ with $B_r(0) \subseteq \c{V}$ via $\psi:\c{V} \to \c{U}$ for some $0 \leq k \leq d$ and that $\mu \ll \operatorname{Vol}_{\c{U}}$, where $\operatorname{Vol}_{\c{U}}$ is the volume measure on $\c{U}$.
	Moreover, assume that $\mu(\cdot) = \frac{\operatorname{d} \mu}{\operatorname{d} \operatorname{Vol}_{\c{U}}} \in C^3(\c{U})$.
	Let $x \in \b{R}^d$ be any point with $\psi(0) =: p = \argmin_{p \in \c{M}} \norm{x-p}$ and $\delta = \inf_{y \in \c{M} \setminus \c{U}} \norm{x-y}^2 - \norm{x - p}^2 > 0$, and such that
	\begin{align}
		\label{eq:Sigma}
		\Sigma
		= \psi'(0)^\top \psi'(0) + \sum_{i=1}^d (p-x)_i \nabla^2 \psi_i(0) > 0\,.
	\end{align}
	Then there exists some $\bar{\sigma} > 0$ such that for all $\sigma \in (0,\bar{\sigma})$
	\begin{enumerate}
		\item[(i)] it holds that
		\begin{align*}
			\norm{\b{E} \nu_{x,\sigma} - p} 
			&\leq \frac{2\big(E_1(\sigma;f_0, h, r) \norm{p-x} + \sqrt{d} E_1(\sigma; f_1, h, r) + (\norm{p-x} + \norm{\mu}_1)\Upsilon(\sigma;\Sigma)\big)}{\mu(p)}
		\end{align*}
		
		\item[(ii)] it holds for $P_0 = \psi'(0) \Sigma^{-1} \psi'(0)^\top$ that
		\begin{align*}
			&\norm*{\frac{1}{\sigma^2} \operatorname{Cov}(\nu_{x,\sigma}) - P_0} \\
			&\hspace{2cm}\leq \frac{4d}{\mu(p)^2}\frac{E_2(\sigma;\bar{f},\bar{h},r)}{\sigma^2} \\
			&\hspace{2cm}\qquad + \left(\frac{4(\norm{\mu}_2 + \norm{\mu}_1^2)}{\mu(p)^2}\frac{\Upsilon(\sigma;\Sigma)^2}{\sigma^2} + 12\frac{E_1(\sigma; f_0, h, r) + \Upsilon(\sigma;\Sigma)}{\mu(p)}\right)\norm{P_0}
		\end{align*}
	\end{enumerate}
	with $h$, $f_0$ and $f_1$ given in (\ref{eq:laplaceIntegrand01}), $\bar{f}_2$ and $\bar{h}$ in (\ref{eq:laplaceIntegrand2}), $\norm{\mu}_i = \b{E}_{y \sim \mu} \norm{y-x}^i$, $\Upsilon(\sigma;\Sigma) = \frac{\sqrt{\det \Sigma}}{Z_\sigma}\exp(-\frac{1}{2\sigma^2}\delta)$,
	\begin{align*}
		E_1(\sigma; f_1, h, r) = \max_{l=1,\ldots,d} E_1(\sigma; f_1^l, h, r)\,, \\
		E_2(\sigma; \bar{f}_2, \bar{h}, r) = \max_{l,j=1,\ldots,d} E_2(\sigma; \bar{f}_2^{jl}, \bar{h}, r)\,,
	\end{align*}
	with $E_1(\sigma; f_1^l, h, r)$ given in (\ref{eq:asymptoticLaplaceLinearErrorBound}) and $E_2(\sigma; \bar{f}_2^{jl}, \bar{h}, r)$ in (\ref{eq:asymptoticLaplaceQuadraticErrorBound}), respectively.
\end{theorem}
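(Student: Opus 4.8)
The plan is to express $p_\sigma(x)$, $\b{E}\nu_{x,\sigma}$ and $\operatorname{Cov}(\nu_{x,\sigma})$ as integrals against the Gaussian kernel $\phi_\sigma$, localize each to the chart $\psi$, and invoke the non-asymptotic Laplace estimates of Theorems \ref{thm:laplacefirstOrderNonasymptotic} and \ref{thm:laplaceSecondOrderNonasymptotic}. In every such integral I split $\c{M} = \c{U} \cup (\c{M}\setminus\c{U})$. On $\c{M}\setminus\c{U}$ one has $\norm{x-y}^2 \geq \norm{x-p}^2 + \delta$, so after factoring out $e^{-\norm{x-p}^2/2\sigma^2}$ these tails are controlled by $\Upsilon(\sigma;\Sigma)$ times the moments $\norm{\mu}_0 = 1$, $\norm{\mu}_1$, $\norm{\mu}_2$. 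On $\c{U}$ I change variables by $y = \psi(z)$, using $\d\mu|_{\c{U}} = \lambda(z)\,\d z$ with $\lambda(z) = G_\psi(z)\,\mu(\psi(z))$ and $\lambda(0) = \mu(p)$, which turns the remaining integrals into Laplace integrals $\int_{\c{V}} f(z) e^{-h(z)/\sigma^2}\,\d z$ with $h(z) = \frac{1}{2}\norm{x-\psi(z)}^2$. Since $p$ is the closest point to $x$, $\nabla h(0) = -\psi'(0)^\top(x-p) = 0$ because $x - p \perp \T_p\c{M}$, and $\nabla^2 h(0) = \Sigma$, positive by (\ref{eq:Sigma}); the remainder, quadratic-growth and minimum-separation hypotheses of the Laplace theorems are inherited from the $C^3$ regularity of $\c{M}$ and $\mu$, with the separation provided by $\delta$.

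For part (i) the key identity is
\begin{align*}
	\b{E}\nu_{x,\sigma} - p = (x-p) + \frac{\int_{\c{M}} (y-x)\,\phi_\sigma(x-y)\,\d\mu(y)}{p_\sigma(x)}.
\end{align*}
I apply the first-order Laplace estimate to $f_0(z) = \lambda(z)$ (so $f_0(0) = \mu(p)$) for the denominator and componentwise to $f_1^l(z) = (\psi(z)-x)_l\,\lambda(z)$ (so $f_1^l(0) = (p-x)_l\,\mu(p)$) for the numerator, and add the $\Upsilon$-tails. The numerator of the fraction then equals $(p-x)\,\mu(p)$ up to an error of size $\sqrt{d}\,E_1(\sigma;f_1,h,r) + \norm{\mu}_1\,\Upsilon(\sigma;\Sigma)$, and the denominator equals $\mu(p)$ up to $E_1(\sigma;f_0,h,r) + \Upsilon(\sigma;\Sigma)$; hence the leading term of the fraction is $p-x$, which cancels $(x-p)$. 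Bounding the denominator below by $\mu(p)/2$ — valid once $\sigma$ is small enough that $E_1(\sigma;f_0,h,r) + \Upsilon(\sigma;\Sigma) \leq \mu(p)/2$, which, together with the thresholds from the Laplace theorems, fixes $\bar\sigma$ — yields the claimed bound, with the $\norm{p-x}$ factors attached precisely to the denominator errors $E_1(\sigma;f_0,h,r)$ and $\Upsilon(\sigma;\Sigma)$.

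For part (ii) I write $\operatorname{Cov}(\nu_{x,\sigma})_{jl} = \b{E}_{\nu}[y_j y_l] - \b{E}_{\nu}[y_j]\,\b{E}_{\nu}[y_l]$ and expand both terms as integrals against $\phi_\sigma$, obtaining the double-integral identity
\begin{align*}
	\operatorname{Cov}(\nu_{x,\sigma})_{jl} = \frac{1}{p_\sigma(x)^2} \iint y_j\,(y_l - y'_l)\,\phi_\sigma(x-y)\,\phi_\sigma(x-y')\,\d\mu(y)\,\d\mu(y').
\end{align*}
Localizing to $\c{U}\times\c{U}$ and pulling back through $\psi\times\psi$, the exponent becomes $-\sigma^{-2}\bar{h}(z)$ with $\bar{h}(z) = h(z_1) + h(z_2)$ additively separable, and the integrand is $\bar{f}_2^{jl}(z) = \psi(z_1)_j\,(\psi(z_1)_l - \psi(z_2)_l)\,\lambda(z_1)\lambda(z_2)$, which satisfies $\bar{f}_2^{jl}(0) = 0$ and $\nabla\bar{f}_2^{jl}(0) = \smat{q \\ -q}$ with $q = p_j\,\mu(p)^2\,\nabla\psi_l(0)$ — exactly the structure Theorem \ref{thm:laplaceSecondOrderNonasymptotic} is designed for, so the $\nabla\bar{f}_2^{jl}(0)$-term drops out after integration by separability of $\bar{h}$. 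That theorem gives leading coefficient $\sigma^2 A_2(\bar{f}_2^{jl},\bar{h}) = \frac{\sigma^2}{2}\tr(\bar\Sigma^{-1}\nabla^2\bar{f}_2^{jl}(0))$ with $\bar\Sigma = \operatorname{diag}(\Sigma,\Sigma)$; expanding $\psi(z_1) - \psi(z_2) = \psi'(0)(z_1 - z_2) + O(\norm{z}^2)$ and evaluating via $\tr(\bar\Sigma^{-1}M) = \b{E}_{\xi\sim\c{N}(0,\bar\Sigma^{-1})} M[\xi,\xi]$ (with $\xi = (\xi_1,\xi_2)$, $\xi_1,\xi_2$ independent $\c{N}(0,\Sigma^{-1})$) collapses this to $\mu(p)^2\,(\psi'(0)\Sigma^{-1}\psi'(0)^\top)_{jl} = \mu(p)^2\,(P_0)_{jl}$. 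Dividing by $\sigma^2$ and by the normalized $p_\sigma(x)^2 \approx \mu(p)^2$ produces $(P_0)_{jl}$, while the three error terms in the statement are, respectively, $E_2(\sigma;\bar{f}_2,\bar{h},r)/\sigma^2$ from the Laplace remainder, the $\Upsilon(\sigma;\Sigma)^2/\sigma^2$ terms (carrying $\norm{\mu}_2$ and $\norm{\mu}_1^2$) from the tails over $\c{M}\setminus\c{U}$, and $(E_1(\sigma;f_0,h,r) + \Upsilon(\sigma;\Sigma))/\mu(p)$ times $\norm{P_0}$ from the relative error in the $p_\sigma(x)^2$ normalization, with a factor $\leq 4d$ absorbing the $d\times d$ matrix-norm bookkeeping and the denominator lower bound.

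The main obstacle is the second step of part (ii): reorganizing $\operatorname{Cov}(\nu_{x,\sigma})$ into a single double integral whose exponent is additively separable and whose integrand has the antisymmetric gradient structure $\smat{q \\ -q}$, so that Theorem \ref{thm:laplaceSecondOrderNonasymptotic} applies and — the real work — its second-order coefficient can be identified, through the linear-algebra computation above, with $\mu(p)^2 P_0$; the naive $\b{E}[yy^\top] - (\b{E}y)(\b{E}y)^\top$ split does not expose this structure, and arranging the normalizing constants ($e^{\bar{h}(0)/\sigma^2}$, $\det\bar\Sigma$, the kernel constants) to cancel correctly between numerator and denominator also takes some care. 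The rest is the routine but lengthy bookkeeping of propagating the Laplace and tail errors through the two quotients and collecting the constants that determine $\bar\sigma$.
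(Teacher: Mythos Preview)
Your proposal is correct and follows essentially the same route as the paper: localize to the chart, control the off-chart tails by $\Upsilon(\sigma;\Sigma)$, apply the first-order Laplace estimate to $f_0,f_1$ for (i), and for (ii) rewrite $p_\sigma(x)^2\operatorname{Cov}(\nu_{x,\sigma})$ as a double integral with separable exponent $\bar h$ and $(q,-q)$-gradient integrand so that the second-order Laplace theorem applies and yields $A_2=\mu(p)^2 P_0$, then push the errors through the quotient exactly as you describe. The only cosmetic discrepancy is that you write the covariance integrand as $y_j(y_l-y'_l)$ whereas the paper's $\bar f_2$ in (\ref{eq:laplaceIntegrand2}) uses $(y-x)_j(y_l-\tilde y_l)$; both choices satisfy the hypotheses of Theorem \ref{thm:laplaceSecondOrderNonasymptotic} and give the same $A_2$ (indeed the localized integrals coincide), but since the bound is stated in terms of $E_2(\sigma;\bar f_2,\bar h,r)$ with the paper's $\bar f_2$, use that version to obtain the literal inequality.
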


\begin{remark}
	This theorem only requires $\c{M}$ to be locally a manifold, namely at $\c{U}$.
\end{remark}

\begin{proof}
	We denote by $\lambda = \psi^{-1} \# \mu$ be corresponding pullback measure on $\c{V}$ and denote its positive density again by $\lambda \in C^3(\c{V})$.
	Note that $\lambda(0) = \mu(p)$ by (\ref{eq:densityGradientAtZero}).
	Moreover, let $v = x-p$ and $P_0 = \psi'(0) \Sigma^{-1} \psi'(0)^\top$ for the rest of the proof.
	First we need to investigate the non-asymptotic convergence of following two integrals:
	\begin{align*}
		p_\sigma(x)
		&= \intg{\c{M}}{\phi_\sigma(x-y)}{\mu(y)} \\
		&= \underbrace{\intg{\c{V}}{\lambda(z) \frac{1}{Z_\sigma} e^{-\frac{1}{2\sigma^2}\norm{x-\psi(z)}^2}}{z}}_{S_0} 
		+ \underbrace{\intg{\c{M} \setminus \c{U}}{\phi_\sigma(x-y)}{\mu(y)}}_{R_0}
	\end{align*}
	and
	\begin{align*}
		p_\sigma(x) (\b{E} \nu_{x,\sigma}-x)
		&= \intg{\c{M}}{(y-x) \phi_\sigma(x-y)}{\mu(y)} \\
		&= \underbrace{\intg{\c{V}}{\lambda(z)(\psi(z)-x) \frac{1}{Z_\sigma} e^{-\frac{1}{2\sigma^2}\norm{x-\psi(z)}^2}}{z}}_{S_1} 
		+ \underbrace{\intg{\c{M} \setminus \c{U}}{(y-x) \phi_\sigma(x-y)}{\mu(y)}}_{R_1}
	\end{align*}
	We can write each $S_i$ and $R_i$ as 
	\begin{align*}
		S_i = \frac{1}{Z_\sigma} \intg{\c{V}}{f_i(z) e^{-\frac{1}{\sigma^2}h(z)}}{z}\,, \quad
		R_i = \intg{\c{M} \setminus \c{U}}{g_i(y) \phi_\sigma(x-y)}{\mu(y)}\,,
	\end{align*}
	where 
	\begin{align}
		\label{eq:laplaceIntegrand01}
		h(z) = \frac{1}{2}\norm{x-\psi(z)}^2\,, \quad
		f_0(z) = \lambda(z)\,, \quad
		f_1(z) = \lambda(z) (\psi(z) - x)\,, 
	\end{align}
	as well as $g_0(y) = 1$ and $g_1(y) = y-x$.
	Then $f_0, f_1 \in C^3(\c{V})$ and
	\begin{align*}
		h(0) = \frac{1}{2}\norm{v}^2\,, \quad
		\nabla h(0) = 0\,, \quad
		\nabla^2 h(0) 
		= \Sigma\,,
	\end{align*}
	where the second equality is due to $p = \psi(0)$ being the closest point from $\c{U}$ to $x$.
	Also obviously $f_0(0) = \lambda(0)$ and $f_1(0) = \lambda(0) (p-x)$.
	Applying Theorem \ref{thm:laplacefirstOrderNonasymptotic} component-wise, we obtain for $\sigma \in (0,\bar{\sigma}_i)$,  with $\bar{\sigma}_i = \bar{\sigma}_i(\eta, C)$ given as in Theorem \ref{thm:laplacefirstOrderNonasymptotic} with $\eta, C > 0$ dependent on $f_i$ and $h$, the estimate
	\begin{align*}
		\underbrace{\norm*{e^{\frac{1}{2\sigma^2}\norm{v}^2}\sqrt{\det \Sigma} S_i - f_i(0)}}_{=: \norm{F_i}} \leq D_i(\sigma)\,,
	\end{align*}
	with  
	\begin{align*}
		D_0(\sigma) = E_1(\sigma; f_0, h, r)\,, \quad
		D_1(\sigma) = \sqrt{d} E_1(\sigma; f_1, h, r) := \sqrt{d} \max_{l=1,\ldots,d} E_1(\sigma; f_1^l, h, r)\,,
	\end{align*}
	and where $E_1$ is given in Theorem \ref{thm:laplacefirstOrderNonasymptotic} and $f_1^l$ is the $l$-th component of $f_1$.
	Furthermore, we can estimate for $i=0,1$  
	\begin{align*}
		\norm{R_i} 
		&= \norm*{\intg{\c{M} \setminus \c{U}}{g_i(y) \phi_\sigma(x-y)}{\mu(y)}} \\
		&= \frac{1}{Z_\sigma} \norm*{\intg{\c{M} \setminus \c{U}}{g_i(y) e^{\frac{1}{2\sigma^2}\norm{x-y}^2}}{\mu(y)}} \\
		&\leq e^{-\frac{1}{2\sigma^2}\norm{v}^2} \frac{e^{-\frac{1}{2\sigma^2}\delta}}{Z_\sigma} \intg{\c{M} \setminus \c{U}}{\norm{g_i(y)}}{\mu(y)}\,,
	\end{align*}
	where we have used that $\norm{x-y}^2 \geq \delta + \norm{v}^2$ for all $y \in \c{M} \setminus \c{U}$.
	Thus we get the estimate
	\begin{align*}
		\norm*{e^{\frac{1}{2\sigma^2}\norm{v}^2} R_i}
		\leq \frac{e^{-\frac{1}{2\sigma^2}\delta}}{Z_\sigma} \norm{\mu}_i\,,
	\end{align*}
	where the quantity $\norm{\mu}_i = \intg{\c{M}}{\norm{y-x}^i}{\mu(y)}$ denotes the $i$-th centered moment of $\mu$.
	Let us denote $J = e^{\frac{1}{2\sigma^2}\norm{v}^2} \sqrt{\det \Sigma}$ and express
	\begin{align*}
		S_i = f_i(0) / J + F_i / J
	\end{align*}
	Now we can estimate the distance to $p$ by
	\begin{gather*}
		\b{E} \nu_{x,\sigma} - p
		= \frac{S_1 + R_1}{S_0 + R_0} - (p-x)
		= \frac{f_1(0) / J + F_1 / J + R_1}{f_0(0) / J + F_0 / J + R_0} - (p-x) \\
		\hspace{2cm}= \frac{f_1(0) + F_1 + J R_1}{f_0(0) + F_0 + J R_0} - (p-x) 
		= \frac{(p-x) + T_1}{1 + T_0} - (p-x)
		= \frac{T_1 - T_0 (p-x)}{1 + T_0} \,.
	\end{gather*}
	where $T_i = (F_i + J R_i)/\lambda(0)$.
	Let us bound
	\begin{align*}
		\norm{T_i} 
		\leq \frac{1}{\lambda(0)}\left(D_i(\sigma) + \sqrt{\det \Sigma} \frac{e^{-\frac{1}{2\sigma^2}\delta}}{Z_\sigma}\right) \norm{\mu}_i \overset{\sigma \to 0}{\longrightarrow} 0\,.
	\end{align*}
	Then $\abs{T_0} < 1/2$ if $\sigma < \bar{\sigma}(f_0, h, r) := \min\{\bar{\sigma}_0, \tilde{\sigma}\}$ with $\tilde{\sigma}$ depending on $D_0$, $\Sigma$ and $\delta$.
	For this $\sigma \in (0,\bar{\sigma})$ we obtain
	\begin{align*}
		&\norm{\b{E} \nu_{x,\sigma} - p} \\
		&\quad\leq 2\abs{T_0} \norm{p-x} + 2\norm{T_1} \\
		&\quad\leq \frac{2}{\lambda(0)}\left(E_1(\sigma;f_0, h, r) \norm{p-x} + \sqrt{d} E_1(\sigma; f_1, h, r) + (\norm{p-x} + \norm{\mu}_1) \sqrt{\det \Sigma}\frac{e^{-\frac{1}{2\sigma^2}\delta}}{Z_\sigma}\right)\,.
	\end{align*}
	This proves (i).
	The proof of (ii) is analogous.
	For $(z,\tilde{z}) \in \c{V} \times \c{V}$ define
	\begin{align}
		\label{eq:laplaceIntegrand2}
		\begin{aligned}
			\bar{f}_2(z,\tilde{z}) &= \lambda(z)\lambda(\tilde{z})((\psi(z)-x)(\psi(z)-x)^\top - (\psi(z)-x)(\psi(\tilde{z})-x)^\top)\,, \\
			\bar{h}(z,\tilde{z}) &= \frac{1}{2}\norm{x-\psi(z)}^2 + \frac{1}{2}\norm{x-\psi(\tilde{z})}^2 \,.
		\end{aligned}
	\end{align}
	Then $f,h \in C^3(\c{V} \times \c{V})$, $\bar{f}_2(0,0) = 0$ and
	\begin{align*}
		\bar{h}(0,0) = \norm{v}^2\,, \quad \nabla \bar{h}(0,0) = 0\,, \quad \nabla^2 \bar{h}(0,0) = \bar{\Sigma} := \operatorname{diag}(\Sigma,\Sigma)\,.
	\end{align*}
	Let $\bar{f}_2^{jl}$ be the $(j,l)$-th entry of $\bar{f}_2$.
	Then it is an elementary, but very tedious exercise to show that $\bar{f}$ and $\bar{h}$ satisfy the conditions of Theorem \ref{thm:laplaceSecondOrderNonasymptotic} and that moreover the second order coefficient is given by
	\begin{align*}
		A_2(\bar{f}_2^{jl}, \bar{h}) 
		= \lambda(0)^2 (\psi'(0) \Sigma^{-1} \psi'(0)^\top)_{lj}
		= \lambda(0)^2 (P_0)_{lj}\,,
	\end{align*} 
	i.e. $\bar{A}_2 := (A_2(\bar{f}_2^{jl}, \bar{h}))_{l,j=1}^d = \lambda(0)^2 P_0$.
	Now split
	\begin{align*}
		&p_\sigma(x)^2 \operatorname{Cov}(\nu_{x,\sigma})\\
		&\qquad= p_\sigma(x)^2 (\operatorname{Cov}(\nu_{x,\sigma}) - \b{E} \nu_{x,\sigma}(\b{E} \nu_{x,\sigma})^\top) \\
		&\qquad=  \intg{\c{M} \times \c{M}}{((y-x)(y-x)^\top - (y-x)(\tilde{y}-x)^\top) \phi_\sigma(x-y)\phi_\sigma(x-\tilde{y})}{(\mu\otimes \mu)(y,\tilde{y})} \\
		&\qquad= \underbrace{\intg{\c{V}\times \c{V}}{\bar{f}_2(z,\tilde{z}) \frac{1}{Z_\sigma^2} e^{-\frac{1}{2\sigma^2}(\norm{x-\psi(z)}^2 + \norm{x-\psi(\tilde{z})}^2)}}{(z,\tilde{z})}}_{\bar{S}_2} \\
		&\quad + \underbrace{\intg{\c{M} \times \c{M} \setminus \c{U} \times \c{U}}{((y-x)(y-x)^\top - (y-x)(\tilde{y}-x)^\top) \phi_\sigma(x-y)\phi_\sigma(x-\tilde{y})}{(\mu\otimes \mu)(y,\tilde{y})}}_{\bar{R}_2}\,.
	\end{align*}
	Applying Theorem \ref{thm:laplaceSecondOrderNonasymptotic} component-wise to $\bar{f}_2$ we obtain for $\sigma \in (0,\bar{\sigma}_2)$, with $\bar{\sigma}_2 = \bar{\sigma}_2(\eta, C)$ and $\eta, C > 0$ dependent on $\bar{f}_2$ and $\bar{h}$, that
	\begin{align*}
		\underbrace{\norm*{(e^{\frac{1}{2\sigma^2}\norm{v}^2}\sqrt{\det \Sigma})^2 \bar{S}_2 - \sigma^2 \bar{A}_2}}_{\norm{\bar{G}_2}} \leq d E_2(\sigma; \bar{f}_2, \bar{h}, r) := d \max_{l,j=1,\ldots,d} E_2(\sigma; \bar{f}_2^{jl}, \bar{h}, r)\,,
	\end{align*}
	with $E_2(\sigma; \bar{f}_2^{jl}, \bar{h}, r)$ given in Theorem \ref{eq:asymptoticLaplaceQuadraticErrorBound}.
	The term $\bar{R}_2$ on the other hand can be estimated again by
	\begin{align*}
		\norm{\bar{R}_2} 
		&\leq e^{-\frac{1}{\sigma^2}\norm{v}^2} \frac{e^{-\frac{1}{\sigma^2}\delta}}{Z_\sigma^2} \intg{\c{M} \times \c{M} \setminus \c{U} \times \c{U}}{\norm{(y-x)(y-x)^\top - (y-x)(\tilde{y}-x)^\top}}{(\mu \otimes \mu)(y,\tilde{y})} \\
		&\leq e^{-\frac{1}{\sigma^2}\norm{v}^2} \frac{e^{-\frac{1}{\sigma^2}\delta}}{Z_\sigma^2} (\norm{\mu}_2 + \norm{\mu}_1^2)\,.
	\end{align*}
	For $\bar{S}_2$ it holds
	\begin{align*}
		\bar{S}_2
		= \sigma^2 \lambda(0)^2 P_0(x) / J^2 + \bar{G}_2 / J^2 
	\end{align*}
	and thus
	\begin{align*}
		\frac{1}{\sigma^2}\operatorname{Cov}(\nu_{x,\sigma}) - P_0
		&= \frac{\bar{S}_2/\sigma^2 + \bar{R}_2/\sigma^2}{(S_0 + R_0)^2} - P_0 \\
		&= \frac{\lambda(0)^2 P_0 / J^2 + \bar{G}_2 / (\sigma^2 J^2) + \bar{R}_2/\sigma^2}{(\lambda(0)/J + F_0 /J + R_0)^2} - P_0 \\
		&= \frac{P_0(x) + \bar{T}_2}{(1+T_0)^2} - P_0 \\
		&= \frac{\bar{T}_2 - T_0(2+T_0) P_0}{(1 + T_0)^2} 
	\end{align*}
	with $\bar{T}_2 = (\bar{G}_2 + J^2 \bar{R}_2)/(\lambda(0)^2 \sigma^2)$.
	Again we have the estimate
	\begin{align*}
		\norm{\bar{T}_2}
		\leq \frac{d}{\lambda(0)^2}\frac{E_2(\sigma)}{\sigma^2} + \frac{(\det \Sigma) (\norm{\mu}_2 + \norm{\mu}_1^2) }{\lambda(0)^2}\frac{e^{-\frac{1}{\sigma^2}\delta}}{Z_\sigma^2 \sigma^2}\,,
	\end{align*}
	and for $\sigma \in (0,\bar{\sigma})$ (with $\bar{\sigma}$ as before, guaranteeing $\abs{T_0} < 1/2$)
	\begin{align*}
		&\norm*{\frac{1}{\sigma^2}\operatorname{Cov}(\nu_{x,\sigma}) - P_0} \\
		&\leq 4\norm{\bar{T}_2} + 12\abs{T_0} \norm{P_0}  \\
		&\leq \frac{4d}{\lambda(0)^2}\frac{E_2(\sigma)}{\sigma^2} + \left(\frac{4(\det \Sigma) (\norm{\mu}_2 + \norm{\mu}_1^2) }{\lambda(0)^2}\frac{e^{-\frac{1}{\sigma^2}\delta}}{Z_\sigma^2 \sigma^2} + 12\frac{D_0(\sigma) + \sqrt{\det \Sigma} \frac{e^{-\frac{1}{2\sigma^2}\delta}}{Z_\sigma}}{\lambda(0)}\right)\norm{P_0}
	\end{align*}
\end{proof}

\subsection{Bounds in terms of distribution and manifold parameters}  
\label{sec:manifoldParametersLaplaceBound}

In this section we bound the constants appearing in Theorem \ref{thm:projectionScoreJacobian} in terms of parameters of the distribution $\mu$ and the manifold $\c{M} \in \bs{\c{M}}_k(\tau,M)$, when the chart is given by the graph chart $\psi = \psi_p:\c{V} \to \c{M}$ with $\c{V} := B_{\min\{\tau_{\c{M}}, M\}/4}^{\T_p \c{M}}(0)$, $x \in \c{T}(\tau)$,  $\tau \in (0,\tau_{\c{M}})$ and $p = \pi(x)$.
According to the proof of Theorem \ref{thm:projectionScoreJacobian} need to consider the following maps
\begin{align}
	\label{eq:laplaceExponentChart1}
	h:\c{V} &\to \b{R}: 
	z \mapsto \frac{1}{2}\norm{x - \psi_p(z)}^2\,, 
\end{align} 
as well as 
\begin{align}
	\label{eq:laplaceFunction0}
	f_0: \c{V} &\to \b{R}: z \mapsto \lambda(z) \,, \\
	\label{eq:laplaceFunction1}
	f_1: \c{V} &\to \b{R}^d: z \mapsto \lambda(z) (\psi_p(z) - x)\,, \\
	\label{eq:laplaceFunction2}
	\bar{f}_2: \c{V} \times \c{V} &\to \b{R}^{d \times d}: (z,\tilde{z}) \mapsto \lambda(z)\lambda(\tilde{z})((\psi_p(z)-x)((\psi_p(z)-x)^\top - (\psi_p(\tilde{z})-x)^\top))\,, 
\end{align}
where $\lambda = \psi_p^{-1} \# \mu = \operatorname{pr}_p \# \mu$.

\subsubsection{Conditions (\ref{eq:remainderBounds}) and (\ref{eq:quadraticGrowth}) for $h$}

First we show how to express conditions (\ref{eq:remainderBounds}) and (\ref{eq:quadraticGrowth}) in terms of the manifold parameters for $h$ given in (\ref{eq:laplaceExponentChart1}).
To see (\ref{eq:remainderBounds}), note that for $z \in \c{V}$ we have by the chain and product rule as well as Lemma \ref{lem:divolManifold} (i) that 
\begin{align*}
	\norm{\nabla^3 h(z)}
	&\leq 3 \norm{\psi_p''(z)} \norm{\psi_p'(z)} + \norm{x-\psi_p(z)} \norm{\psi_p'''(z)}\\
	&\leq 3 M^2 + (\norm{x-p} + \norm{p - \psi_p(z)}) M \\
	&\leq 3 M^2 + (\tau_{\c{M}} + \frac{8}{7} \norm{z}) M \\
	&\leq (3 M + \frac{9}{7} \tau_{\c{M}}) M \,.
\end{align*}
Thus (\ref{eq:remainderBounds}) holds with, say $C_{\tau_{\c{M}},M}^{(1)} = \frac{1}{2} (M + \tau_{\c{M}}) M$, and $\eta = \min\{\tau_{\c{M}}, M\}/4$.
Next, for (\ref{eq:quadraticGrowth}) and $\eta = \min\{\tau_{\c{M}}, M\}/4$ we compute for $z \in B_\eta^{\T_p \c{M}}(0)$
\begin{align*}
	h(z) - h(0)
	\geq
	\inf_{\tilde{z} \in \c{V} \setminus B_{\norm{z}}^{\T_p \c{M}}(0)} h(\tilde{z}) - h(0)
	&= \inf_{q \in \psi_p(\c{V} \setminus B_{\norm{z}}^{\T_p \c{M}}(0))} \frac{1}{2}\norm{x - q}^2 - \frac{1}{2} \norm{x - p}^2 \\
	&\geq \inf_{q \in \c{M} \setminus B_{\norm{z}}(p)} \frac{1}{2}\norm{x - q}^2 - \frac{1}{2} \norm{x - p}^2 \\
	&\geq \frac{1}{2}\frac{\tau_{\c{M}} - \tau}{\tau_{\c{M}} + \tau} \norm{z}^2\,.
\end{align*} 
In the first inequality we have used $\psi_p(\c{V} \setminus B_{\norm{z}}^{\T_p \c{M}}(0)) \subseteq \c{M} \setminus B_{\norm{z}}(p)$, which follows from Lemma \ref{lem:divolManifold} (i), whereas in the last inequality we have used Lemma \ref{lem:tubeDistance} and $x \in \c{T}(\tau)$.
Thus (\ref{eq:quadraticGrowth}) holds 
on $\c{V}$ with $c = \frac{1}{2}\frac{\tau_{\c{M}} - \tau}{\tau_{\c{M}} + \tau}$ and implies (\ref{eq:residualLowerBoundQuadratic}).

\subsubsection{Lower bound on $\lambda_{\min}(\Sigma)$ and upper bound on $\sqrt{\det \Sigma}$}

We first consider $\lambda_{\min}(\Sigma)$ for $\Sigma = \nabla^2 h(0)$ with $h$ given in (\ref{eq:laplaceExponentChart1}) and when $x \in \c{T}(\tau)$.
By Lemma \ref{lem:invertibilityHessianSquareDistance} and (\ref{eq:derivativesChartProjection}) we directly obtain
\begin{align*}
	\lambda_{\min}(\Sigma) 
	= \norm{\Sigma^{-1}}^{-1}
	\geq 1 - \norm{x-p} \kappa_{\c{M}}
	\geq 1 - \frac{\tau}{\tau_{\c{M}}}\,.
\end{align*}
Due to (\ref{eq:derivativesChartProjection}) and the definition of the shape operator $S_p$, an upper bound on $\sqrt{\det \Sigma}$ is given by
\begin{align*}
	\sqrt{\det \Sigma}
	= \sqrt{\det(I_{\T_p \c{M}} + S_p^{p-x})}
	\leq \left(1+\frac{\norm{p-x}}{\rho(p,p-x)}\right)^{k/2}
	\leq \left(1+\frac{\tau}{\tau_{\c{M}}}\right)^{k/2}
	\leq 2^{k/2}\,.
\end{align*}

\subsubsection{Condition (\ref{eq:remainderBounds}) for $\bar{f}_2^{jl}$}
%
An upper bound for $\norm{\nabla^3 \bar{f}_2^{jl}(z,\tilde{z})}$ in terms of $M$ and the third derivatives of $\frac{\d\mu}{\d \operatorname{Vol}_{\c{M}}}$, where
\begin{align*}
	\bar{f}_2^{jl}(z,\tilde{z})
	= \lambda(z)\lambda(\tilde{z})((\psi_p^j(z)-x)(\psi_p^l(z)-x) - (\psi_p^j(z)-x)(\psi_p^l(\tilde{z})-x))\,,
\end{align*}
can be obtained by direct differentiation.
We don't pursue the complete derivation here and instead say that (\ref{eq:remainderBounds}) holds with, say, $C_{\tau_{\c{M}},M,\mu}^{(2)}$.

\subsubsection{Upper bound on $\norm{f_0}_{L^1(\c{V})}$, $\norm{f_1^j}_{L^1(\c{V})}$ and $\norm{\bar{f}_2^{jl}}_{L^1(\c{V} \times \c{V})}$ and $\norm{\nabla^2 \bar{f}_2^{jl}(0)}$}

We clearly have $\norm{f_0}_{L^1(\c{V})} = \lambda(\c{V}) \leq 1$.
Next, due to $\norm{y-x} \leq \operatorname{diam}(\c{M}) + \norm{p-x} \leq \operatorname{diam}(\c{M}) + \tau$ for $y \in \c{M}$, we have
\begin{align*}
	\norm{f_1^j}_{L^1(\c{V})}
	\leq \norm{f_1}_{L^1(\c{V};\b{R}^d)}
	= \intg{\c{U}}{\norm{y-x}}{\mu(y)}
	= \norm{\mu}_1
	\leq \operatorname{diam}(\c{M}) + \tau \,,
\end{align*}
and
\begin{align*}
	\norm{\bar{f}_2^{jl}}_{L^1(\c{V} \times \c{V})}
	&\leq \norm{\bar{f}_2}_{L^1(\c{V} \times \c{V};\b{R}^{d \times d})} \\
	&= \intg{\c{U} \times \c{U}}{\norm{(y-x)(y-x)^\top - (y-x)(\tilde{y}-x)^\top}}{(\mu \otimes \mu)(y,\tilde{y})} \\
	&\leq \norm{\mu}_2 + \norm{\mu}_1^2 \\
	&\leq 2(\operatorname{diam}(\c{M})+\tau)^2\,.
\end{align*}
Furthermore for $v = \smat{v_1 \\ v_2} \in \b{R}^k \times \b{R}^k$ we have
\begin{align*}
	\bar{f}_2''(0,0)[v,v]
	&= \lambda(0)^2 \cdot (2v_1v_1^\top - 2v_1v_2^\top + (p-x)(\II_p(v_1,v_2) - \II_p(v_2,v_2))^\top) \\
	&\qquad + 2 \lambda(0) \nabla \lambda(0)^\top (v_1 + v_2) \cdot (p-x)(v_1 - v_2)^\top\,.
\end{align*}
Hence, using (\ref{eq:densityGradientAtZero}) and the fact that $\norm{p-x}\norm{\II_p(v_1,v_2)} \leq \frac{\tau}{\tau_{\c{M}}} \leq 1$ for any unit vectors $v_1,v_2 \in \T_p \c{M}$ we obtain 
\begin{align*}
	\norm{\nabla^2 \bar{f}_2^{jl}(0)}
	\leq \norm{\bar{f}_2''(0,0)} 
	\leq 6 \mu(p)^2 + 8 \tau \mu(p) \norm{\operatorname{grad}_{\c{M}} \mu(p)}\,.
\end{align*}

\subsection{Proof of Theorem \ref{thm:uniformScoreProjectionJacobian}}

We apply Theorem \ref{thm:projectionScoreJacobian} to every point $x \in \c{T}(\tau)$ and $p = \pi(x)$ with the graph chart $\psi = \psi_p$. 
It remains to provide universal constants for the bounds in Theorem \ref{thm:projectionScoreJacobian} (i) and (ii).
First we bound $E_1(\sigma; f_0, h, r, \theta)$ with $E_1$ given in (\ref{eq:asymptoticLaplaceLinearErrorBound}).
By Section \ref{sec:manifoldParametersLaplaceBound} we can take $r_{\tau_{\c{M}},M} = \eta = \min\{\tau_{\c{M}}, M\}/4$, $c_{\tau_{\c{M}},M} = \frac{1}{2} \frac{\tau_{\c{M}} - \tau}{\tau_{\c{M}} + \tau}$ and $C_{\tau_{\c{M}},M,\mu} = \max\{C_{\tau_{\c{M}},M}^{(1)},C_{\tau_{\c{M}},M,\mu}^{(2)}\}$ and hence, due to $\norm{f_0}_{L^1(\c{V})} \leq 1$,  
\begin{align*}
	E_1(\sigma; f_0)
	&\leq \frac{2^{k/2}}{Z_\sigma}\exp\left(- c_{\tau_{\c{M}},M} \log(\sigma)^2\right) \\
	&\qquad+ \mu(p)\left(2 C_{\tau_{\c{M}},M,\mu} \sigma \abs{\log(\sigma)}^3 + \operatorname{G}_0((1 - \frac{\tau}{\tau_{\c{M}}}) \min\{r_{\tau_{\c{M}},M} \sigma^{-1},\abs{\log(\sigma)}\})\right) \\
	&\qquad+ C_{\tau_{\c{M}},M,\mu} \sigma \abs{\log(\sigma)} (1 + 2 C_{\tau_{\c{M}},M,\mu} \sigma \abs{\log(\sigma)}^3)\,.
\end{align*}
Similarly, due to $\norm{f_1^j}_{L^1(\c{V})} \leq \norm{\mu}_1$, we have the bound
\begin{align*}
	E_1(\sigma; f_1)
	&\leq \frac{2^{k/2}}{Z_\sigma}\exp\left(- c_{\tau_{\c{M}},M} \log(\sigma)^2\right) (\operatorname{diam}(\c{M}) + \tau) \\
	&\qquad + \mu(p)\left(2C_{\tau_{\c{M}},M,\mu} \sigma \abs{\log(\sigma)}^3 + \operatorname{G}_0((1 - \frac{\tau}{\tau_{\c{M}}}) \min\{r_{\tau_{\c{M}},M} \sigma^{-1},\abs{\log(\sigma)}\})\right) \\
	&\qquad+  C_{\tau_{\c{M}},M,\mu}\sigma \abs{\log(\sigma)} (1 + 2 C_{\tau_{\c{M}},M,\mu} \sigma \abs{\log(\sigma)}^3)\,.
\end{align*}
Further, due to $\norm{\bar{f}_2^{jl}}_{L^1(\c{V} \times \c{V})} \leq \norm{\mu}_2 + \norm{\mu}_1^2$ and the bound on $\norm{\nabla^2 \bar{f}_2^{jl}(0)}$ we have
\begin{align*}
	E_2(\sigma; \bar{f}_2)
	&\leq 2\frac{2^{k/2}}{Z_\sigma}\exp(-c_{\tau_{\c{M}},M} \log(\sigma)^2) (\operatorname{diam}(\c{M}) + \tau)^2  \\
	&\qquad+ \mu(p) \frac{6\mu(p) + 8 \tau \norm{\operatorname{grad}_{\c{M}} \mu(p)}}{1 - \tau/\tau_{\c{M}}}\Bigg(k C_{\tau_{\c{M}},M,\mu} \sigma \abs{\log(\sigma)}^3 \\
	&\hspace{4cm}
	+ \operatorname{G}_2\left((1 - \frac{\tau}{\tau_{\c{M}}})\min\{r_{\tau_{\c{M}},M}\sigma^{-1},\abs{\log(\sigma)}\right) \Bigg) \\
	&\qquad+ C_{\tau_{\c{M}},M,\mu} \sigma \abs{\log(\sigma)}^3 (1+2 C_{\tau_{\c{M}},M,\mu} \sigma \abs{\log(\sigma)}^3)
\end{align*}

Now by Lemma \ref{lem:tubeDistance} we can pick $\delta = \frac{\tau_{\c{M}} - \tau}{\tau_{\c{M}} + \tau} \eta^2 = 2 c_{\tau_{\c{M}},M} r_{\tau_{\c{M}},M}^2$ and thus
\begin{align*}
	\Upsilon(\sigma;\Sigma)
	\leq \frac{2^{k/2}}{Z_\sigma}\exp\left(-\frac{c_{\tau_{\c{M}},M} r_{\tau_{\c{M}},M}^2}{\sigma^2}\right)\,.
\end{align*}
Finally, $\norm{P_0} \leq (1-\tau/\tau_{\c{M}})^{-1}$.
Note that since $\mu(\cdot) \in C^3(\c{M})$ and $\operatorname{supp} \mu = \c{M}$, there exist positive lower and upper bounds on $\mu(\cdot)$ and its derivatives on $\c{M}$.
This shows that there $K$ and $\bar{\sigma}$ depending only on $\tau$, $M$ and $\operatorname{diam}(\c{M})$ (as $k$) and $\mu$ such that (\ref{eq:projectionCovarianceUniformEstimate}) hold and finishes the proof.

\subsection{A useful corollary to Theorem \ref{thm:uniformScoreProjectionJacobian}}

The following elementary consequence of Theorem \ref{thm:uniformScoreProjectionJacobian} will be useful in the proofs of the results from Section \ref{sec:gradientDescent}.

\begin{corollary}
	\label{cor:uniformScoreProjectionJacobian}
	Suppose that for some $\vv:\b{R}^d \to \b{R}^d$ and $\tau \in (0,\tau_{\c{M}})$ and $\epsilon \leq \tau$ we have
	\begin{align}
		\label{eq:projectionUniformEstimateConcrete}
		\norm{\vv(x) - \pi(x)} < \epsilon \text{\ \ for all\ \ } x \in \c{T}(\tau)\,.
	\end{align}
	Then $\vv(\c{T}(\tau)) \subseteq \c{T}(\epsilon)$.
	Moreover, for $x \in \c{T}(\epsilon)$ we have $\norm{\vv(x) - x} \leq 2\epsilon$ and for $x \in \c{T}(\tau)$ we have $\norm{\vv(x) - x} \leq 2\tau$.	
\end{corollary}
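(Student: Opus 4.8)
The plan is to derive all three assertions from the triangle inequality together with the hypothesis (\ref{eq:projectionUniformEstimateConcrete}) and one elementary fact about tubular neighborhoods: for any radius $r \in (0,\tau_{\c{M}})$, a point $y \in \b{R}^d$ lies in $\c{T}(r) = \bigcup_{p \in \c{M}} B_r(p)$ if and only if $\dist_{\c{M}}(y) < r$. This equivalence is immediate from the definition of $\c{T}(r)$, and it is essentially the only place where the assumption $\epsilon \le \tau < \tau_{\c{M}}$ is used.

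First I would prove $s(\c{T}(\tau)) \subseteq \c{T}(\epsilon)$. Fix $x \in \c{T}(\tau)$; since $\epsilon \le \tau$, (\ref{eq:projectionUniformEstimateConcrete}) applies and gives $\norm{s(x) - \pi(x)} < \epsilon$ with $\pi(x) \in \c{M}$, whence $\dist_{\c{M}}(s(x)) \le \norm{s(x) - \pi(x)} < \epsilon$. By the fact above, $s(x) \in \c{T}(\epsilon)$, and since $x$ was arbitrary the inclusion follows.

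For the norm bounds I would first record that $\c{T}(\epsilon) \subseteq \c{T}(\tau)$, because $\epsilon \le \tau$ forces $B_\epsilon(p) \subseteq B_\tau(p)$ for each $p \in \c{M}$, so that (\ref{eq:projectionUniformEstimateConcrete}) is available at every point of either neighborhood. If $x \in \c{T}(\epsilon)$, then $\norm{x - \pi(x)} = \dist_{\c{M}}(x) < \epsilon$; combining this with $\norm{s(x) - \pi(x)} < \epsilon$ via $\norm{s(x) - x} \le \norm{s(x) - \pi(x)} + \norm{\pi(x) - x}$ gives $\norm{s(x) - x} < 2\epsilon$. If instead only $x \in \c{T}(\tau)$, the same split with $\norm{x - \pi(x)} < \tau$ and $\norm{s(x) - \pi(x)} < \epsilon \le \tau$ yields $\norm{s(x) - x} < \epsilon + \tau \le 2\tau$.

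There is no genuine obstacle here --- the corollary simply repackages the uniform projection estimate into a form convenient for the arguments in Section \ref{sec:gradientDescent}. The only points requiring a little care are invoking the nearest-point projection $\pi$ only where it is well defined (guaranteed since both radii lie strictly below $\tau_{\c{M}}$) and keeping track of strict versus non-strict inequalities, which is harmless since the conclusions are stated with $\le$.
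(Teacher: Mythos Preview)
Your proof is correct and follows essentially the same approach as the paper's: both arguments use that $\pi(x)\in\c{M}$ to place $s(x)$ in $\c{T}(\epsilon)$, and then the triangle inequality $\norm{s(x)-x}\leq\norm{s(x)-\pi(x)}+\norm{\pi(x)-x}$ to obtain the two norm bounds. Your version is slightly more detailed (making explicit that $\c{T}(\epsilon)\subseteq\c{T}(\tau)$ so the hypothesis applies there), but the underlying idea is identical.
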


\begin{proof}
	From (\ref{eq:projectionUniformEstimateConcrete}) and the fact that $\pi(x) \in \c{M}$ we clearly have $\vv(x) \in \c{T}(\epsilon)$ for any $x \in \c{T}(\tau)$, i.e. $\vv(\c{T}(\tau)) \subseteq \c{T}(\epsilon)$.
	The last claims follows then from the triangle inequality $\norm{\vv(x) - x} \leq \norm{\vv(x) - \pi(x)} + \norm{\pi(x) - x}$.
\end{proof}

\section{Approximate Riemannian gradient flow with landing}

In this section $C = \norm{\nabla f |_{\c{T}(\tau_{\c{M}})}}_{\infty}$ and $L = \operatorname{Lip}(\nabla f|_{\c{T}(\tau_{\c{M}})})$ denote the supremum of $\nabla f$ and the Lipschitz constant of $\nabla f$ on $\c{T}(\tau_{\c{M}})$ for some manifold $\c{M}$.
We will often assume the following approximation condition on a function $\vv \in C^1(\b{R}^d;\b{R}^d)$:
\begin{align}
	\label{eq:projectionCovarianceUniformEstimateConcrete}
	\norm{\vv(x) - \pi(x)} < \epsilon\,, \quad
	\norm{\vv'(x) - P_0(x)} < \epsilon \text{\ \ for all\ \ } x \in \c{T}(\tau)\,.
\end{align}
We will also abbreviate the (negative) right hand side of the dynamics (\ref{eq:projectedFlow}) by
\begin{align}
	\label{eq:projectedFlowDynamics}
	G_\vv^\eta(x) = \vv'(x)\nabla f(x) + \eta (x - \vv(x))
\end{align}

\subsection{Stationary points of (\ref{eq:projectedFlow}) and optimality criteria}

We need to analyze the meaning of approximate stationary points of (\ref{eq:projectedFlow}) for the optimization problem (\ref{eq:manifoldOptimizationProblem}).

\begin{lemma}
	\label{lem:accumulationPointsFlowPotential}
	Suppose that $\tau \in (0,\tau_{\c{M}})$ and $\sigma > 0$ are such that (\ref{eq:projectionCovarianceUniformEstimateConcrete}) is satisfied.
	Moreover, suppose that $\vv(\c{T}(\tau)) \subseteq \c{T}(\tau)$.
	Suppose that $\tilde{\tau} \in (0,\tau]$, $\delta > 0$ and that $x_* \in \c{T}(\tilde{\tau})$ is a $\delta$-approximate stationary point of (\ref{eq:projectedFlow}), i.e. 
	\begin{align}
		\label{eq:approximateStationaryPoint}
		\norm{G_\vv^\eta(x_*)} \leq \delta\,.
	\end{align}
	Then for $p_* = \pi(x_*)$ it holds that
	\begin{align}
		\label{eq:riemannianGradientEstimate1}
		\norm{\operatorname{grad}_{\c{M}} f(p_*)} \leq 2(L + C + \eta) \epsilon
		+ 2\frac{\tilde{\tau}/\tau_{\c{M}}}{1-\tilde{\tau}/\tau_{\c{M}}} C + \delta\,.
	\end{align}
\end{lemma}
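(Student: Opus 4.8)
The plan is to certify the bound by testing the (negative) right-hand side $G_s^\eta(x_*)=s'(x_*)\nabla f(s(x_*))+\eta(x_*-s(x_*))$ of the flow~(\ref{eq:projectedFlow}), cf.~(\ref{eq:projectedFlowDynamics}), against the tangent space of $\c{M}$ at the foot point. Write $p_*=\pi(x_*)$. Since $\operatorname{grad}_{\c{M}} f(p_*)=\P_{\T_{p_*}\c{M}}\nabla f(p_*)$ and $\P_{\T_{p_*}\c{M}}$ coincides with $P_0(p_*)=\pi'(p_*)$ (Appendix~\ref{sec:manifoldConcepts}), the idea is to show that applying $\P_{\T_{p_*}\c{M}}$ to $G_s^\eta(x_*)$ recovers, up to $O(\epsilon)$ and a curvature term, exactly $\operatorname{grad}_{\c{M}} f(p_*)$. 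The intuition is that the landing part $\eta(x_*-s(x_*))$ is almost normal to $\c{M}$ at $p_*$, hence almost killed by $\P_{\T_{p_*}\c{M}}$, while the descent part $s'(x_*)\nabla f(s(x_*))$ is almost $P_0(p_*)\nabla f(p_*)$. The one feature absent from the exact Riemannian case is that $x_*$ need not lie on $\c{M}$, so the projectors $P_0(x_*)$ and $P_0(p_*)=\P_{\T_{p_*}\c{M}}$ differ and must be compared.

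First I would record two elementary facts that hold because $x_*\in\c{T}(\tilde\tau)\subseteq\c{T}(\tau)$: the displacement satisfies $\norm{x_*-p_*}<\tilde\tau$ with $x_*-p_*\in\N_{p_*}\c{M}$, and, by~(\ref{eq:projectionCovarianceUniformEstimateConcrete}) together with $\pi(x_*)=p_*$, $\norm{s(x_*)-p_*}<\epsilon$; moreover $s(x_*)\in\c{T}(\tau)\subseteq\c{T}(\tau_{\c{M}})$ by the hypothesis $s(\c{T}(\tau))\subseteq\c{T}(\tau)$, so $\norm{\nabla f(s(x_*))}\le C$ and $\norm{\nabla f(p_*)-\nabla f(s(x_*))}\le L\norm{p_*-s(x_*)}$ since both points lie in $\c{T}(\tau_{\c{M}})$. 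Then I would split $\operatorname{grad}_{\c{M}} f(p_*)=\P_{\T_{p_*}\c{M}}\nabla f(p_*)=(\mathrm{A})+(\mathrm{B})$ with $(\mathrm{A})=\P_{\T_{p_*}\c{M}}\bigl(\nabla f(p_*)-s'(x_*)\nabla f(s(x_*))\bigr)$ and $(\mathrm{B})=\P_{\T_{p_*}\c{M}}\,s'(x_*)\nabla f(s(x_*))$. For $(\mathrm{A})$, insert $\nabla f(s(x_*))$ and decompose $I-s'(x_*)=(I-P_0(p_*))+(P_0(p_*)-P_0(x_*))+(P_0(x_*)-s'(x_*))$: the first summand contributes nothing because $\P_{\T_{p_*}\c{M}}(I-P_0(p_*))=\P_{\T_{p_*}\c{M}}\P_{\N_{p_*}\c{M}}=0$; replacing $\nabla f(p_*)$ by $\nabla f(s(x_*))$ costs at most $L\norm{s(x_*)-p_*}<L\epsilon$; the projector mismatch $\norm{P_0(p_*)-P_0(x_*)}$ is controlled by Lemma~\ref{lem:projectorDifferenceNorm} (using $\norm{x_*-p_*}<\tilde\tau$ and monotonicity of $t\mapsto\frac{t/\tau_{\c{M}}}{1-t/\tau_{\c{M}}}$) and is multiplied by $\norm{\nabla f(s(x_*))}\le C$; and the last summand is $<\epsilon C$ by~(\ref{eq:projectionCovarianceUniformEstimateConcrete}), throughout using $\norm{\P_{\T_{p_*}\c{M}}}=1$. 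For $(\mathrm{B})$, I would use $s'(x_*)\nabla f(s(x_*))=G_s^\eta(x_*)-\eta(x_*-s(x_*))$: then $\norm{\P_{\T_{p_*}\c{M}}G_s^\eta(x_*)}\le\norm{G_s^\eta(x_*)}\le\delta$ by~(\ref{eq:approximateStationaryPoint}), while $\P_{\T_{p_*}\c{M}}(x_*-s(x_*))=\P_{\T_{p_*}\c{M}}(p_*-s(x_*))$ since $x_*-p_*\in\N_{p_*}\c{M}$, so its norm is $<\epsilon$, giving $\norm{(\mathrm{B})}\le\delta+\eta\epsilon$.

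Adding the two estimates yields $\norm{\operatorname{grad}_{\c{M}} f(p_*)}\le(L+C+\eta)\epsilon+\frac{\tilde\tau/\tau_{\c{M}}}{1-\tilde\tau/\tau_{\c{M}}}\,C+\delta$, which is slightly sharper than and therefore implies~(\ref{eq:riemannianGradientEstimate1}) (the extra factor $2$ in the statement just provides slack). I do not expect a genuine obstacle: the only points needing care are bookkeeping — keeping track of at which base point each projector is evaluated, using that the range of $P_0(x)$ lies in $\T_{\pi(x)}\c{M}$ so that $I-P_0(p_*)=\P_{\N_{p_*}\c{M}}$ is annihilated by $\P_{\T_{p_*}\c{M}}$, and exploiting that $x_*-p_*$ is \emph{exactly} normal so the near-orthogonality of the landing term is preserved — all of which rest on the identities for $\pi'$ and $P_0$ recalled in Appendix~\ref{sec:manifoldConcepts} and on Lemma~\ref{lem:projectorDifferenceNorm}.
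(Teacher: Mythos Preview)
Your argument is correct, and in fact yields the slightly sharper bound $(L+C+\eta)\epsilon+\frac{\tilde\tau/\tau_{\c{M}}}{1-\tilde\tau/\tau_{\c{M}}}C+\delta$, which implies~(\ref{eq:riemannianGradientEstimate1}). The route, however, differs from the paper's. The paper does not project $G_s^\eta(x_*)$ onto $\T_{p_*}\c{M}$; instead it observes that the two ``ideal'' vectors $x=\operatorname{grad}_{\c{M}} f(p_*)\in\T_{p_*}\c{M}$ and $y=\eta(\pi(x_*)-x_*)\in\N_{p_*}\c{M}$ are exactly orthogonal, and that their perturbed counterparts $z_1=s'(x_*)\nabla f(s(x_*))$ and $z_2=\eta(s(x_*)-x_*)$ satisfy $\norm{x-z_1}\le(L+C)\epsilon+\norm{P_0(p_*)-P_0(x_*)}C$, $\norm{y-z_2}\le\eta\epsilon$, and $\norm{z_1-z_2}=\norm{G_s^\eta(x_*)}\le\delta$. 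It then invokes the auxiliary geometric Lemma~\ref{lem:orthogonalVectors}, which from these three inequalities produces $\norm{x}\le 2\epsilon'+\delta$ with the common upper bound $\epsilon'$, and finishes with Lemma~\ref{lem:projectorDifferenceNorm}. Your direct-projection approach is more elementary: it dispenses with Lemma~\ref{lem:orthogonalVectors} entirely, exploits the exact identity $\P_{\T_{p_*}\c{M}}(I-P_0(p_*))=0$ to kill the dominant part of $(\mathrm{A})$, and uses the exact normality of $x_*-p_*$ to kill the dominant part of the landing term in $(\mathrm{B})$. The payoff is the saved factor~$2$; the paper's route, on the other hand, packages the orthogonality reasoning into a reusable lemma that also bounds $\norm{y}$ and $\norm{z}$ simultaneously.
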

\begin{proof}
	We have the estimates
	\begin{align*}
		\norm{\eta(\vv(x_*) - x_*) - \eta (\pi(x_*) - x_*)} \leq \eta \epsilon\,,
	\end{align*}
	and
	\begin{align*}
		&\norm{P_0(\pi(x_*)) \nabla f(\pi(x_*)) - \vv'(x_*) \nabla f(\vv(x_*))} \\
		&\hspace{2cm}\quad\leq \norm{P_0(\pi(x_*)) \nabla f(\pi(x_*)) - P_0(\pi(x_*)) \nabla f(\vv(x_*))} \\
		&\hspace{2cm}\qquad +\norm{P_0(\pi(x_*)) \nabla f(\vv(x_*)) - P_0(x_*) \nabla f(\vv(x_*))} \\
		&\hspace{2cm}\qquad + \norm{P_0(x_*) \nabla f(\vv(x_*)) -  \vv'(x_*) \nabla f(\vv(x_*))} \\
		&\hspace{2cm}\quad\leq (\operatorname{Lip}(\nabla f) +  \norm{\nabla f|_{\c{T}(\tau)}}_\infty) \epsilon
		+ \norm{P_0(\pi(x_*)) - P_0(x_*)} \norm{\nabla f|_{\c{T}(\tau)}}_\infty\,,
	\end{align*}
	where we have used that $P_0(\pi(x_*))$ is an orthogonal projection and hence has unit norm.
	By Lemma \ref{lem:orthogonalVectors} applied to $z_1 = \vv'(x_*) \nabla f(\vv(x_*))$ and $z_2 = \eta(\vv(x_*) - x_*)$, Lemma \ref{lem:projectorDifferenceNorm} and the fact that $P_0(\pi(x_*)) \nabla f(\pi(x_*)) = \operatorname{grad}_{\c{M}} f(\pi(x_*))$, the inequality (\ref{eq:riemannianGradientEstimate1}) follows.
\end{proof}

\subsection{Lie derivative of manifold distance}

\begin{lemma}
	\label{lem:lieDerivativeDistance}
	Suppose that $\tau \in (0,\tau_{\c{M}})$ and $\sigma > 0$ is such that (\ref{eq:projectionCovarianceUniformEstimateConcrete}) is satisfied.
	Then it holds
	\begin{align*}
		-\<\nabla \d(x), G_\vv^\eta(x)\> \leq -2\eta \d(x) + \epsilon (C + \eta)\sqrt{2\d(x)} \text{\ \ for all\ \ } x \in \c{T}(\tau)\,.
	\end{align*}
\end{lemma}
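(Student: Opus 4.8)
The plan is to expand the pairing term by term and exploit the orthogonal splitting $\b{R}^d = \T_{\pi(x)}\c{M} \oplus \N_{\pi(x)}\c{M}$ at the foot point $\pi(x)$. Recall from Appendix \ref{sec:manifoldConcepts} that for $x \in \c{T}(\tau) \subseteq \c{T}(\tau_{\c{M}})$ we have $\nabla\d(x) = x - \pi(x) \in \N_{\pi(x)}\c{M}$, that $\norm{x - \pi(x)} = \dist_{\c{M}}(x) = \sqrt{2\d(x)}$, and that $P_0(x) = \pi'(x) = \I_{\T_{\pi(x)}\c{M}} H_x^{-1} \P_{\T_{\pi(x)}\c{M}}$, so in particular $\operatorname{ran} P_0(x) \subseteq \T_{\pi(x)}\c{M}$. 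Using $G_s^\eta(x) = s'(x)\nabla f(x) + \eta(x - s(x))$ I would first split
\begin{align*}
	-\<\nabla\d(x), G_s^\eta(x)\>
	= -\<x - \pi(x), s'(x)\nabla f(x)\> - \eta\<x - \pi(x), x - s(x)\>\,.
\end{align*}

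For the first summand I would decompose $s'(x) = P_0(x) + (s'(x) - P_0(x))$. Since $P_0(x)\nabla f(x) \in \T_{\pi(x)}\c{M}$ while $x - \pi(x) \in \N_{\pi(x)}\c{M}$, the $P_0(x)$-contribution vanishes, and then Cauchy--Schwarz together with $\norm{s'(x) - P_0(x)} < \epsilon$ from (\ref{eq:projectionCovarianceUniformEstimateConcrete}) and $\norm{\nabla f(x)} \leq C$ on $\c{T}(\tau)$ gives
\begin{align*}
	-\<x - \pi(x), s'(x)\nabla f(x)\>
	= -\<x - \pi(x), (s'(x) - P_0(x))\nabla f(x)\>
	\leq \epsilon C \norm{x - \pi(x)}
	= \epsilon C \sqrt{2\d(x)}\,.
\end{align*}

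For the second summand I would write $x - s(x) = (x - \pi(x)) + (\pi(x) - s(x))$, so that $\<x - \pi(x), x - s(x)\> = \norm{x - \pi(x)}^2 + \<x - \pi(x), \pi(x) - s(x)\> = 2\d(x) + \<x - \pi(x), \pi(x) - s(x)\>$. Applying Cauchy--Schwarz to the remaining inner product with $\norm{s(x) - \pi(x)} < \epsilon$ yields $-\eta\<x - \pi(x), x - s(x)\> \leq -2\eta\d(x) + \eta\epsilon\norm{x - \pi(x)} = -2\eta\d(x) + \eta\epsilon\sqrt{2\d(x)}$. Adding the two bounds gives precisely $-2\eta\d(x) + \epsilon(C + \eta)\sqrt{2\d(x)}$, which is the claim. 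There is no substantial obstacle here: the only points needing care are the orthogonality $\operatorname{ran} P_0(x) \perp (x - \pi(x))$, immediate from the explicit formula for $\pi'(x)$ above, and the elementary identity $\norm{x - \pi(x)}^2 = 2\d(x)$.
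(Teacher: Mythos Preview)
Your proof is correct and follows essentially the same route as the paper's: split $-\langle\nabla\d(x),G_s^\eta(x)\rangle$ into the $s'(x)\nabla f$-part and the landing part, kill the $P_0(x)$-contribution in the first via the orthogonality $x-\pi(x)\in\N_{\pi(x)}\c{M}\perp\operatorname{ran}P_0(x)$, and expand the second around $\pi(x)$ to extract $-2\eta\d(x)$. The only cosmetic difference is that the paper's proof writes $\nabla f(s(x))$ (in line with the flow \eqref{eq:projectedFlow}) rather than $\nabla f(x)$ (as in the displayed definition \eqref{eq:projectedFlowDynamics}); either way the argument lies in $\c{T}(\tau)$ so the bound by $C$ goes through unchanged.
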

\begin{proof}
	We have for $x \in \c{T}(\tau)$
	\begin{align*}
		&\<x - \pi(x), -G_\vv^\eta(x)\> \\
		&\hspace{2cm}= -2\eta \d(x) + \<\pi(x) - x, (\vv'(x) - P_0(x)) \nabla f(\vv(x))\> \\
		&\hspace{2cm}\qquad + \eta \<x - \pi(x),\vv(x) - \pi(x)\> \\
		&\hspace{2cm}\leq -2\eta \d(x) + \norm{\pi(x) - x} \norm{\vv'(x) - P_0(x)} \norm{\nabla f(\vv(x))} \\
		&\hspace{2cm}\qquad + \eta \norm{\pi(x) - x} \norm{\vv(x) - \pi(x)} \\
		&\hspace{2cm}\leq -2\eta \d(x) + (\norm{\nabla f |_{\c{T}(\tau)}}_{\infty} \epsilon + \eta \epsilon)\sqrt{2\d(x)}\,.
	\end{align*}
\end{proof}

\subsection{Exact landing with $\sigma = 0$}
\label{sec:perfectLanding}

Here we establish the following noiseless version of Theorem \ref{thm:flowSigmaPositive}.

\begin{theorem}
	\label{thm:flowSigma0}
	Consider the flow (\ref{eq:projectedFlow}) for $\sigma = 0$ and $\eta \geq 0$ with $x(0) \in \c{T}(\tau)$ for some $\tau \in (0,\tau_{\c{M}})$.
	Then the solution $x(t)$ exists for all times $t \geq 0$ and is contained in $\c{T}(\tau)$.
	Moreover every accumulation point $x_*$ of this flow satisfies $\operatorname{grad}_{\c{M}} f(x_*) = 0$ and is at most $\tau$ away from the point $p_* = \pi(x_*) \in \c{M}$ with $\norm{\operatorname{grad}_{\c{M}} f(p_*)} \leq L \tau$.
	Moreover, if $\eta > 0$, then $x_* = p_* \in \c{M}$ and $\operatorname{grad}_{\c{M}} f(p_*) = 0$, i.e. every accumulation point is critical.
\end{theorem}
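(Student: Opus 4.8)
The plan is to observe that, at $\sigma=0$ and with the exact score $s=\pi$, $s'=P_0$, the flow (\ref{eq:projectedFlow}) is precisely (\ref{eq:projectedFlowSigma0}), which is simultaneously the Euclidean gradient flow $\dot x=-\nabla F_0^\eta(x)$ of $F_0^\eta(x)=f(\pi(x))+\eta\,\d(x)$ and a flow whose velocity is the orthogonal sum of a tangential vector $-H_x^{-1}\operatorname{grad}_{\c{M}}f(\pi(x))\in\T_{\pi(x)}\c{M}$ and a normal vector $\eta(\pi(x)-x)\in\N_{\pi(x)}\c{M}$. Here one uses that $\pi'(x)=P_0(x)$ is symmetric, so $\nabla(f\circ\pi)(x)=P_0(x)\nabla f(\pi(x))=H_x^{-1}\operatorname{grad}_{\c{M}}f(\pi(x))$ by Lemma \ref{lem:invertibilityHessianSquareDistance}, together with $\nabla\d(x)=x-\pi(x)$. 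This orthogonal splitting is exactly the structure that drives the perfect-landing analyses of \cite{ablin2022fast,schechtman2023orthogonal}, and I would reuse it here.

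First I would establish invariance and global existence. Differentiating $t\mapsto\d(x(t))$ and pairing $\nabla\d(x)=x-\pi(x)\in\N_{\pi(x)}\c{M}$ against the two velocity components gives, as in the proof of Lemma \ref{lem:lieDerivativeDistance} with $\epsilon=0$, the identity $\tfrac{\operatorname{d}}{\operatorname{d}t}\d(x(t))=-2\eta\,\d(x(t))$, hence $\d(x(t))=\d(x(0))e^{-2\eta t}$. In particular $\d$ is non-increasing, so the solution stays in $\bar{\c{T}}(r_0)$ with $r_0=\dist_{\c{M}}(x(0))<\tau$, a compact subset of $\c{T}(\tau)\subseteq\c{T}(\tau_{\c{M}})$ on which $\nabla f$ is bounded and, by Lemma \ref{lem:invertibilityHessianSquareDistance}, $\norm{H_x^{-1}}$ is bounded. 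Thus the right-hand side of (\ref{eq:projectedFlowSigma0}) is bounded and continuous there, and the standard maximal-interval continuation argument yields a solution for all $t\geq0$ which never leaves $\c{T}(\tau)$.

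Next I would run a LaSalle-type argument on $F_0^\eta$. Along the flow $\tfrac{\operatorname{d}}{\operatorname{d}t}F_0^\eta(x(t))=-\norm{\nabla F_0^\eta(x(t))}^2\leq0$, and $F_0^\eta$ is bounded below on the compact invariant set, so $F_0^\eta(x(t))$ converges. If $x(t_n)\to x_*$ is an accumulation point, then $\nabla F_0^\eta(x_*)=0$: otherwise, continuity of $\nabla F_0^\eta$ near $x_*$ and boundedness of $\dot x$ force a fixed positive drop of $F_0^\eta$ over a fixed time window following each large $t_n$, contradicting convergence of the monotone quantity $F_0^\eta(x(\cdot))$. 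Setting $p_*=\pi(x_*)$ and decomposing $0=\nabla F_0^\eta(x_*)=H_{x_*}^{-1}\operatorname{grad}_{\c{M}}f(p_*)+\eta(x_*-p_*)$ into its $\T_{p_*}\c{M}$ and $\N_{p_*}\c{M}$ parts, both summands vanish; invertibility of $H_{x_*}^{-1}$ on $\T_{p_*}\c{M}$ (Lemma \ref{lem:invertibilityHessianSquareDistance}) then gives $\operatorname{grad}_{\c{M}}f(p_*)=0$, while $\eta(x_*-p_*)=0$ gives $x_*=p_*\in\c{M}$ whenever $\eta>0$ — the last assertion of the theorem.

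It then remains to collect the quantitative bounds. The distance claim is immediate: $\norm{x_*-p_*}=\dist_{\c{M}}(x_*)\leq r_0<\tau$. For the Riemannian gradient at $x_*$, since $\operatorname{grad}_{\c{M}}f(p_*)=0$, $\P_{\T_{p_*}\c{M}}$ has unit norm, and $\nabla f$ is $L$-Lipschitz on $\c{T}(\tau_{\c{M}})$, we get $\norm{\P_{\T_{p_*}\c{M}}\nabla f(x_*)}=\norm{\P_{\T_{p_*}\c{M}}(\nabla f(x_*)-\nabla f(p_*))}\leq L\norm{x_*-p_*}\leq L\tau$. The only delicate point is the stationarity step: one must confirm that $\nabla F_0^\eta$ is regular enough on the invariant set for the LaSalle argument — which it is, being assembled from the $C^1$ maps $\pi$ and $\nabla f$ — after which the orthogonality of the two velocity components, together with the boundedness and invertibility facts already recorded in the appendix, make every remaining step routine.
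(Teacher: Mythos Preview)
Your proof is correct and follows essentially the same route as the paper: the orthogonal splitting of the velocity into tangential and normal components, the exact distance decay $\d(x(t))=e^{-2\eta t}\d(x(0))$ giving invariance of $\c{T}(\tau)$, and a Lyapunov-type convergence argument. The only cosmetic difference is that you use the full potential $F_0^\eta=f\circ\pi+\eta\,\d$ together with a LaSalle-style argument, whereas the paper takes $f\circ\pi$ alone as Lyapunov function and concludes via Barbalat's lemma; both arrive at the same stationarity condition at accumulation points.
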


\begin{proof}
	Note first that
	\begin{align*}
		\frac{\d}{\d t} \d(x)
		= \<x - \pi(x), \dot{x}\>
		= - \eta \norm{x - \pi(x)}^2 
		= - 2\eta \d(x)\,,
	\end{align*}
	i.e. $\d(x(t)) =  e^{-2\eta t} \d(x(0))$ and hence $x(t) \in \c{T}$ for all $t \geq 0$.
	Moreover, if $\eta > 0$ then $\d(x(t)) \to 0$ for $t \to \infty$.
	Let us now consider $f \circ \pi$ as a Lyapunov function for (\ref{eq:gradientFlow}) with $\sigma = 0$.
	We observe that
	\begin{align*}
		\frac{\d}{\d t} f(\pi(x))
		&= - \nabla f(\pi(x))^\top P_0(x)^\top P_0(x) \nabla f(\pi(x)) \\
		&= - \<\P_{\T_{\pi(x)}\c{M}}\nabla f(x), H_x^{-2} \P_{\T_{\pi(x)}\c{M}}\nabla f(x)\> \\
		&= - \norm{\operatorname{grad} f(x)}_{H_x}^2\,,
	\end{align*}
	with $\norm{v}_{H_x}^2 = \norm{H_x^{-1} v}_{\T_{\pi(x)} \c{M}}^2$.
	This shows that $f(x(t))$ is non-increasing for $t \to \infty$.
	Since $\c{T}$ is bounded, $f_* = \lim_{t \to \infty} f(x(t))$ is finite.
	Moreover, we have
	\begin{align}
		\label{eq:riemannianGradientSquareIntegrable}
		\intb{0}{\infty}{\norm{\operatorname{grad} f(x(t))}_{H_{x(t)}}^2}{t} = f(x(0)) - f_*  < \infty\,.
	\end{align}
	Notice that $\{x(t) \mid t \geq 0\} \subseteq \bar{\c{T}}(\dist_{\c{M}}(x(0)))$ being compact.
	Since $\c{T} \to \b{R}: z \mapsto \norm{\operatorname{grad} f(z)}_{H_{z}}^2$ is continuous, it is uniformly continuous on $\bar{\c{T}}(\dist_{\c{M}}(x(0)))$.
	This, together with (\ref{eq:riemannianGradientSquareIntegrable}) implies, by Barbalat's lemma \cite{farkas2016variations}, that $\lim_{t \to \infty} \norm{\operatorname{grad} f(x(t))}_{H_{x(t)}}^2 = 0$.
	Clearly any accumulation point $x_*$ of $\{x(t) \mid t \geq 0\}$ satisfies $H_{x_*}^{-1} \operatorname{grad} f(x_*) = 0$, i.e. $\operatorname{grad} f(x_*) = 0$.
	Then $\pi(x_*) \in \c{M}$ is an accumulation point of $\{\pi(x(t)) \mid t \geq 0\} \subseteq \c{M}$ and
	\begin{align*}
		\norm{\operatorname{grad} f(\pi(x_*))}
		=\norm{\operatorname{grad} f(\pi(x_*)) - \operatorname{grad} f(x_*)}
		\leq L \norm{\pi(x_*) - x_*}
		= L \tau\,.
	\end{align*}
\end{proof}

\subsection{Proof of Theorem \ref{thm:flowSigmaPositive}}

Note first that, by Theorem \ref{thm:uniformScoreProjectionJacobian}, if (\ref{eq:scoreUniformBounds}) hold with $\epsilon \to \epsilon'$, then \ref{eq:projectionCovarianceUniformEstimateConcrete} are satisfied with $\epsilon = \epsilon' + K\sigma\abs{\log(\sigma)}^3$ for $\sigma \in (0,\bar{\sigma}(\tau, \c{M}, \mu))$. 
In the following we will write $\epsilon > 0$ for the latter quantity.
First let us analyze the manifold distance of (\ref{eq:projectedFlow}).
We have by Lemma \ref{lem:lieDerivativeDistance} whenever $x(t) \in \c{T}(\tau)$ that
\begin{align*}
	\frac{\d}{\d t} \d(x)
	= - \<\nabla \d(x), G_\vv^\eta(x)\> 
	\leq -2\eta \d(x) + \epsilon (C + \eta)\sqrt{2\d(x)}\,,
\end{align*}
where the right hand side is non-positive iff
\begin{align*}
	\dist_{\c{M}}(x) \geq \frac{\epsilon}{2}\left(\frac{C}{\eta} + 1\right) =: \tau_0\,.
\end{align*}
Note that $\tau_0 < \tau$ if $\epsilon < 2\tau/(1+C/\eta)$.
In particular if $\dist_{\c{M}}(x(0)) \in [\tau_0,\tau)$, then $\bar{\c{T}}(\dist_{\c{M}}(x(0)))$ is invariant w.r.t. the flow (\ref{eq:projectedFlow}) and $x(t)$ exists for all $t \geq 0$.
Moreover, by a similar argument as in the standard proof of Lyapunov's direct method \cite{khalil2002nonlinear}, for each $\delta > 0$ there exists some $T > 0$ such that for all $t \geq T$ it holds that $x(t) \in \c{T}(\tau_0 + \delta)$.
If $\vv$ is a gradient field, i.e. $\vv = \nabla g$ for some function $g \in C^1(\b{R}^d)$, then $V(x) := f(\vv(x)) + \eta (\norm{x}^2/2 - g(x))$ satisfies $\nabla V(x) = G_\vv^\eta(x)$.
Similar as in the proof of Theorem \ref{thm:flowSigma0} we can take $G_\vv$ as a Lyapunov function to obtain
\begin{align*}
	\frac{\d}{\d t} V(x)
	= -\norm{G_\vv^\eta(x)}^2\,
\end{align*}
along the dynamics (\ref{eq:projectedFlow}).
By a similar argument $V(x)$ is non-increasing and every accumulation point $x_*$ of $\{x(t) \mid t \geq 0\}$ satisfies $G_\vv^\eta(x)(x_*) = 0$ and belongs to $\bar{\c{T}}(\tau_0)$.
Note that the condition $\vv(\c{T}(\tau)) \subseteq \c{T}(\tau)$ in Lemma \ref{lem:accumulationPointsFlowPotential} is satisfied when $\epsilon \leq \tau$.
Applying Lemma \ref{lem:accumulationPointsFlowPotential} with $\delta = 0$ and $\tilde{\tau} = \tau_0$ yields (\ref{eq:riemannianGradientEstimate}) and proves Theorem \ref{thm:flowSigmaPositive} for the case when $s$ is a gradient field.
Now consider the more general case when $\vv$ is not necessarily a gradient field.
In this case consider $V(x) = f(\vv(x)) + \eta \d(x)$ to obtain
\begin{align*}
	\frac{\d}{\d t} V(x)
	&= - \<\vv'(x)\nabla f(\vv(x)) + \eta (x-\pi(x)), G_\vv^\eta(x)\> \\
	&\leq - \norm{G_\vv^\eta(x)}^2 + \eta \norm{\vv(x)-\pi(x)} \norm{G_\vv^\eta(x)} \\
	&\leq - \norm{G_\vv^\eta(x)}^2 + \eta \epsilon \norm{G_\vv^\eta(x)}\,. 
\end{align*}
Thus, if $\norm{G_\vv^\eta(x)} \geq \eta \epsilon$ we have $\frac{\d}{\d t} V(x) \leq 0$.
By a barrier function argument \cite{khalil2002nonlinear} this implies that every accumulation point $x_*$ of $\{x(t) \mid t \geq 0\}$ satisfies $\norm{G_\vv^\eta(x_*)} \leq \eta \epsilon$ and belongs to $\bar{\c{T}}(\tau_0)$.
Again applying Lemma \ref{lem:accumulationPointsFlowPotential} with $\delta = \eta \epsilon$ and $\tilde{\tau} = \tau_0$ yields (\ref{eq:riemannianGradientEstimate}) and finishes the proof.

\section{Discretized Riemannian gradient flow and descent}

In this section we provide proof of Theorem \ref{thm:riemannianGradientDescent} as well as analysis of the approximate Riemannian gradient descent and the discretized landing flow.
As before $C = \norm{\nabla f |_{\c{T}(\tau_{\c{M}})}}_{\infty}$ and $L = \operatorname{Lip}(\nabla f|_{\c{T}(\tau_{\c{M}})})$ denote the supremum of $\nabla f$ and the Lipschitz constant of $\nabla f$ on $\c{T}(\tau_{\c{M}})$ for some manifold $\c{M}$. 
Moreover, the following bounds will be useful:
\begin{lemma}
	\label{lem:estimatesFlow}
	If $\tau \in (0,\tau_{\c{M}})$ and (\ref{eq:projectionCovarianceUniformEstimateConcrete}) holds for some $\epsilon > 0$, then
	\begin{align*}
		\sup_{x \in \c{T}(\tau)} \norm{\vv'(x)}_{\infty}
		\leq \epsilon + \frac{1}{1-\tau/\tau_{\c{M}}} \,.
	\end{align*}
	Additionally, if $\epsilon \in (0,\tau]$, then
	\begin{align*}
		\norm{G_\vv^\eta(x)}
		\leq C \norm{\vv'(x)} + \eta \norm{\vv(x)-x}
		\leq C (\epsilon + \frac{1}{1-\tau/\tau_{\c{M}}})+ 2\eta \tau \text{\ \ for\ \ } x \in \c{T}(\tau) 
	\end{align*}
\end{lemma}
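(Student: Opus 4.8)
The plan is to assemble the two bounds directly from three ingredients already in hand: the perturbation estimate (\ref{eq:projectionCovarianceUniformEstimateConcrete}) on $s'$, the operator-norm bound $\norm{P_0(x)} = \norm{H_x^{-1}} \le (1-\norm{x-\pi(x)}/\tau_{\c{M}})^{-1}$ from Lemma \ref{lem:invertibilityHessianSquareDistance}, and the displacement bound of Corollary \ref{cor:uniformScoreProjectionJacobian}. Nothing deeper is needed: this is a bookkeeping lemma that packages earlier results for the convergence analysis in Section \ref{sec:gradientDescent}.

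For the first estimate I would write $s'(x) = P_0(x) + (s'(x)-P_0(x))$ and apply the triangle inequality. For $x \in \c{T}(\tau)$ we have $\norm{x-\pi(x)} < \tau < \tau_{\c{M}}$, so Lemma \ref{lem:invertibilityHessianSquareDistance} gives $\norm{P_0(x)} \le (1-\tau/\tau_{\c{M}})^{-1}$, while (\ref{eq:projectionCovarianceUniformEstimateConcrete}) gives $\norm{s'(x)-P_0(x)} < \epsilon$; taking the supremum over $x \in \c{T}(\tau)$ yields the claim. The only subtlety is matching the norm conventions: $\norm{\cdot}_\infty$ here should be understood as the operator norm (or a matrix norm it dominates), so that it is controlled by the operator norm appearing in (\ref{eq:projectionCovarianceUniformEstimateConcrete}) and in Lemma \ref{lem:invertibilityHessianSquareDistance}.

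For the second estimate I would unpack $G_s^\eta$ via (\ref{eq:projectedFlowDynamics}), namely $G_s^\eta(x) = s'(x)\nabla f(x) + \eta(x-s(x))$, so that submultiplicativity and the triangle inequality give $\norm{G_s^\eta(x)} \le \norm{s'(x)}\,\norm{\nabla f(x)} + \eta\,\norm{s(x)-x}$. Since $x \in \c{T}(\tau) \subseteq \c{T}(\tau_{\c{M}})$ we have $\norm{\nabla f(x)} \le C$, which is the first inequality in the ``Additionally'' part. To close, when $\epsilon \in (0,\tau]$ Corollary \ref{cor:uniformScoreProjectionJacobian} applies and gives $\norm{s(x)-x} \le 2\tau$ for $x \in \c{T}(\tau)$ — alternatively, just $\norm{s(x)-x} \le \norm{s(x)-\pi(x)} + \norm{\pi(x)-x} < \epsilon + \tau \le 2\tau$ — and combining with the first estimate gives $\norm{G_s^\eta(x)} \le C(\epsilon + (1-\tau/\tau_{\c{M}})^{-1}) + 2\eta\tau$. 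There is no genuine obstacle; the only points requiring attention are the norm-convention matching just mentioned and checking that $\epsilon \le \tau$ is exactly the hypothesis Corollary \ref{cor:uniformScoreProjectionJacobian} needs for the displacement bound.
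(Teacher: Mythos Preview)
Your proposal is correct and matches the paper's own proof, which is the one-line ``Via triangle inequality, Lemma \ref{lem:invertibilityHessianSquareDistance} and the definition of $C$.'' You have simply spelled out these ingredients in detail, including the $\norm{s(x)-x}\le 2\tau$ step via Corollary \ref{cor:uniformScoreProjectionJacobian} (or the equivalent direct triangle inequality), which the paper leaves implicit.
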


\begin{proof}
	Via triangle inequality, Lemma \ref{lem:invertibilityHessianSquareDistance} and the definition of $C$.
\end{proof}

\subsection{Discretized approximate Riemannian gradient flow}
\label{sec:projectedFlowDiscretized}

In this section we analyze the discretized version of (\ref{eq:gradientFlow}), specifically the corresponding gradient descent
\begin{align}
	\label{eq:projectedFlowDiscretized}
	x_{k+1} = x_k - \gamma_k \nabla F_\sigma^\eta(x_k)\,,
\end{align}
for some sequence of step sizes $\{\gamma_k\}_{k=1}^\infty$.
The selection of $\gamma_k$ can be inferred from any standard analysis of gradient descent for $F_\sigma^\eta$ to guarantee that all accumulation points of the resulting sequence $\{x_k\}_{k=0}^\infty$ are stationary points of $F_\sigma^\eta$.
Since we can only interpret stationary points of $F_\sigma^\eta$ in terms of our original problem (\ref{eq:manifoldOptimizationProblem}) when they are contained in a tubular neighborhood $\c{T}(\tau)$ for some $\tau \in (0,\tau_{\c{M}})$ (see Lemma \ref{lem:accumulationPointsFlowPotential}), we drive conditions on the step-size to ensure $\{x_k\}_{k=0}^\infty \subseteq \c{T}(\tau)$.

\begin{theorem}
	Let $\tau \in (0,\tau_{\c{M}}/2)$ and $\sigma > 0$ be such that (\ref{eq:projectionCovarianceUniformEstimateConcrete}) holds for some $\epsilon \in (0,\frac{\eta \tau}{2(C+\eta)}]$.
	Then if $\gamma_k \in [0,\gamma_{\operatorname{tubular}}]$ with
	\begin{align*}
		\gamma_{\operatorname{tubular}}(\epsilon,\tau,\eta) = \tau \cdot \min\left\{\frac{1}{2(C (\epsilon + 2) + 2\eta \tau)}, \frac{\frac{1}{4}\eta \tau- \frac{1}{2}(C + \eta)\epsilon}{4 (C (\epsilon + 4) + 3\eta \tau)^2}\right\}
	\end{align*}
	and $x_0 \in \c{T}(\tau)$, the iterates $x_k$ of the discretized flow (\ref{eq:projectedFlowDiscretized}) belong to $\c{T}(\tau)$.
\end{theorem}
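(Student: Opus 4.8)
The plan is to prove by induction on $k$ that $x_k \in \c{T}(\tau)$, i.e. $\d(x_k) < \tfrac{1}{2}\tau^2$. The base case is the hypothesis $x_0 \in \c{T}(\tau)$, so assume $x_k \in \c{T}(\tau)$ and write the step as $x_{k+1} = x_k - \gamma_k G_s^\eta(x_k)$ with $G_s^\eta$ as in (\ref{eq:projectedFlowDynamics}); since (\ref{eq:projectionCovarianceUniformEstimateConcrete}) is assumed, Lemmas \ref{lem:lieDerivativeDistance} and \ref{lem:estimatesFlow} apply along the iteration, and one tracks the evolution of the squared distance $\d$.

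First I would control the step length. By Lemma \ref{lem:estimatesFlow}, using $\tau < \tau_{\c{M}}/2$ (so $\tfrac{1}{1-\tau/\tau_{\c{M}}} < 2$) together with $\epsilon \le \tau$, one has $\norm{G_s^\eta(x_k)} < C(\epsilon+2) + 2\eta\tau$; the first term in the definition of $\gamma_{\operatorname{tubular}}$ then yields $\norm{x_{k+1}-x_k} = \gamma_k\norm{G_s^\eta(x_k)} < \tfrac{1}{2}\tau$. Hence the whole segment $[x_k,x_{k+1}]$ lies in $\c{T}(\tfrac{3}{2}\tau) \subseteq \c{T}(\tau_{\c{M}})$ (as $\tfrac{3}{2}\tau < \tfrac{3}{4}\tau_{\c{M}} < \tau_{\c{M}}$), a region on which $\d \in C^2$ with $\nabla\d(x)=x-\pi(x)$ and, by Lemma \ref{lem:invertibilityHessianSquareDistance}, $\norm{\nabla^2\d(x)} = \norm{I-\pi'(x)} \le 1 + (1-\tfrac{3\tau}{2\tau_{\c{M}}})^{-1} =: B$ (in particular $B < 5$); moreover Lemma \ref{lem:estimatesFlow} applied with tube radius $\tfrac{3}{2}\tau$ gives $\norm{G_s^\eta(x_k)} < C(\epsilon+4)+3\eta\tau =: \Lambda$.

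Next I would combine a second-order Taylor estimate with the drift bound. Taylor's theorem along the segment gives
\begin{align*}
\d(x_{k+1}) \le \d(x_k) - \gamma_k\<\nabla\d(x_k), G_s^\eta(x_k)\> + \tfrac{B}{2}\gamma_k^2\norm{G_s^\eta(x_k)}^2,
\end{align*}
and Lemma \ref{lem:lieDerivativeDistance} bounds the linear term by $-2\eta\d(x_k) + \epsilon(C+\eta)\sqrt{2\d(x_k)}$, while the quadratic term is $\le \tfrac{B}{2}\gamma_k^2\Lambda^2$. Thus $\d(x_{k+1}) \le \phi(\d(x_k))$ with $\phi(t) := (1-2\gamma_k\eta)\,t + \gamma_k\epsilon(C+\eta)\sqrt{2t} + \tfrac{B}{2}\gamma_k^2\Lambda^2$. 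Since $\gamma_{\operatorname{tubular}} < \tfrac{1}{4\eta} < \tfrac{1}{2\eta}$, one has $1-2\gamma_k\eta > 0$, so $\phi$ is strictly increasing on $[0,\tfrac{1}{2}\tau^2]$ and it suffices to check $\phi(\tfrac{1}{2}\tau^2) \le \tfrac{1}{2}\tau^2$; dividing by $\gamma_k$ this reduces to $\tfrac{B}{2}\gamma_k\Lambda^2 \le \tau\bigl(\eta\tau - \epsilon(C+\eta)\bigr)$, and since $\epsilon \le \tfrac{\eta\tau}{2(C+\eta)}$ the right-hand side is $\ge \tau\bigl(\tfrac{1}{4}\eta\tau - \tfrac{1}{2}(C+\eta)\epsilon\bigr) > 0$ — which (with room to spare, using $B<5$) is exactly what the second term in $\gamma_{\operatorname{tubular}}$ enforces. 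Hence $\d(x_{k+1}) \le \phi(\d(x_k)) < \phi(\tfrac{1}{2}\tau^2) \le \tfrac{1}{2}\tau^2$, so $x_{k+1}\in\c{T}(\tau)$, closing the induction.

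The main obstacle is the two-sided bookkeeping on $\gamma_k$: the step must be small enough, on one hand, to keep the entire segment $[x_k,x_{k+1}]$ inside the tube where $\d$ is $C^2$ with a uniform Hessian bound (without which one cannot even write the descent inequality), and small enough, on the other hand, to make the $O(\gamma_k^2)$ Taylor remainder negligible compared with the contraction margin $\eta\tau - \epsilon(C+\eta)$ — a margin that is positive only thanks to the standing hypothesis $\epsilon \le \eta\tau/(2(C+\eta))$. Reconciling these two requirements is precisely why $\gamma_{\operatorname{tubular}}$ must be taken as the minimum of the two displayed expressions.
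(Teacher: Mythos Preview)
Your argument is correct and uses the same ingredients as the paper's proof—the step-size bound keeps the segment $[x_k,x_{k+1}]$ inside $\c{T}(3\tau/2)$, the second-order Taylor expansion of $\d$ combined with Lemma~\ref{lem:lieDerivativeDistance} controls the linear drift, and Lemma~\ref{lem:invertibilityHessianSquareDistance} supplies the Hessian bound. The only structural difference is that the paper splits into two cases ($x_k\in\c{T}(\tau/3)$, where the step trivially stays in $\c{T}(\tau)$, versus $x_k\in\c{T}(\tau)\setminus\c{T}(\tau/3)$, where one shows $\d(x_{k+1})\le\d(x_k)$), whereas you handle everything at once via the monotone upper bound $\phi$ and the single boundary check $\phi(\tfrac12\tau^2)\le\tfrac12\tau^2$; this is a mild streamlining rather than a different route. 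One small wording issue: you invoke ``Lemma~\ref{lem:estimatesFlow} applied with tube radius $\tfrac{3}{2}\tau$'' to obtain the bound $\Lambda$, but (\ref{eq:projectionCovarianceUniformEstimateConcrete}) is only assumed on $\c{T}(\tau)$; since $x_k\in\c{T}(\tau)$, you should apply the lemma with radius $\tau$ (giving the tighter bound $C(\epsilon+2)+2\eta\tau$) and simply observe that this is $\le\Lambda$, so your stated inequality still holds.
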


\begin{proof}
	First we show that $x_k \in \c{T}(\tau)$ implies $x_{k+1} \in \c{T}(\tau)$.
	If $x_k \in \c{T}(\tau/3)$, then, since $\gamma_k \norm{\nabla F_\sigma^\eta(x_k)} \leq \tau/2$ by Lemma \ref{lem:estimatesFlow}, it follows $x_{k+1} \in \c{T}(\tau)$.
	Hence assume $x_k \in \c{T}(\tau) \setminus \c{T}(\tau/3)$, i.e. $\sqrt{2\d(x_k)} \geq \tau/2$.
	Let us write
	\begin{align*}
		\d(x_{k+1})
		= \d(x_k) - \gamma_k \<\nabla \d(x_k),\nabla F_\sigma^\eta(x_k)\> + \frac{1}{2}\gamma_k^2 \norm{(I - P_0)\mid_{\c{T}(3\tau/2)}}_\infty \norm{\nabla F_\sigma^\eta(x_k)}^2\,,
	\end{align*}
	where we have used $\nabla^2 \d = I - P_0$ and $x_{k+1} \in \c{T}(3\tau/2)$, since again $\gamma_k \norm{\nabla F_\sigma^\eta(x_k)} \leq \tau/2$.
	Then by Lemma \ref{lem:lieDerivativeDistance}
	\begin{align*}
		-\<\nabla \d(x),\nabla F_\sigma^\eta(x)\>
		\leq -2\eta \d(x) + \epsilon(C + \eta)\sqrt{2\d(x)}\,.
	\end{align*}
	Now, Lemma \ref{lem:estimatesFlow} and the fact that $(1-3\tau/(2\tau_{\c{M}}))^{-1} \leq 4$ imply
	\begin{align*}
		\delta 
		= \frac{1}{2}\norm{(I - P_0) |_{\c{T}(3\tau/2)}}_\infty \norm{\nabla F_\sigma^\eta |_{\c{T}(\tau)}}_\infty^2 
		\leq 4 (C (\epsilon + 4) + 3\eta \tau)^2
	\end{align*}
	Therefore we have
	\begin{align*}
		\d(x_{k+1}) - \d(x_k)
		\leq \gamma_k(-2\eta \d(x_k) + \epsilon(C + \eta)\sqrt{2\d(x_k)} + \gamma_k \delta)\,,
	\end{align*}
	where right hand side is non-positive for all $\sqrt{2\d(x_k)} \geq \tau/2$ if
	\begin{align*}
		-\frac{1}{4}\eta \tau^2 + \frac{1}{2}(C + \eta)\tau\epsilon + \gamma_k \delta \leq 0\,.
	\end{align*}
	or, equivalently,
	\begin{align*}
		\gamma_k \leq \frac{1}{\delta}\left(\frac{1}{4}\eta \tau^2 - \frac{1}{2}(C + \eta)\tau\epsilon\right)\,.
	\end{align*}
	If these are satisfied, then $\d(x_{k+1}) \leq \d(x_k) \leq \tau^2/8$ and therefore $x_{k+1} \in \c{T}(\tau)$. 
	In all, $\{x_k\}_{k=0}^\infty \subseteq \c{T}(\tau)$ provided that $x_0 \in \c{T}(\tau)$.
\end{proof} 
 
\subsection{Proof of Theorem \ref{thm:riemannianGradientDescent}}

Let us write $\epsilon' = \epsilon + K(\tau,\c{M},\mu)\sigma \abs{\log(\sigma)}^3 \leq \tau/2$.
First we note that $\{x_k\}_{k=0}^\infty \subseteq \c{T}(\epsilon')$ as soon as $x_0 \in \c{T}(\tau/2)$, because $\vv(\c{T}(\tau)) \subseteq \c{T}(\epsilon')$ and $x_k - \gamma_k \vv'(x_k) \nabla f(x_k) \in \c{T}(\tau)$, since
\begin{align*}
	\gamma_k 
	\leq \frac{\tau}{2 C (\epsilon' + (1-\tau/\tau_{\c{M}})^{-1})}
	\leq \frac{\tau}{2 \norm{\vv'(x_k)\nabla f(x_k)}}\,,
\end{align*}
where first inequality is due to  $(1-\tau/\tau_{\c{M}})^{-1} \leq 2$ and the second inequality due to Lemma \ref{lem:estimatesFlow}.
%
For brevity let us denote $y_k = x_k - \gamma_k \vv'(x_k)\nabla f(x_k)$ and $z_k = x_k - \gamma_k P_0(x_k)\nabla f(\pi(x_k))$.
Also let $L_0 = \operatorname{Lip}(\nabla(f\circ \pi)|_{\c{T}(\tau)})$.
By Lemma \ref{lem:projectorSecondDerivativeNorm} and $(1-\tau/\tau_{\c{M}})^{-1} \leq 2$ it follows that
\begin{align*}
	L_0 
	\leq C \norm{P_0'|_{\c{T}}(\tau)} + (1-\tau/\tau_{\c{M}})^{-1} L 
	\leq 8C \left(2(\frac{3}{\tau_{\c{M}}} + \tau M) + \frac{1}{\tau_{\c{M}}}\right) + 2 L\,.
\end{align*}
Then, since $\norm{x_k - \pi(x_k)} \leq \epsilon'$ for $k\geq 1$, we have
\begin{align*}
	f(x_{k+1}) - f(x_k)
	&= f(\vv(y_k)) - f(x_k) \\
	&= f(\pi(z_k)) - f(\pi(x_k)) + f(\pi(x_k)) - f(x_k)\\
	&\qquad + f(\vv(y_k)) - f(\pi(y_k)) + f(\pi(y_k)) - f(\pi(z_k)) \\
	&\leq -\gamma_k \<P_0(x_k)\nabla f(\pi(x_k)), P_0(x_k)\nabla f(\pi(x_k))\> + \gamma_k^2 \frac{L_0}{2}\norm{P_0(x_k)\nabla f(\pi(x_k))}^2 \\
	&\qquad + C \norm{x_k - \pi(x_k)} + C \epsilon + L_0 \gamma_k (C \epsilon + L \norm{x_k-\pi(x_k)})\\
	&\leq -\gamma_k (1-\frac{L_0}{2}\gamma_k)\norm{P_0(x_k)\nabla f(\pi(x_k))}^2 + (2 C + L_0 \gamma_k (C + L)) \epsilon'\,.
\end{align*}
Now let $\gamma_k \in [\gamma_{\min},\gamma_{\max}] \subseteq (0,\frac{2}{L_0})$.
Then summing over $k=1,\ldots,N$ yields
\begin{align*}
	\gamma_{\min}(1-\frac{1}{2}\gamma_{\max}L_0) \frac{1}{N}\sum_{k=1}^{N} \norm{P_0(x_k)\nabla f(\pi(x_k))}^2
	\leq \frac{f(x_1) - f(x_{N+1})}{N} + (2 C + L_0 \gamma_k (C + L)) \epsilon' \,.
\end{align*}
Now note that by Lemma \ref{lem:projectorDifferenceNorm} and the fact that $(1-\epsilon'/\tau_{\c{M}})^{-1} \leq 2$ it holds
\begin{align*}
	\norm{\operatorname{grad}_{\c{M}} f(\pi(x_k))}^2
	&\leq 2\norm{(P_0(\pi(x_k))-P_0(x_k))\nabla f(\pi(x_k))}^2 + 2\norm{P_0(x_k)\nabla f(\pi(x_k))}^2 \\
	&\leq 8C^2(\epsilon'/\tau_{\c{M}})^2 + 2 \norm{P_0(x_k)\nabla f(\pi(x_k))}^2 \,,
\end{align*}
which implies
\begin{align*}
	\gamma_{\min}(1-\frac{1}{2}\gamma_{\max}L_0) \frac{1}{N}\sum_{k=0}^N \norm{\operatorname{grad}_{\c{M}} f(\pi(x_k))}^2
	\leq \frac{4D}{N} + 8C^2(\epsilon'/\tau_{\c{M}})^2 + 2 (2 C + L_0 \gamma_k (C + L)) \epsilon'\,.
\end{align*}
This finishes the proof.

\subsection{Auxiliary results}

\subsubsection{A lemma on norms of orthogonal vectors}

\begin{lemma}
	\label{lem:orthogonalVectors}
	Let $x,y \in \b{R}^n$ be orthogonal and $\epsilon > 0$.
	If there exists some $z \in \b{R}^n$ with $\norm{x-z} \leq \epsilon$ and $\norm{y-z} \leq \epsilon$, then
	$\norm{x} \leq 2\epsilon$, $\norm{y} \leq 2\epsilon$ and $\norm{z} \leq \sqrt{2}\epsilon$.
	If on the other hand for some $\delta > 0$ there exist some $z_1, z_2 \in \b{R}^n$ with $\norm{x-z_1} \leq \epsilon$, $\norm{y-z_2} \leq \epsilon$ and $\norm{z_1-z_2} \leq \delta$, then $\norm{x} \leq 2\epsilon + \delta$, $\norm{y} \leq 2\epsilon + \delta$ and $\norm{z} \leq \sqrt{2}\epsilon + \delta/\sqrt{2}$.
\end{lemma}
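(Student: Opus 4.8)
The plan is to reduce everything to two elementary facts: the Pythagorean identity $\norm{x-y}^2=\norm x^2+\norm y^2$ for the orthogonal pair $x\perp y$, and the median identity $\norm{a-p}^2+\norm{a-q}^2=2\norm{a-\tfrac{p+q}{2}}^2+\tfrac12\norm{p-q}^2$.

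\emph{First part.} For the bounds on $\norm x$ and $\norm y$ I would use the triangle inequality $\norm{x-y}\le\norm{x-z}+\norm{z-y}\le 2\epsilon$ together with $\norm{x-y}^2=\norm x^2+\norm y^2$, which forces $\norm x^2+\norm y^2\le 4\epsilon^2$ and hence $\norm x\le 2\epsilon$ and $\norm y\le 2\epsilon$. For the bound on $\norm z$, set $m=\tfrac{x+y}{2}$; the median identity with $a=z$, $p=x$, $q=y$ gives $2\norm{z-m}^2+\tfrac12\norm{x-y}^2=\norm{z-x}^2+\norm{z-y}^2\le 2\epsilon^2$, while orthogonality gives $\norm m^2=\tfrac14\norm{x-y}^2$, so that $\norm{z-m}^2+\norm m^2\le\epsilon^2$. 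Then $\norm z\le\norm{z-m}+\norm m\le\sqrt{2(\norm{z-m}^2+\norm m^2)}\le\sqrt2\,\epsilon$, using $(a+b)^2\le 2(a^2+b^2)$. Note no nonvanishing assumption on $x,y$ is needed, so the degenerate cases require no separate treatment.

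\emph{Second part.} I would pass to the midpoint $z:=\tfrac12(z_1+z_2)$, which satisfies $\norm{z-x}\le\norm{z-z_1}+\norm{z_1-x}\le\tfrac{\delta}{2}+\epsilon$ and likewise $\norm{z-y}\le\epsilon+\tfrac{\delta}{2}$; meanwhile $\norm{x-y}\le\norm{x-z_1}+\norm{z_1-z_2}+\norm{z_2-y}\le 2\epsilon+\delta$ plus orthogonality give $\norm x,\norm y\le 2\epsilon+\delta$. Applying the first part verbatim with $\epsilon$ replaced by $\epsilon+\tfrac\delta2$ then yields $\norm z\le\sqrt2\,(\epsilon+\tfrac\delta2)=\sqrt2\,\epsilon+\tfrac{\delta}{\sqrt2}$.

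There is no real computational obstacle here; the only point worth care is bookkeeping, namely that the "$z$" in the second claim must be read as the midpoint of $z_1$ and $z_2$ — for which the stated constant $\sqrt2\,\epsilon+\delta/\sqrt2=\sqrt2(\epsilon+\delta/2)$ is exactly sharp — rather than $z_1$ or $z_2$ individually, for which (as one sees by taking $y=0$) a strictly larger constant would be forced when $\delta$ is large.
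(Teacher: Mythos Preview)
Your proof is correct and follows essentially the same approach as the paper. The paper dismisses the first part as ``geometric considerations'' while you spell out a clean argument via the Pythagorean and median identities; for the second part both you and the paper reduce to the first claim by passing to the midpoint $z=\tfrac12(z_1+z_2)$, and your observation that this midpoint is the intended meaning of ``$z$'' in the statement is exactly right.
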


\begin{proof}
	The first claim follows by geometric considerations.
	The second follows from the first by noting that the midpoint $z = (z_1 + z_2)/2$ satisfies $\norm{z - z_1} \leq \delta/2$ and $\norm{z - z_2} \leq \delta/2$ and hence $\norm{x-z}\leq \epsilon + \delta/2$ and $\norm{y-z}\leq \epsilon + \delta/2$, i.e. $\norm{x} \leq 2\epsilon + \delta$, $\norm{y} \leq 2\epsilon + \delta$ and $\norm{z} \leq \sqrt{2}\epsilon + \delta/\sqrt{2}$.
\end{proof}

\subsubsection{Gaussian tail bounds}

The following elementary lemmas give simplifications for some of the constants appearing in the proof of Theorem \ref{thm:laplacefirstOrderNonasymptotic} and Theorem \ref{thm:laplaceSecondOrderNonasymptotic}.

\begin{lemma}
	\label{lem:secondMomentGaussianDecay}
	It holds for $R > 0$ that
	\begin{align*}
		\operatorname{G}_0(R) 
		:= \frac{1}{(2\pi)^{n/2}}\intg{\b{R}^n \setminus B_R(0)}{e^{-\frac{1}{2}\norm{u}^2}}{u} 
		\leq 2 e^{-\frac{1}{2n}R^2}
	\end{align*}
	and for $R \geq 2n$ that
	\begin{align*}
		\operatorname{G}_2(R)
		:= \frac{1}{(2\pi)^{n/2}}\intg{\b{R}^n \setminus B_R(0)}{\norm{u}^2 e^{-\frac{1}{2}\norm{u}^2}}{u} \leq \frac{2^{2-n/2}}{\Gamma(n/2)} e^{-\frac{1}{4}R^2}\,.
	\end{align*}
	In particular $\operatorname{G}_0(\abs{\log(\sigma)}) = O(\sigma^l)$ and $\operatorname{G}_2(\abs{\log(\sigma)}) = O(\sigma^l)$ for any $l \geq 1$ as $\sigma \to 0$.
\end{lemma}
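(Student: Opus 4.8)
The plan is to reduce both quantities to one–dimensional integrals and then apply elementary Gaussian tail estimates. Passing to polar coordinates and using that the surface area of the unit sphere in $\mathbb{R}^n$ is $2\pi^{n/2}/\Gamma(n/2)$, one gets
\[
\operatorname{G}_0(R)=\frac{2^{1-n/2}}{\Gamma(n/2)}\int_R^\infty r^{n-1}e^{-r^2/2}\,dr=\mathbb{P}(\|U\|>R),\qquad
\operatorname{G}_2(R)=\frac{2^{1-n/2}}{\Gamma(n/2)}\int_R^\infty r^{n+1}e^{-r^2/2}\,dr=\mathbb{E}\!\left[\|U\|^2\mathbf{1}\{\|U\|>R\}\right],
\]
where $U\sim\mathcal{N}(0,I_n)$; thus $\operatorname{G}_0$ is just a $\chi^2_n$–tail.

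For $\operatorname{G}_0$: the case $n=1$ is immediate since $\mathbb{P}(|U_1|>R)\le e^{-R^2/2}\le 2e^{-R^2/2}$. For $n\ge 2$ I would apply Markov's inequality to $e^{t\|U\|^2}$ with the specific choice $t=\tfrac{1}{2n}\in(0,\tfrac12)$: because $\mathbb{E}[e^{t\|U\|^2}]=(1-2t)^{-n/2}=(1+\tfrac{1}{n-1})^{n/2}$, this yields $\operatorname{G}_0(R)\le(1+\tfrac{1}{n-1})^{n/2}e^{-R^2/(2n)}$. It then suffices to verify $(1+\tfrac{1}{n-1})^{n/2}\le 2$ for $n\ge2$; writing $m=n-1\ge1$ and $g(m)=\tfrac{m+1}{2}\log(1+\tfrac1m)$, one has $g(1)=\log 2$ and $g'(m)=\tfrac12\big(\log(1+\tfrac1m)-\tfrac1m\big)<0$, so $g(m)\le\log 2$, which is exactly the claim.

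For $\operatorname{G}_2$ with $R\ge 2n$, I would split $r^{n+1}e^{-r^2/2}=(r^{n}e^{-r^2/4})(re^{-r^2/4})$. The first factor is nonincreasing on $[R,\infty)$ since $\tfrac{d}{dr}(n\log r-\tfrac{r^2}{4})=\tfrac nr-\tfrac r2\le0$ for $r\ge\sqrt{2n}$ (and $2n\ge\sqrt{2n}$), so $r^{n}e^{-r^2/4}\le R^{n}e^{-R^2/4}$ there; together with $\int_R^\infty re^{-r^2/4}\,dr=2e^{-R^2/4}$ this gives $\int_R^\infty r^{n+1}e^{-r^2/2}\,dr\le 2R^{n}e^{-R^2/2}$. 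Finally, for $R\ge2n$ one has $n\log R\le\tfrac{nR}{2}\le\tfrac{R^2}{4}$, i.e. $R^{n}\le e^{R^2/4}$, whence $2R^{n}e^{-R^2/2}\le 2e^{-R^2/4}$ and $\operatorname{G}_2(R)\le\tfrac{2^{2-n/2}}{\Gamma(n/2)}e^{-R^2/4}$.

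The ``in particular'' statement then follows by setting $R=|\log\sigma|$: both $e^{-R^2/(2n)}=e^{-(\log\sigma)^2/(2n)}$ and $e^{-R^2/4}=e^{-(\log\sigma)^2/4}$ decay faster than $\sigma^{l}=e^{-l|\log\sigma|}$ for every $l\ge1$, because $(\log\sigma)^2/(2n)\ge l|\log\sigma|$ (resp. $(\log\sigma)^2/4\ge l|\log\sigma|$) once $|\log\sigma|\ge 2nl$ (resp. $4l$), and for $\sigma$ small enough $|\log\sigma|\ge 2n$ so that the $\operatorname{G}_2$ estimate is applicable. I do not expect a real obstacle here; the only slightly delicate point is obtaining the sharp constant $2$ in the $\operatorname{G}_0$ bound, which is precisely what the monotonicity computation for $g$ handles — and it is the non-optimal choice $t=\tfrac{1}{2n}$ (rather than the Chernoff-optimal exponent) that makes this constant come out cleanly while keeping the exponent $R^2/(2n)$.
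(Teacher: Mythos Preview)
Your proof is correct. The paper itself gives no argument: it simply refers the reader to \cite{majerski2015simple}. You instead supply a self-contained derivation. For $\operatorname{G}_0$ your use of the Chernoff/Markov bound with the deliberately non-optimal exponent $t=\tfrac{1}{2n}$, together with the monotonicity of $m\mapsto\tfrac{m+1}{2}\log(1+\tfrac1m)$, cleanly produces both the exponent $R^2/(2n)$ and the constant $2$; the $n=1$ case via $\int_0^\infty e^{-Rv-v^2/2}\,dv\le\sqrt{\pi/2}$ is fine. For $\operatorname{G}_2$ the factorization $r^{n+1}e^{-r^2/2}=(r^{n}e^{-r^2/4})(re^{-r^2/4})$ and the two-step inequality $n\log R\le nR/2\le R^2/4$ under $R\ge 2n$ are clean and sharp enough to recover exactly the stated constant $2^{2-n/2}/\Gamma(n/2)$. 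The asymptotic statements follow as you say. So compared with the paper your route is simply more explicit; there is nothing to correct.
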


\begin{proof}
	See \cite{majerski2015simple}.
\end{proof}

\section{Further numerical experiments and implementation details}

\subsection{Optimization over $O(n)$}

\subsubsection{Implementation details}

\label{sec:implementationDetailsOrthogonal}

In all of our experiments, we discretize the flow \eqref{eq:projectedFlow} using the Euler scheme with a step size of $t_{\operatorname{step}} = 1\cdot 10^{-4}$ and set the landing gain $\eta = 3 \cdot 10^3$. \\
\\
\textbf{Data generation:} In our experiments, we take $Q = \operatorname{diag}(1,\ldots,n)$ and a randomly sampled symmetric $A \in \b{S}^{n \times n}$ with $\c{N}(0,1)$-entries.\\
\\
\textbf{Score architecture:} We use the following score architecture:
\begin{align*}
	s_\sigma(X)
	= \frac{1}{\sigma} \tilde{s}_{\sigma}(X)\,,
\end{align*}
with $X = \mat{x_1 & \cdots & x_n} \in \b{R}^{n \times n}$ and $Y = \mat{y_1 & \cdots & y_n} = \tilde{s}_{\sigma}(X)$, where
\begin{align*}
	y_i = \operatorname{MLP}_{l,w}([r; x_i; \sigma]) \text{\ for\ } i=1,\ldots,n\,,
\end{align*}
and $\operatorname{MLP}$ a fully connected multi-layer perceptron with ReLU activation function, $l$ layers of width $w$.
The features $r = r(X) \in \b{R}^m$ are
\begin{align*}
	r_j(X) = \tr(Q_j X K_j X^\top)\,, \quad j=1,\ldots,m
\end{align*}
where $Q_j, K_j \in \b{R}^{n \times n}$ are learnable weight matrices (shared for all $i=1,\ldots,n$).
For $n = 10$ we take $l = 4$, $w = 512$, $m = 128$ and for $n = 20$ we take $l = 4$, $w = 2048$, $m = 512$. \\
\\
\textbf{Diffusion and training parameters:} We train minimizing \eqref{eq:conditionalScoreMatchingLoss} with the Adam optimizer, early stopping (i.e. $t \sim \operatorname{Unif}[\epsilon,T]$) with $T = 3$ and $\epsilon = 10^{-4}$ and a cosine learning rate scheduling from $\texttt{lr}=10^{-3}$ to $\texttt{lr}=5\cdot 10^{-5}$.
We use $N_{\operatorname{epochs}} = 10000$ and $N_{\operatorname{epochs}} = 50000$ epochs for $n=10$ and $n=20$, respectively.

\subsection{Data-driven reference tracking}

\subsubsection{Benchmark systems and tracking goals}
\label{sec:benchmarkSystems}

\textbf{Benchmark systems:} We consider two classical benchmark systems:
The double pendulum and the unicycle car model \cite{lavalle2006planning}.
For the double pendulum the state is $x = (\theta_1,\omega_1,\theta_2,\omega_2)$ with \(\omega_i=\dot\theta_i\), gravity \(g\), masses \(m_1,m_2\), lengths \(l_1,l_2\), dampings \(d_1,d_2\), control torque \(u\) applied at joint 1 (first pendulum), and \(\Delta \theta \coloneqq \theta_2-\theta_1\).
\begin{align*}
	\dot{\theta}_1 &= \omega_1, &
	\dot{\theta}_2 &= \omega_2,
	\\[4pt]
	\mathbf{M}(\theta)\,\ddot{\boldsymbol\theta}
	+\mathbf{C}(\theta,\dot{\theta})
	+\mathbf{G}(\theta)
	+\mathbf{D}\,\dot{\boldsymbol\theta}
	&= \boldsymbol\tau,
	\quad
	\boldsymbol\theta=\mat{\theta_1\\ \theta_2},
	\quad
	\boldsymbol\tau=\mat{u\\ 0},
\end{align*}
with
\begin{align*}
	\mathbf{M}(\theta) &=
	\mat{
		(m_1+m_2)l_1^2 & m_2 l_1 l_2 \cos\Delta\\
		m_2 l_1 l_2 \cos\Delta \theta & m_2 l_2^2
	},
	&
	\mathbf{D} &=
	\mat{
		d_1 & 0\\
		0 & d_2
	},
	\\[6pt]
	\mathbf{C}(\theta,\dot{\theta}) &=
	\mat{
		-\,m_2 l_1 l_2 \sin\Delta \theta\ \omega_2^{\,2}\\
		\ \ m_2 l_1 l_2 \sin\Delta \theta\ \omega_1^{\,2}
	},
	&
	\mathbf{G}(\theta) &=
	\mat{
		(m_1+m_2) g l_1 \sin\theta_1\\
		m_2 g l_2 \sin\theta_2
	}.
\end{align*}
We pick the output $y = (\theta_1,\theta_2)$ and set $m_1 = l_1 = g = 1$, $m_2, l_2 = 0.5$ and $d_1, d_2 = 0.1$.
For the unicycle car model the dynamics is given by $x = (\text{x},\text{y},\theta)$ with
\begin{align*}
	\dot{\text{x}}
	= v\cos(\theta)\,, \quad
	\dot{\text{y}}
	= v\sin(\theta)\,, \quad
	\dot{\theta}
	= \omega
\end{align*}
and the input $u = (v, \omega)$ and output $y = x = (\text{x},\text{y},\theta)$.
Here $(\text{x},\text{y})$, $v$, $\theta$, $\omega$ is the car's position, velocity, angle and angular velocity, respectively. \\
\\
\textbf{Tracking goals:} For the double pendulum system the goal is to track a reference trajectory $r$ via the first joint angle $\theta_1$ and we pick the optimal control objective $f$ to be \ref{eq:trackingObjective} with
\begin{align*}
	Q = \mat{10 & 0\\0 & 10}\,, \quad
	R = 0.01\,,
\end{align*}
while for the unicycle car model the goal is to track a positional reference $r = (r_{\text{x}},r_{\text{y}})$, i.e. we pick
\begin{align*}
	Q = \mat{10 & 0& 0\\0 & 10 & 0\\ 0& 0& 0}\,, \quad
	R = \mat{0.01 & 0\\0 & 0.01}\,.
\end{align*}
Here $R$ is some small penalty on the input $\bs{u}$ to keep it bounded during the optimization.

\subsubsection{Implementation details}
\label{sec:implementationDetailsTracking}

\textbf{Discretization:} We discretize both continuous-time dynamics $\dot{x} = \bs{f}_{\operatorname{cont}}(x,u)$ via the RK4-method and discretization step $\Delta t$ to obtain the discrete-time dynamics (\ref{eq:systemDynamics}) as
\begin{align*}
	\bs{f}(x,u)
	= x + \frac{\Delta t}{6}(k_1 + 2 k_2 + 2 k_3 + k_4)
	\text{\ \ where\ \ }
	\begin{cases}
		k_1 &= \bs{f}_{\operatorname{cont}}(x, u) \\
		k_2 &= \bs{f}_{\operatorname{cont}}(x + \frac{1}{2}\Delta t k_1, u) \\
		k_3 &= \bs{f}_{\operatorname{cont}}(x + \frac{1}{2} \Delta t k_2, u) \\
		k_4 &=\bs{f}_{\operatorname{cont}}(x + \Delta t k_3, u) 
	\end{cases}
\end{align*}
We use $\Delta t = 0.1$ for the double pendulum and $\Delta t = 0.05$ for the unicycle model. \\
\\
\textbf{Data generation:} To generate trajectories, we use i.i.d. random inputs $u_k \sim \operatorname{Unif}[-5,5]$ for the double pendulum and $u_k = (v_k,\omega_k) \sim \operatorname{Unif}[0,1] \otimes \c{N}(0,25)$ and a horizon of $N_h = 100$ for both system.
We use $N_{\operatorname{data}} = 50000$ trajectories for the double pendulum and $N_{\operatorname{data}} = 20000$ trajectories for the unicycle model.\\
\\
\textbf{Score architecture:} The score architecture is a $1$-dimensional version of the standard UNet architecture \cite{ronneberger2015u} with a sin-cos time-embedding \cite{song2020score} and residual connections, where the different input-, state- and output dimensions are concatenated and treated as additional channels.
The down- and upsampling convolutions are done w.r.t. the temporal dimension and channels.

\textbf{Diffusion and training parameters:} Same as in Appendix \ref{sec:implementationDetailsOrthogonal} with this time $N_{\operatorname{epochs}} = 50000$ training epochs.
For the DRGD step-size we pick a fixed step-size of $\gamma = 0.001$.
 
\subsubsection{Experiments}
\label{sec:experimentsTrajectory}

In Figure \ref{fig:objectiveValueDynamics} we present the objective value evolution for the experiment from Section \ref{sec:dataDriven}.

\begin{figure}[!htb]
	\centering
	\includegraphics[width=0.48\linewidth]{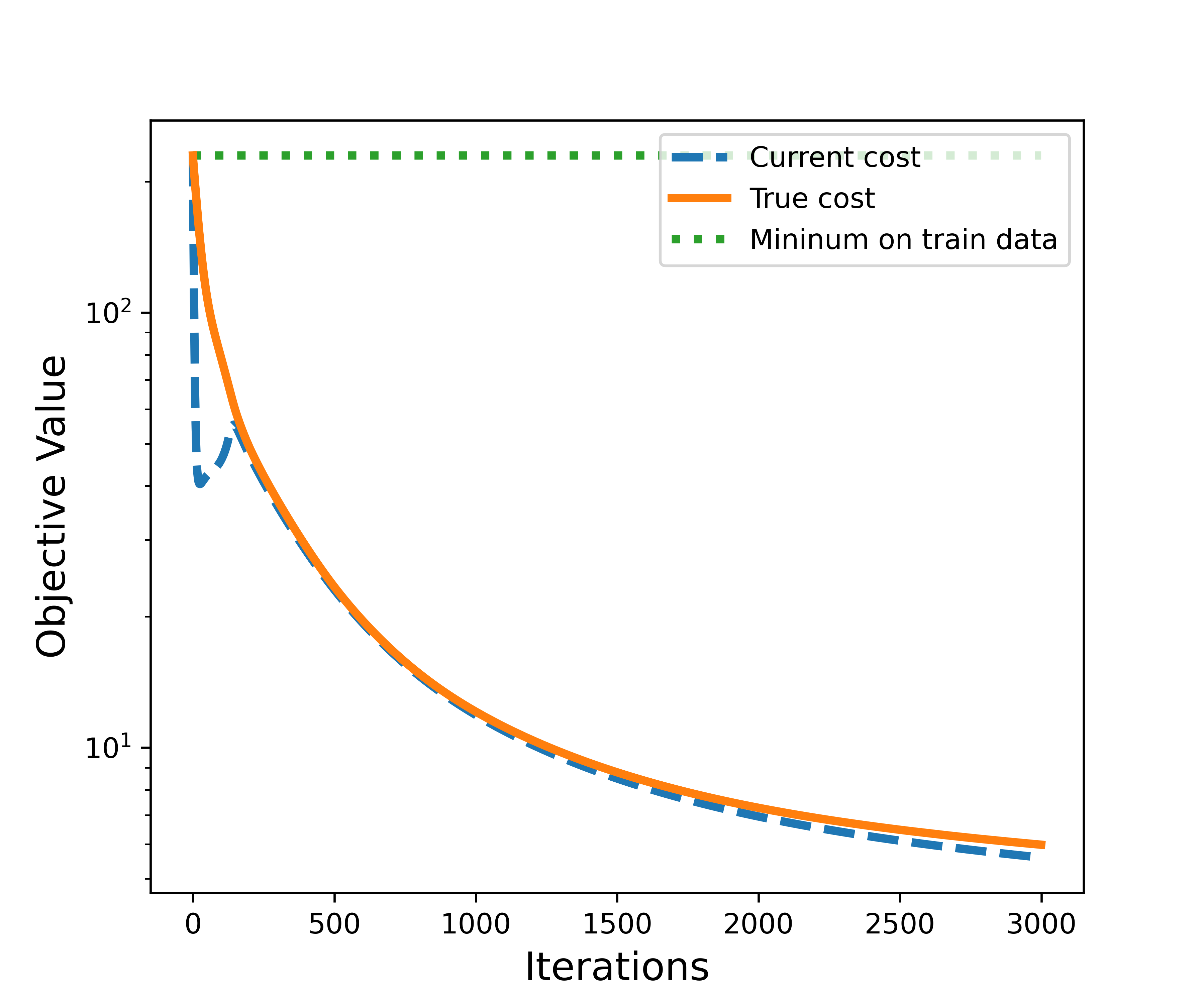}
	\includegraphics[width=0.48\linewidth]{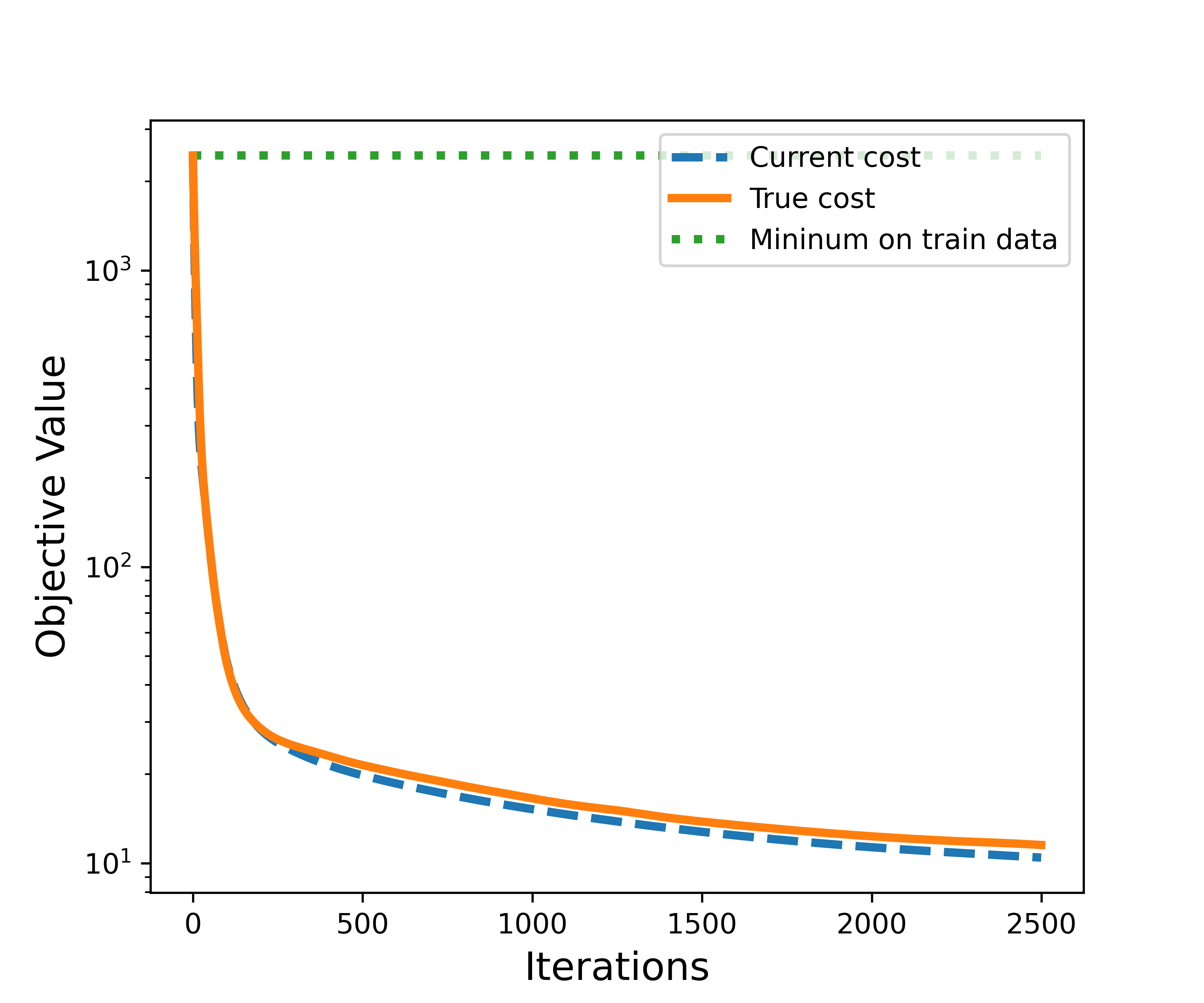}
	\caption{Denoising Riemannian gradient descent: Objective value $f$ vs the iteration count $j$ for double pendulum (left) and unicycle car model (right).
	Note the logarithmic scale on the y-axis. The current cost (blue, dashed) is the objective value $f(\bs{u}_j,\bs{y}_j)$ at the current (in general infeasible $(\bs{u}_j,\bs{y}_j) \notin \c{M}_{\operatorname{IO}}$) iterate, while the true cost (orange) is the value $f(\bs{u}_j,\bs{y}_j^{\operatorname{true}})$, with $\bs{y}_j^{\operatorname{true}}$ obtained by simulating (\ref{eq:systemDynamics}) with input $\bs{u}_j$.}
	\label{fig:objectiveValueDynamics}
\end{figure}

In Figure \ref{fig:trackingTrajectoriesMore} we show optimized trajectories for two other reference trajectories.
Note that we have set our iteration budget at $N = 4000$, while the objective is still decreasing. 
How to accelerate the denoising Riemannian gradient descent without losing feasibility is a core question for future work.

\begin{figure}[!htb]
	\centering
	\includegraphics[width=0.48\linewidth]{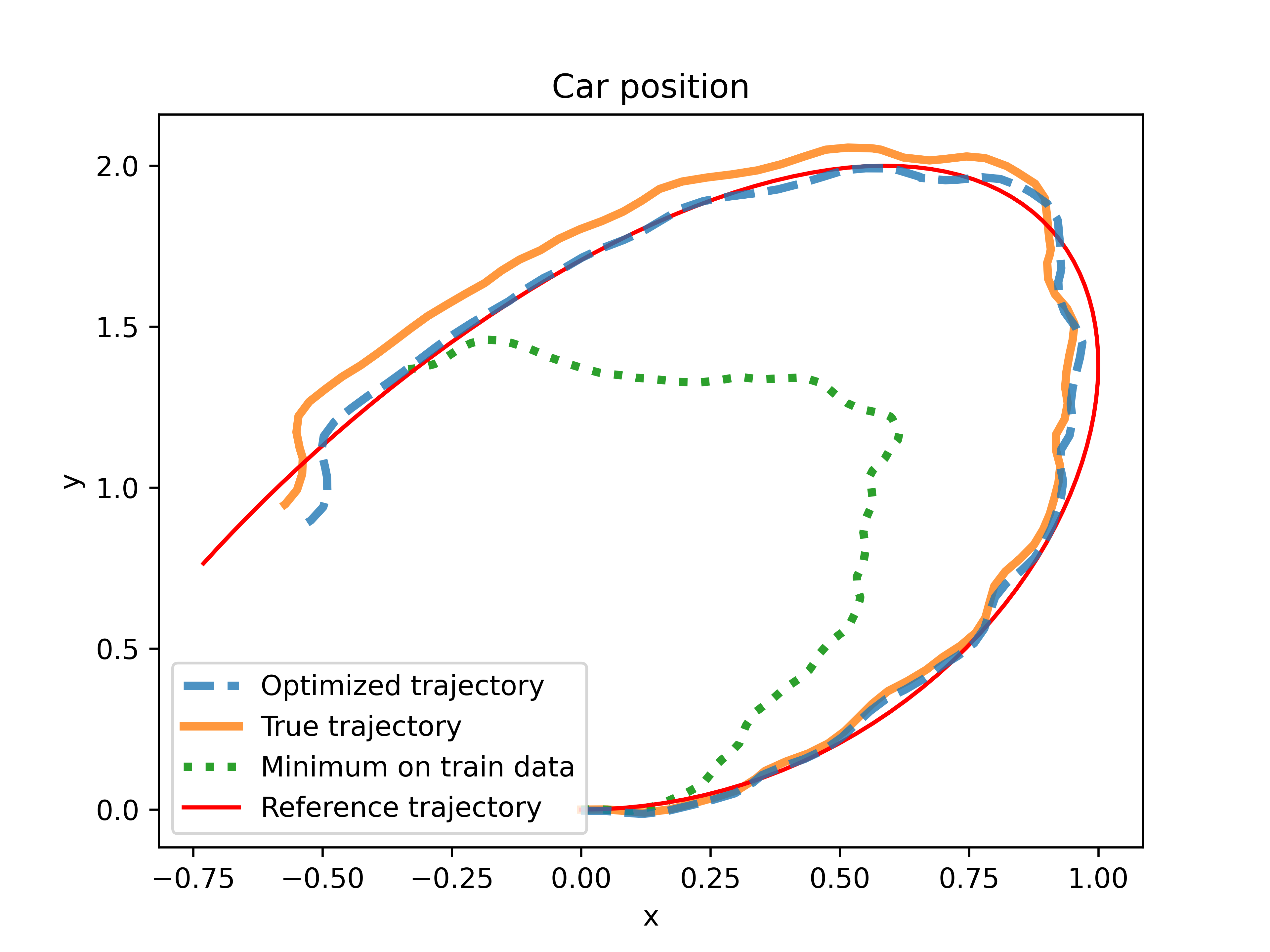}
	\includegraphics[width=0.48\linewidth]{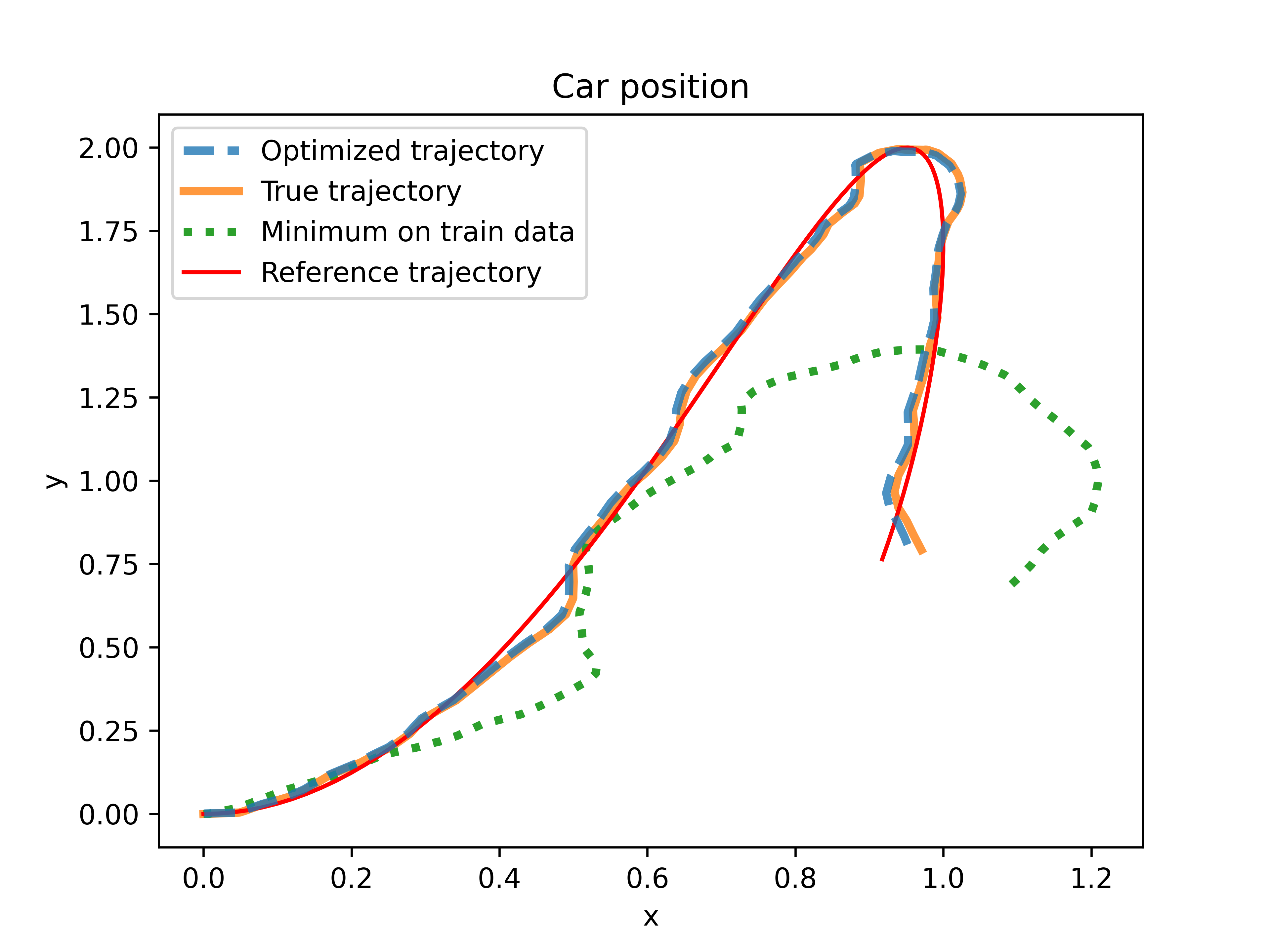}
	\caption{Denoising Riemannian gradient descent: Unicycle car position (right) with the optimized output trajectory $\bs{y}^*$ (blue, dashed), the true system trajectory $\bs{y}^{\operatorname{true}}$ (orange), the initial trajectory $\bs{y}_0$ (green, dotted) and the reference trajectory $\bs{r}$ (red)}
	\label{fig:trackingTrajectoriesMore}
\end{figure}

\end{document}